\input{glyphtounicode}

\pdfglyphtounicode{angbracketleftBig}{27E8 FE02}
\pdfglyphtounicode{angbracketleftBigg}{27E8 FE04}
\pdfglyphtounicode{angbracketleftbig}{27E8 FE01}
\pdfglyphtounicode{angbracketleftbigg}{27E8 FE03}
\pdfglyphtounicode{angbracketrightBig}{27E9 FE02}
\pdfglyphtounicode{angbracketrightBigg}{27E9 FE04}
\pdfglyphtounicode{angbracketrightbig}{27E9 FE01}
\pdfglyphtounicode{angbracketrightbigg}{27E9 FE03}
\pdfglyphtounicode{arrowbt}{2193}
\pdfglyphtounicode{arrowdblbt}{21D3}
\pdfglyphtounicode{arrowdbltp}{21D1}
\pdfglyphtounicode{arrowhookleft}{21AA}
\pdfglyphtounicode{arrowhookright}{21A9}
\pdfglyphtounicode{arrowtp}{2191}
\pdfglyphtounicode{arrowvertex}{23D0}
\pdfglyphtounicode{arrowvertexdbl}{20E6}
\pdfglyphtounicode{backslashBig}{005C FE02}
\pdfglyphtounicode{backslashBigg}{005C FE04}
\pdfglyphtounicode{backslashbig}{005C FE01}
\pdfglyphtounicode{backslashbigg}{005C FE03}
\pdfglyphtounicode{braceex}{23AA}
\pdfglyphtounicode{bracehtipdownleft}{23DF}
\pdfglyphtounicode{bracehtipdownright}{23DF}
\pdfglyphtounicode{bracehtipupleft}{23DE}
\pdfglyphtounicode{bracehtipupright}{23DE}
\pdfglyphtounicode{braceleftBig}{007B FE02}
\pdfglyphtounicode{braceleftBigg}{007B FE04}
\pdfglyphtounicode{braceleftbig}{007B FE01}
\pdfglyphtounicode{braceleftbigg}{007B FE03}
\pdfglyphtounicode{braceleftbt}{23A9}
\pdfglyphtounicode{braceleftmid}{23A8}
\pdfglyphtounicode{bracelefttp}{23A7}
\pdfglyphtounicode{bracerightBig}{007D FE02}
\pdfglyphtounicode{bracerightBigg}{007D FE04}
\pdfglyphtounicode{bracerightbig}{007D FE01}
\pdfglyphtounicode{bracerightbigg}{007D FE03}
\pdfglyphtounicode{bracerightbt}{23AD}
\pdfglyphtounicode{bracerightmid}{23AC}
\pdfglyphtounicode{bracerighttp}{23AB}
\pdfglyphtounicode{bracketleftBig}{005B FE02}
\pdfglyphtounicode{bracketleftBigg}{005B FE04}
\pdfglyphtounicode{bracketleftbig}{005B FE01}
\pdfglyphtounicode{bracketleftbigg}{005B FE03}
\pdfglyphtounicode{bracketleftbt}{23A3}
\pdfglyphtounicode{bracketleftex}{23A2}
\pdfglyphtounicode{bracketlefttp}{23A1}
\pdfglyphtounicode{bracketrightBig}{005D FE02}
\pdfglyphtounicode{bracketrightBigg}{005D FE04}
\pdfglyphtounicode{bracketrightbig}{005D FE01}
\pdfglyphtounicode{bracketrightbigg}{005D FE03}
\pdfglyphtounicode{bracketrightbt}{23A6}
\pdfglyphtounicode{bracketrightex}{23A5}
\pdfglyphtounicode{bracketrighttp}{23A4}
\pdfglyphtounicode{ceilingleftBig}{2308 FE02}
\pdfglyphtounicode{ceilingleftBigg}{2308 FE04}
\pdfglyphtounicode{ceilingleftbig}{2308 FE01}
\pdfglyphtounicode{ceilingleftbigg}{2308 FE03}
\pdfglyphtounicode{ceilingrightBig}{2309 FE02}
\pdfglyphtounicode{ceilingrightBigg}{2309 FE04}
\pdfglyphtounicode{ceilingrightbig}{2309 FE01}
\pdfglyphtounicode{ceilingrightbigg}{2309 FE03}
\pdfglyphtounicode{circledotdisplay}{2A00 FE02}
\pdfglyphtounicode{circledottext}{2A00 FE01}
\pdfglyphtounicode{circlemultiplydisplay}{2A02 FE02}
\pdfglyphtounicode{circlemultiplytext}{2A02 FE01}
\pdfglyphtounicode{circleplusdisplay}{2A01 FE02}
\pdfglyphtounicode{circleplustext}{2A01 FE01}
\pdfglyphtounicode{contintegraldisplay}{222E FE02}
\pdfglyphtounicode{contintegraltext}{222E FE01}
\pdfglyphtounicode{coproductdisplay}{2210 FE02}
\pdfglyphtounicode{coproducttext}{2210 FE01}
\pdfglyphtounicode{floorleftBig}{230A FE02}
\pdfglyphtounicode{floorleftBigg}{230A FE04}
\pdfglyphtounicode{floorleftbig}{230A FE01}
\pdfglyphtounicode{floorleftbigg}{230A FE03}
\pdfglyphtounicode{floorrightBig}{230B FE02}
\pdfglyphtounicode{floorrightBigg}{230B FE04}
\pdfglyphtounicode{floorrightbig}{230B FE01}
\pdfglyphtounicode{floorrightbigg}{230B FE03}
\pdfglyphtounicode{hatwide}{02C6 FE01}
\pdfglyphtounicode{hatwider}{02C6 FE02}
\pdfglyphtounicode{hatwidest}{02C6 FE03}
\pdfglyphtounicode{integraldisplay}{222B FE02}
\pdfglyphtounicode{integraltext}{222B FE01}
\pdfglyphtounicode{intersectiondisplay}{22C2 FE02}
\pdfglyphtounicode{intersectiontext}{22C2 FE01}
\pdfglyphtounicode{logicalanddisplay}{22C0 FE02}
\pdfglyphtounicode{logicalandtext}{22C0 FE01}
\pdfglyphtounicode{logicalordisplay}{22C1 FE02}
\pdfglyphtounicode{logicalortext}{22C1 FE01}
\pdfglyphtounicode{mapsto}{21A6}
\pdfglyphtounicode{parenleftBig}{0028 FE02}
\pdfglyphtounicode{parenleftBigg}{0028 FE04}
\pdfglyphtounicode{parenleftbig}{0028 FE01}
\pdfglyphtounicode{parenleftbigg}{0028 FE03}
\pdfglyphtounicode{parenleftbt}{239D}
\pdfglyphtounicode{parenleftex}{239C}
\pdfglyphtounicode{parenlefttp}{239B}
\pdfglyphtounicode{parenrightBig}{0029 FE02}
\pdfglyphtounicode{parenrightBigg}{0029 FE04}
\pdfglyphtounicode{parenrightbig}{0029 FE01}
\pdfglyphtounicode{parenrightbigg}{0029 FE03}
\pdfglyphtounicode{parenrightbt}{23A0}
\pdfglyphtounicode{parenrightex}{239F}
\pdfglyphtounicode{parenrighttp}{239E}
\pdfglyphtounicode{productdisplay}{220F FE02}
\pdfglyphtounicode{producttext}{220F FE01}
\pdfglyphtounicode{radicalBig}{221A FE02}
\pdfglyphtounicode{radicalBigg}{221A FE04}
\pdfglyphtounicode{radicalbig}{221A FE01}
\pdfglyphtounicode{radicalbigg}{221A FE03}
\pdfglyphtounicode{radicalbt}{23B7}
\pdfglyphtounicode{radicaltp}{231C}
\pdfglyphtounicode{radicalvertex}{20D3}
\pdfglyphtounicode{slashBig}{002F FE02}
\pdfglyphtounicode{slashBigg}{002F FE04}
\pdfglyphtounicode{slashbig}{002F FE01}
\pdfglyphtounicode{slashbigg}{002F FE03}
\pdfglyphtounicode{summationdisplay}{2211 FE02}
\pdfglyphtounicode{summationtext}{2211 FE01}
\pdfglyphtounicode{tie}{2040}
\pdfglyphtounicode{tildewide}{02DC FE01}
\pdfglyphtounicode{tildewider}{02DC FE02}
\pdfglyphtounicode{tildewidest}{02DC FE03}
\pdfglyphtounicode{uniondisplay}{22C3 FE02}
\pdfglyphtounicode{unionmultidisplay}{2A04 FE02}
\pdfglyphtounicode{unionmultitext}{2A04 FE01}
\pdfglyphtounicode{unionsqdisplay}{2A06 FE02}
\pdfglyphtounicode{unionsqtext}{2A06 FE01}
\pdfglyphtounicode{uniontext}{22C3 FE01}
\pdfglyphtounicode{vextenddouble}{20E6}
\pdfglyphtounicode{vextendsingle}{20D3}

\newif\ifpubliccopy
\publiccopytrue
\documentclass{article}
\usepackage{iclr2027_conference,times}
\usepackage[T1]{fontenc}

\usepackage{amsmath,amssymb,amsthm,mathtools}
\usepackage{booktabs}
\usepackage{array}
\usepackage{multirow}
\usepackage{microtype}
\usepackage{graphicx}
\usepackage{xcolor}
\usepackage{float}
\usepackage{tikz}
\usetikzlibrary{arrows.meta,positioning,fit}
\usepackage[section]{placeins}
\usepackage{url}
\usepackage[hidelinks]{hyperref}

\newtheorem{theorem}{Theorem}
\newtheorem{proposition}[theorem]{Proposition}

\theoremstyle{definition}

\theoremstyle{remark}

\DeclareMathOperator{\supp}{supp}
\DeclareMathOperator{\TV}{TV}
\DeclareMathOperator{\ESS}{ESS}
\newcommand{\R}{\mathbb{R}}
\newcommand{\E}{\mathbb{E}}
\newcommand{\Pp}{\mathbb{P}}
\newcommand{\Qp}{\mathbb{Q}}

\newcommand{\direct}{Spectra-Direct}
\newcommand{\pathspace}{Spectra-PS}
\newcommand{\pgfullcov}{PG-FullCov}
\newcommand{\benchval}[2]{#1{\color[gray]{0.48}(#2)}}
\DeclareMathSizes{7.5}{8}{6}{5}
\newcounter{algorithm}
\newenvironment{paperalgorithm}[1][]{%
  \refstepcounter{algorithm}\par\medskip\noindent
  \begin{minipage}{\linewidth}\hrule\vspace{0.5ex}
  \textbf{Algorithm \thealgorithm\ #1}\par\small\vspace{0.5ex}%
}{%
  \vspace{0.5ex}\hrule\end{minipage}\par\medskip
}

\title{Spectra: Exact Component Transport for\\
Test-Time Prior Adaptation in\\
Simulation-Based Inference}

\ifpubliccopy
  \iclrfinalcopy
  \newcommand{\codeurl}{https://github.com/TtoastXin/Spectra}
  \newcommand{\aff}[2]{\par\hangindent=0.6em\hangafter=1\noindent$^{#1}$#2}
  \author{Xin Zhao$^{1,2}$\hspace{0.8em}Nico Scherf$^{3,4}$\hspace{0.8em}Robert Trampel$^{1}$\hspace{0.8em}
  Kerrin J. Pine$^{1}$\hspace{0.8em}Nikolaus Weiskopf$^{1,5,6}$\\[4pt]
  \parbox[t]{\dimexpr\textwidth-2\tabcolsep\relax}{\normalfont\fontsize{8}{9.6}\selectfont\raggedright
  \aff{1}{Department of Neurophysics, Max Planck Institute for Human Cognitive and Brain Sciences, Leipzig, Germany}
  \aff{2}{International Max Planck Research School on Cognitive NeuroImaging, Leipzig, Germany}
  \aff{3}{Methods and Development Group Neural Data Science and Statistical Computing, Max Planck Institute for Human Cognitive and Brain Sciences, Leipzig, Germany}
  \aff{4}{Center for Scalable Data Analytics and Artificial Intelligence (ScaDS.AI), Dresden/Leipzig, Germany}
  \aff{5}{Felix Bloch Institute for Solid State Physics, Faculty of Physics and Earth System Sciences, Leipzig University, Leipzig, Germany}
  \aff{6}{Functional Imaging Laboratory, Department of Imaging Neuroscience, UCL Queen Square Institute of Neurology, University College London, London, UK}
  \endgraf\vspace{4pt}\noindent\texttt{zhaox@cbs.mpg.de}}}
\else
  \author{Anonymous authors\\Paper under double-blind review}
\fi

\begin{document}
\maketitle
\ifpubliccopy\lhead{Preprint. Under review.}\fi

\begin{abstract}
Simulation-based inference (SBI) has become a powerful approach to Bayesian
inference in complex scientific models whose likelihoods are difficult or
impossible to evaluate. Amortized SBI learns reusable inference models from
simulated data, enabling rapid posterior inference for new observations, and
modern generative models have made these models increasingly expressive.
However, this reuse is limited to the prior distribution chosen during training, whereas scientific analyses often need
revised priors as knowledge accumulates or alternative assumptions are tested.
We introduce Spectra, a test-time adaptation method for diffusion-based SBI.
Spectra uses an exact score-transport identity to obtain the adapted
score from a frozen diffusion model in closed form for structured prior
changes, without additional simulation or training.
Across six SBI benchmarks, Spectra achieves accurate adaptation under strong
prior shifts at low online sampling cost. This enables pretrained SBI models to
incorporate updated prior information at test time.
\end{abstract}

\section{Introduction}
\label{sec:intro}

Bayesian inference estimates unknown model parameters $\theta$ from observed data $x$ 
and quantifies their uncertainty through the posterior $p(\theta\mid x)$. For complex scientific models, however,
the likelihood $p(x\mid\theta)$ is intractable or too costly to evaluate,
even though synthetic data can be generated readily from a simulator
\citep{cranmer2020frontier}.
This makes standard likelihood-based Markov chain
Monte Carlo impractical or unavailable \citep{robert2004montecarlo}. 
Simulation-based inference (SBI) instead learns from simulated
parameter--observation pairs \citep{cranmer2020frontier,lueckmann2021benchmark}. 
Amortized SBI makes this posterior inference reusable across observations by training an 
inference model once and applying it to new observations without 
repeating simulation and training \citep{greenberg2019apt,gloeckler2024simformer}. 
This reuse has enabled applications ranging from real-time gravitational-wave inference
\citep{dax2021gravitational,dax2025realtime} to cosmological constraints from
galaxy clustering \citep{hahn2024simbig} and identification of neural circuit
configurations from electrophysiological recordings
\citep{goncalves2020mechanistic}.

\looseness=-1
Posterior distributions in scientific applications are often multimodal or
strongly correlated, motivating flexible generative posterior models.
Diffusion and score-based models generate samples by learning to reverse a
gradual noising process
\citep{sohldickstein2015deep,ho2020ddpm,song2021sde},
and have been adapted to posterior sampling in SBI
\citep{geffner2023compositional,sharrock2024snse,gloeckler2024simformer}.
Flow matching and consistency models have also been developed for SBI: 
flow matching learns time-dependent velocity fields for posterior sampling, 
while consistency models reduce generation to only a few inference steps
\citep{wildberger2023fmpe,schmitt2024consistency}. 
Transformer-based approaches can amortize across datasets by conditioning on
a dataset provided as context at inference time, allowing a single model to be
reused across datasets and tasks
\citep{muller2022bayesian,chang2025ace,mittal2025incontext,whittle2025distribution}.
In the standard fixed-prior setting considered here, training pairs are generated by drawing $\theta$ from a chosen prior $\pi_{\rm tr}$ 
and then simulating $x\sim p(x\mid\theta)$. This amortization therefore applies across observations, but only under the training prior $\pi_{\rm tr}$.

However, the prior fixed at training time is rarely the only one needed in a
scientific analysis. In gravitational-wave astronomy, population-level
observations can inform priors for individual events
\citep{moore2021population}, while assumptions about black-hole masses and
spins can change the inferred properties of a source
\citep{vitale2017priors}. The same issue appears in other fields:
fossil-calibration priors affect estimates of species divergence times in
evolutionary biology \citep{warnock2015calibration}, and prior-sensitivity
analyses in cognitive science examine how evidence for competing models varies
with the prior \citep{elsemuller2023sensitivity}. Priors may therefore need to
change both as new information becomes available and as researchers test the
robustness of their conclusions to alternative assumptions
\citep{gelman2020workflow}. A single study may consequently require posterior
inference under several plausible priors.

Each of these posteriors could be computed separately using a non-amortized
inference procedure or by retraining the amortized model, but both options
forgo much of the benefit of amortization. Retraining, in particular, repeats
costly simulation and optimization for every new prior. Importance sampling
can instead reweight samples from the original posterior, but becomes
inefficient when the new posterior concentrates in regions poorly covered by
the original samples \citep{agapiou2017importance}. 
Another way to avoid per-prior retraining is to train across a family of priors, 
but test-time flexibility is constrained by the prior representation 
built into the model during training \citep{elsemuller2023sensitivity,chang2025ace,whittle2025distribution}.
In-context SBI with tabular foundation models avoids task-specific retraining,
but still requires suitable simulated examples as context at inference time
\citep{hollmann2025tabpfn,vetter2025effortless}. For an already trained
diffusion-SBI model, recent methods adapt the prior at test time, either through
an approximate sampling correction \citep{yang2026priorguide} or by training
an auxiliary correction model for each target prior \citep{zang2026dre}.
For this setting, an effective adapter must remain accurate under substantial
prior shifts, add little inference-time cost, 
and require neither additional simulation nor retraining.

\begin{figure}[t]
\centering
\includegraphics{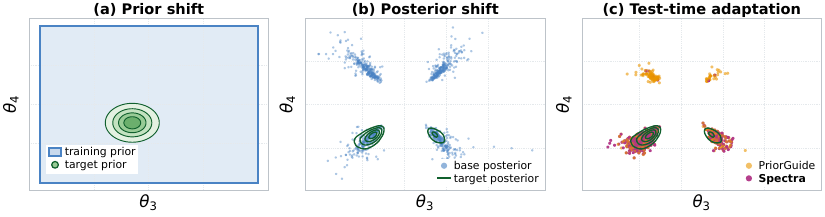}
\caption{\textbf{Prior shift and test-time adaptation on the simple likelihood complex posterior (SLCP) task.}
\textbf{(a)} A concentrated target prior replaces the broad training prior.
\textbf{(b)} The frozen model represents the base posterior under the training prior,
while the target prior concentrates posterior mass on two of its four modes.
\textbf{(c)} Both PriorGuide and Spectra recover the target modes.
However, PriorGuide retains visible mass outside the target contours,
whereas Spectra follows the target geometry more closely.
The posterior panels show the $(\theta_3,\theta_4)$ projection. Full results
across tasks and observations are reported in Table~\ref{tab:k1-results} and
Figure~\ref{fig:posterior-geometry-practical}.}
\label{fig:prior-adaptation}
\end{figure}

We introduce Spectra, a test-time method for adapting pretrained diffusion-SBI
models while leaving both the simulator and the trained model unchanged.
Because diffusion sampling repeatedly queries the score network across noise
levels, prior adaptation requires the corresponding score under the revised
prior. Spectra obtains this score by exploiting analytic structure in the known
prior change, using a single transformed query to the frozen model. For
isotropic exponential--quadratic prior-ratio factors, this transformation is
exact. Positive mixtures of such factors extend the construction to richer
prior changes. Figure~\ref{fig:prior-adaptation} illustrates the adaptation
problem and a representative result on one benchmark task. Our main contributions are:

\begin{itemize}

\item \textbf{Exact score transport and characterization.}
For prior changes that reweight the training prior by an 
isotropic exponential--quadratic factor, 
the adapted diffusion score follows exactly from one 
frozen-model query at a transformed state and noise level, 
without a score Jacobian or any Gaussian assumption on the 
base posterior. We further prove that, within this 
affine Gaussian transport construction, 
isotropic exponential--quadratic factors are 
the only positive reweightings that admit such an exact mapping.

\item \textbf{Composition of richer prior changes.}
Richer prior changes can be represented as positive mixtures of transportable
factors, yielding mixtures of exactly transported posterior components. For a
fixed observation, Spectra estimates the component weights once and samples one
component per reverse trajectory. Online sampling therefore still requires only
one frozen-score evaluation per reverse step, independent of the number of
components.

\item \textbf{Accuracy and efficiency.} Across six SBI benchmarks, Spectra
remains accurate under strong prior shifts and for mixture priors while keeping
one frozen-score query per reverse step. Compared with a strengthened version of
PriorGuide, the closest test-time method, its accuracy gains are largest on
nonlinear problems, and its online sampling is 2.4--3.2$\times$ faster.

\end{itemize}

\section{Background}
\label{sec:background}

\subsection{Simulation-based inference}
\label{sec:background-sbi}

Let $\theta\in\R^d$ denote the unknown parameters of a scientific model and
$x$ the observed data. A prior describes plausible parameter values before
observing $x$, while the likelihood $p(x\mid\theta)$ describes how observations
are generated from a given $\theta$. We write $\pi_{\rm tr}$ for the prior used
to train the inference model. Bayes' rule gives
\begin{equation}
p(\theta\mid x)
=
\frac{p(x\mid\theta)\pi_{\rm tr}(\theta)}
{\int p(x\mid\vartheta)\pi_{\rm tr}(\vartheta)\,d\vartheta}.
\label{eq:background-bayes}
\end{equation}

The posterior $p(\theta\mid x)$ describes which parameter values remain
plausible after observing $x$. In SBI, this posterior is learned from simulated
parameter--observation pairs: training pairs are generated by drawing
$\theta\sim\pi_{\rm tr}$ and then $x\sim p(x\mid\theta)$
\citep{cranmer2020frontier}. In amortized SBI, these pairs train a reusable
model of $p(\theta\mid x)$, which can then be applied to new observations
without further simulation or retraining
\citep{greenberg2019apt,gloeckler2024simformer}.

\subsection{Diffusion models}
\label{sec:background-diffusion}

Diffusion models provide a flexible way to represent such posteriors by
gradually corrupting samples with Gaussian noise and learning how to reverse
this corruption \citep{sohldickstein2015deep,ho2020ddpm}. Let $z_0\sim p_0$
denote a clean sample and $z_t$ its noisy counterpart at diffusion time
$t\in[0,1]$. We use the variance-exploding (VE) formulation, with increasing
noise standard deviation $\sigma(t)$ and $\sigma(0)=0$. The forward transition
and resulting marginal density are
\begin{equation}
p_{t|0}(z_t\mid z_0)
=\mathcal N\!\left(z_t;z_0,\sigma(t)^2I\right),
\qquad
p_t(z_t)=\int p_{t|0}(z_t\mid z_0)p_0(z_0)\,dz_0.
\label{eq:background-forward}
\end{equation}

The reverse process depends on the score
$s(z_t,t)=\nabla_{z_t}\log p_t(z_t)$, the gradient of the log density at the
current noisy state. Intuitively, the score points toward regions of higher
probability and provides the direction used for denoising. For the VE process,
the score determines the reverse-time stochastic differential equation
\begin{equation}
dz_t=-2\sigma(t)\dot\sigma(t)\nabla_{z_t}\log p_t(z_t)\,dt
+\sqrt{2\sigma(t)\dot\sigma(t)}\,dW_t,
\label{eq:background-reverse-sde}
\end{equation}
integrated from $t=1$ toward $t=0$ \citep{song2021sde}. Here
$\dot\sigma(t)=d\sigma(t)/dt$, and $W_t$ denotes a Wiener process in reverse
time. Because the marginal score is unknown, a neural network $s_\psi(z_t,t)$ is
trained using denoising score matching (DSM)
\citep{hyvarinen2005score,vincent2011connection}:
\begin{equation}
\mathcal L_{\rm DSM}(\psi)
=\E_{t,z_0,z_t}\!\left[
\lambda(t)\left\|
s_\psi(z_t,t)-\nabla_{z_t}\log p_{t|0}(z_t\mid z_0)
\right\|_2^2\right],
\label{eq:background-dsm}
\end{equation}
where $z_0\sim p_0$, $z_t\sim p_{t|0}(\cdot\mid z_0)$, $t$ is sampled across
noise levels, and $\lambda(t)>0$ weights those levels. For Gaussian corruption,
the conditional score is available analytically as
$\nabla_{z_t}\log p_{t|0}(z_t\mid z_0)=-(z_t-z_0)/\sigma(t)^2$.
Minimizing this objective learns the marginal score \(s(z_t,t)\) in expectation \citep{vincent2011connection}, providing the score needed by the reverse process. 
Sampling starts from a Gaussian approximation at the highest noise level and 
integrates the reverse dynamics using the learned score \citep{song2021sde}.

\subsection{Diffusion-based amortized SBI}
\label{sec:background-diffusion-sbi}

In diffusion-based SBI, the clean variable is the parameter vector,
$z_0=\theta$, and the observation $x$ conditions the denoising model. Applying
the forward process from Section~\ref{sec:background-diffusion} to the posterior
$p(\theta\mid x)$ gives the noisy posterior and its score
\begin{equation}
p_t(z\mid x)
=
\int p_{t|0}(z\mid\theta)p(\theta\mid x)\,d\theta,
\qquad
s_p(z,t,x)
=
\nabla_z\log p_t(z\mid x).
\label{eq:base-noisy-score}
\end{equation}
We refer to $p(\theta\mid x)$ as the base posterior and $s_p$ as its noisy
score. The subscript $p$ distinguishes these quantities from the adapted
posterior introduced below. A conditional network $s_\psi(z_t,t,x)$ is trained
with the same denoising score-matching objective as in
Section~\ref{sec:background-diffusion}, using simulated pairs
$\theta\sim\pi_{\rm tr}$ and $x\sim p(x\mid\theta)$, with $x$ supplied as an
additional condition
\citep{geffner2023compositional,gloeckler2024simformer}. After training, we freeze the network and denote its learned score estimate by
$\widehat s_p$. At inference time, the observation $x$ is fixed and
$\widehat s_p$ drives the reverse diffusion process, producing approximate
samples from the base posterior. The same model can be reused for new
observations, but its learned score still corresponds to the training prior
$\pi_{\rm tr}$.

\section{Spectra: Exact component transport}
\label{sec:method}

\subsection{Prior adaptation as score correction}
\label{sec:prior-adaptation}

Suppose the same observation $x$ is analyzed under a new prior
$\pi_{\rm new}$, while the simulator remains unchanged. We assume
$\pi_{\rm new}$ places no mass outside the support of $\pi_{\rm tr}$ and define
the prior ratio $r(\theta)=\pi_{\rm new}(\theta)/\pi_{\rm tr}(\theta)$. Since the likelihood is unchanged, Bayes' rule gives
\begin{equation}
q(\theta\mid x)
=
\frac{p(\theta\mid x)r(\theta)}{Z_r(x)},
\qquad
Z_r(x)=\E_{p(\theta\mid x)}[r(\theta)].
\label{eq:background-clean-reweight}
\end{equation}
In the diffusion formulation,
Eq.~\eqref{eq:background-clean-reweight} determines the target distribution at
the clean endpoint. However, reverse sampling requires the corresponding target
score at every noise level.
A Bayes-rule calculation (Appendix~\ref{app:noisy-reweight-derivation}) gives
\begin{equation}
s_q(z,t,x)
=
s_p(z,t,x)
+
\nabla_z\log
\E_{p(\theta\mid z,x)}[r(\theta)].
\label{eq:prior-score-correction}
\end{equation}
Here \(p(\theta\mid z,x)\) is the conditional distribution of the clean parameter given the noisy state \(z\) under the base posterior. 
The frozen model provides \(s_p\), but not \(p(\theta\mid z,x)\), so the correction in Eq.~\eqref{eq:prior-score-correction} 
is not directly available. PriorGuide makes this correction tractable by approximating \(p(\theta\mid z,x)\) 
as Gaussian \citep{yang2026priorguide}, while density-ratio adaptation learns an auxiliary correction for each target prior \citep{zang2026dre}. 
Spectra instead shows that, for structured prior ratios, 
the adapted score can be obtained exactly from a transformed query 
to the frozen base score.

\subsection{Exact single-factor transport}
\label{sec:single-factor}

Write $\tau=\sigma(t)^2$ for the noise variance, and index scores by $\tau$ in what follows. Consider the isotropic exponential--quadratic factor
\begin{equation}
\phi_{a,\kappa,c}(\theta)
=
\exp\!\left(
c+a^\top\theta-\frac{\kappa}{2}\lVert\theta\rVert^2
\right),
\label{eq:transport-factor}
\end{equation}
where $a\in\R^d$ and $\kappa,c\in\R$. If
$r(\theta)\propto\phi(\theta)$, then
$q^\phi(\theta\mid x)\propto p(\theta\mid x)\phi(\theta)$ is the target
posterior. The quadratic term is isotropic: a single scalar $\kappa$ controls its strength
in every direction, while $a$ controls an arbitrary linear tilt. For $\kappa>0$,
$\phi$ is proportional to an isotropic Gaussian with mean $a/\kappa$ and
covariance $\kappa^{-1}I$. For $\kappa=0$, it reduces to a linear exponential
tilt. The constant $c$ only rescales the factor and cancels after posterior
normalization. This restriction is on the prior-ratio factor while the base
posterior itself can be arbitrary. For Gaussian training and target priors, 
their means may differ arbitrarily, 
while their precision matrices may differ only by an isotropic term.

The key observation is that multiplying the Gaussian corruption kernel by
$\phi$ produces another Gaussian kernel at a transformed state and noise level:
\begin{equation}
\phi(\theta)\mathcal N(z;\theta,\tau I)
=
C_\tau(z)
\mathcal N\!\left(m_\tau(z);\theta,\rho_\tau I\right),
\label{eq:kernel-factor-main}
\end{equation}
where $C_\tau(z)>0$ is an analytic prefactor and
\begin{equation}
D_\tau=1+\kappa\tau,\qquad
\rho_\tau=\frac{\tau}{D_\tau},\qquad
m_\tau(z)=\frac{z+\tau a}{D_\tau}.
\label{eq:transform-pieces}
\end{equation}
We require $D_\tau>0$ at every noise level used by the sampler. This holds
automatically for $\kappa\geq0$; for $\kappa<0$, it restricts the schedule to
$\tau<-1/\kappa$. Substituting this identity into the integral defining the
noisy posterior and differentiating with respect to $z$ gives the target score
in closed form (Appendix~\ref{app:component-proof}):
\begin{equation}
\boxed{
s_{q^\phi}(z,\tau,x)
=
\frac{a-\kappa z}{D_\tau}
+
\frac{1}{D_\tau}
s_p\!\left(m_\tau(z),\rho_\tau,x\right).
}
\label{eq:exact-score}
\end{equation}
\begin{proposition}[Exact score transport]
\label{prop:transport}
Let $p(\theta\mid x)$ be any base posterior density with a differentiable
Gaussian smoothing, and let $q^\phi\propto p\phi$ be normalizable.
For $\tau>0$ with $D_\tau>0$,
Eq.~\eqref{eq:exact-score} gives the exact noisy score of $q^\phi$.
\end{proposition}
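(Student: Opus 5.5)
The plan is to verify the kernel identity Eq.~\eqref{eq:kernel-factor-main} by completing the square, substitute it into the definition of the noisy target posterior, and then differentiate in $z$.

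\textbf{Step 1 (kernel identity).} Expand the exponent of $\phi(\theta)\mathcal N(z;\theta,\tau I)$:
\[
c+a^\top\theta-\tfrac{\kappa}{2}\|\theta\|^2-\tfrac{1}{2\tau}\|z-\theta\|^2 .
\]
The coefficient of $-\tfrac12\|\theta\|^2$ is $\kappa+1/\tau=D_\tau/\tau=1/\rho_\tau$. The linear coefficient is $a+z/\tau$, so the mean is
\[
\rho_\tau\,(a+z/\tau)=\frac{z+\tau a}{D_\tau}=m_\tau(z).
\]
Completing the square gives
\[
\phi(\theta)\mathcal N(z;\theta,\tau I)=C_\tau(z)\,\mathcal N(m_\tau(z);\theta,\rho_\tau I),
\]
with
\[
\log C_\tau(z)=c-\frac{\|z\|^2}{2\tau}+\frac{\|m_\tau(z)\|^2}{2\rho_\tau}+\frac d2\log\frac{\rho_\tau}{\tau}.
\]
The hypothesis $D_\tau>0$ is used exactly here: it makes $\rho_\tau>0$, so the right-hand side is a genuine Gaussian density in $\theta$.

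\textbf{Step 2 (noisy target density).} Since $q^\phi=p\phi/Z$ with $Z=\int p\phi<\infty$ by normalizability,
\[
q^\phi_\tau(z\mid x)=\frac{1}{Z}\int \mathcal N(z;\theta,\tau I)\,\phi(\theta)\,p(\theta\mid x)\,d\theta=\frac{C_\tau(z)}{Z}\,p_{\rho_\tau}\!\left(m_\tau(z)\mid x\right).
\]
Here $p_{\rho}$ is the base posterior smoothed at variance $\rho$. Tonelli justifies pulling $C_\tau(z)$ out, since everything is nonnegative. Because $C_\tau>0$ and $p_{\rho_\tau}>0$ (a Gaussian convolution of a density is strictly positive), we get
\[
\log q^\phi_\tau(z\mid x)=\log C_\tau(z)+\log p_{\rho_\tau}(m_\tau(z)\mid x)-\log Z .
\]

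\textbf{Step 3 (differentiate).} The chain rule with $\nabla_z m_\tau=I/D_\tau$ gives $\tfrac{1}{D_\tau}s_p(m_\tau(z),\rho_\tau,x)$, which uses the assumed differentiability of the Gaussian smoothing. For the prefactor,
\[
\nabla_z\log C_\tau(z)=-\frac{z}{\tau}+\frac{m_\tau(z)}{\rho_\tau D_\tau}=-\frac{z}{\tau}+\frac{z+\tau a}{\tau D_\tau}.
\]
Combining over the common denominator $\tau D_\tau$, the numerator is $-zD_\tau+z+\tau a=\tau(a-\kappa z)$. Hence $\nabla_z\log C_\tau(z)=(a-\kappa z)/D_\tau$, which matches Eq.~\eqref{eq:exact-score}.

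\textbf{Main obstacle.} The algebra is routine. The delicate point is that the transported query uses the base smoothing at variance $\rho_\tau$, possibly at values of $(m_\tau(z),\rho_\tau)$ outside the sampler's schedule. So I must check that $\rho_\tau\in(0,\infty)$ and that $p_{\rho_\tau}$ is finite, positive and differentiable there. Finiteness holds because $\mathcal N(\cdot;\theta,\rho I)\le(2\pi\rho)^{-d/2}$. Differentiability is granted by hypothesis. Positivity follows since $p$ is a probability density.

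A further subtlety arises when $\kappa<0$: $\phi$ grows in $\theta$, and Step 2 then relies on normalizability of $q^\phi$ to make the $\theta$-integral finite. I would also note that $p_{\rho_\tau}(m\mid x)<\infty$ regardless, which confirms that Step 2 is well-posed.
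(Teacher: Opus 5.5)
Your proposal is correct and follows the paper's proof essentially step for step. You complete the square to get the kernel identity with $\rho_\tau=\tau/D_\tau$ and $m_\tau(z)=(z+\tau a)/D_\tau$, pull $C_\tau(z)$ out to obtain $q^\phi_\tau(z\mid x)=C_\tau(z)\,p_{\rho_\tau}(m_\tau(z)\mid x)/Z_\phi(x)$, and differentiate; your $\log C_\tau$ agrees with the paper's since $\lVert m_\tau(z)\rVert^2/(2\rho_\tau)=\lVert z+\tau a\rVert^2/(2\tau D_\tau)$ and $\tfrac d2\log(\rho_\tau/\tau)=-\tfrac d2\log D_\tau$. On your $\kappa<0$ remark, normalizability is needed only for $Z_\phi(x)<\infty$, because once $D_\tau>0$ the kernel identity and the bound on $p_{\rho_\tau}$ already make the smoothed integral finite.
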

Each target-score evaluation therefore requires one base-score query at
$(m_\tau(z),\rho_\tau)$ and a closed-form correction. The result applies to arbitrary base-posterior geometry under the stated
regularity conditions. Appendix~\ref{app:component-proof} gives the full derivation, learned-score
error identity, and special cases.

\subsection{Characterization of exact transport}

The transport above follows from a kernel identity that is independent of the
base posterior. This raises a natural question: which positive factors admit
the same form of exact transport?

At a fixed noise level $\tau>0$, consider
\begin{equation}
\phi(\theta)\mathcal N(z;\theta,\tau I)
=
C_\tau(z)
\mathcal N(A_\tau z+u_\tau;\theta,\rho_\tau I),
\label{eq:characterization-kernel}
\end{equation}
where $\rho_\tau>0$, $A_\tau\in\R^{d\times d}$, and $u_\tau\in\R^d$
depend only on $\tau$, and $C_\tau(z)>0$.

\begin{theorem}[Characterization of exact transport]
\label{thm:characterization}
Fix $\tau>0$. Suppose a positive factor $\phi$ admits the kernel transport in
Eq.~\eqref{eq:characterization-kernel} for some affine state transform
$A_\tau z+u_\tau$ and scalar variance $\rho_\tau>0$.
Then $\phi$ must have the isotropic exponential--quadratic form in
Eq.~\eqref{eq:transport-factor}, and necessarily
$A_\tau=I/D_\tau$, $u_\tau=\rho_\tau a$, and
$\rho_\tau=\tau/D_\tau$.
Conversely, every factor of the form in
Eq.~\eqref{eq:transport-factor} satisfies
Eq.~\eqref{eq:characterization-kernel} whenever $D_\tau>0$.
\end{theorem}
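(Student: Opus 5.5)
The plan is to take logarithms of Eq.~\eqref{eq:characterization-kernel} and compare the dependence on $\theta$ for fixed $z$. Since $\phi>0$, we may write $\log\phi(\theta)=\psi(\theta)$. The identity then reads
\begin{equation*}
\psi(\theta)-\frac{\lVert z-\theta\rVert^2}{2\tau}
=
\log C_\tau(z)-\frac{\lVert A_\tau z+u_\tau-\theta\rVert^2}{2\rho_\tau}
+\frac d2\log\frac{\rho_\tau}{\tau}.
\end{equation*}
Solving for $\psi$ gives
\begin{equation*}
\psi(\theta)=\Big(\frac1{2\tau}-\frac1{2\rho_\tau}\Big)\lVert\theta\rVert^2
+\theta^\top\Big(\frac{A_\tau z+u_\tau}{\rho_\tau}-\frac{z}{\tau}\Big)+g(z),
\end{equation*}
where $g$ collects every term depending only on $z$. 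The left side does not depend on $z$. Fixing any single $z$, say $z=0$, already shows that $\psi$ is a constant plus a linear term plus an isotropic quadratic term. We define $\kappa=1/\rho_\tau-1/\tau$, $a=u_\tau/\rho_\tau$ and $c=g(0)$, which gives exactly the form in Eq.~\eqref{eq:transport-factor}. No regularity of $\phi$ is needed, because the identity holds pointwise in $\theta$.

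Next, we extract the constraints on the transport parameters by requiring the expression to be independent of $z$. The coefficient of $\theta$ must not vary with $z$, so $(A_\tau/\rho_\tau-I/\tau)z=0$ for all $z$. This forces $A_\tau=(\rho_\tau/\tau)I$. From the definition of $\kappa$ we have $1/\rho_\tau=(1+\kappa\tau)/\tau=D_\tau/\tau$. Hence $\rho_\tau=\tau/D_\tau$, and $\rho_\tau>0$ forces $D_\tau>0$. It follows that $A_\tau=I/D_\tau$ and $u_\tau=\rho_\tau a$, as claimed. One subtlety needs care: $\kappa$ and $a$ are intrinsic to $\phi$, since the representation of $\psi$ as a polynomial of degree at most two is unique. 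So these relations pin down $(A_\tau,u_\tau,\rho_\tau)$ uniquely in terms of $\phi$, and they match Eq.~\eqref{eq:transform-pieces}.

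For the converse, we start from $\phi$ of the form in Eq.~\eqref{eq:transport-factor} with $D_\tau>0$ and complete the square in $\theta$ in the exponent $c+a^\top\theta-\tfrac\kappa2\lVert\theta\rVert^2-\lVert z-\theta\rVert^2/(2\tau)$. The quadratic coefficient is $-D_\tau/(2\tau)=-1/(2\rho_\tau)$, and the linear coefficient is $a+z/\tau=m_\tau(z)/\rho_\tau$. The exponent therefore equals $-\lVert\theta-m_\tau(z)\rVert^2/(2\rho_\tau)$ plus a function of $z$. This gives Eq.~\eqref{eq:kernel-factor-main}, and hence Eq.~\eqref{eq:characterization-kernel}, with an explicit positive prefactor
\begin{equation*}
C_\tau(z)=e^{c}\,D_\tau^{-d/2}\exp\!\Big(\frac{\lVert m_\tau(z)\rVert^2}{2\rho_\tau}-\frac{\lVert z\rVert^2}{2\tau}\Big).
\end{equation*}

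I expect no serious obstacle, since every step is algebraic. The only point needing care is the uniqueness argument in the second paragraph. There I would state explicitly that a polynomial identity in $\theta$ holding for all $\theta$ determines its coefficients, and that the coefficient identity must hold for all $z$. The first fact justifies defining $\kappa$ and $a$ from $\phi$ alone, and the second forces $A_\tau$ to be a scalar multiple of $I$.
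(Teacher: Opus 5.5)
Your proposal is correct and follows essentially the same route as the paper: take logarithms, read off the quadratic and linear coefficients in $\theta$, use $z$-independence to force $A_\tau=(\rho_\tau/\tau)I$ with $\kappa=1/\rho_\tau-1/\tau$ and $a=u_\tau/\rho_\tau$, and complete the square for the converse. There are two small differences, neither of which affects correctness. First, the paper's formal version assumes the kernel identity only almost everywhere and uses Fubini to choose a full-measure set of $z$ values, so your ``fix $z=0$'' would become ``fix a generic $z$'' and the conclusions would hold for a.e.\ $\theta$. Second, your normalizer term should read $\frac d2\log(\tau/\rho_\tau)$; this slip is harmless because the term is absorbed into $g(z)$.
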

Thus this family is not chosen merely for convenience: within affine state
transforms and scalar Gaussian variances, it is the complete class of positive
factors admitting exact transport. Appendix~\ref{app:closure} gives the proof. 
For the Gaussian-factor special case, closely related exact transformed-query
identities arise in Gaussian conditioning
\citep{holderrieth2026glass,mammadov2026eps}. Appendix~\ref{sec:related-work}
discusses the relation.

\subsection{Combining posterior components}
\label{sec:mixture}

These exact single-factor transports can be used as building blocks for
richer prior ratios. Specifically, consider a positive weighted sum of $K$
factors
\begin{equation}
r(\theta)=\sum_{k=1}^K b_k\phi_k(\theta),
\qquad b_k>0,
\label{eq:positive-factor-representation}
\end{equation}
where each $\phi_k$ has the form in Eq.~\eqref{eq:transport-factor}. Each
factor defines a posterior component
$q_k(\theta\mid x)=p(\theta\mid x)\phi_k(\theta)/Z_k(x)$ with
$Z_k(x)=\E_{p(\theta\mid x)}[\phi_k(\theta)]$,
assuming $0<Z_k(x)<\infty$. The score of each component is given exactly by Eq.~\eqref{eq:exact-score}.
Substituting Eq.~\eqref{eq:positive-factor-representation} into
Eq.~\eqref{eq:background-clean-reweight} and normalizing gives
\begin{equation}
q(\theta\mid x)
=
\sum_{k=1}^K \alpha_k(x)q_k(\theta\mid x),
\qquad
\alpha_k(x)
=
\frac{b_k Z_k(x)}
{\sum_j b_j Z_j(x)}.
\label{eq:posterior-mixture}
\end{equation}
Thus the coefficients $b_k$ belong to the chosen representation of the known
prior ratio and are fixed independently of $x$. In contrast, the posterior
mixture weights $\alpha_k(x)$ depend on the observation through $Z_k(x)$.
Appendix~\ref{app:mixture-proof} gives the mixture algebra. If
Eq.~\eqref{eq:positive-factor-representation} approximates $r$ rather than
representing it exactly, the same algebra gives the posterior induced by the
approximating ratio. Appendix~\ref{app:error-proof} quantifies its discrepancy
from the exact target posterior.

\subsection{Sampling one component per trajectory}

Eq.~\eqref{eq:posterior-mixture} expresses the target posterior as a mixture
of the component posteriors $q_k$. For a fixed observation $x$,
$\alpha_k(x)$ are the static weights of this posterior mixture. Sampling from the mixture is straightforward: 
for each output sample, first
select component $k$ with probability $\alpha_k(x)$, then draw from
$q_k(\theta\mid x)$. Let $J$ denote the selected component:
\begin{equation}
\Pr(J=k\mid x)=\alpha_k(x),
\qquad
\theta\mid J=k,x\sim q_k(\theta\mid x).
\label{eq:static-component-sampling}
\end{equation}
Marginalizing over $J$ recovers the target mixture in
Eq.~\eqref{eq:posterior-mixture}. Spectra realizes the conditional draw by
reverse diffusion, using component $J$'s transported score throughout the
trajectory. Each reverse step therefore requires one base-score query,
independent of the number of components $K$.

A natural alternative is to follow the score of the full mixture, which
requires state-dependent component responsibilities at every reverse step.
These responsibilities depend on normalized component densities and are not
directly available from local queries to the frozen score model. Sampling the
component once avoids this per-step recombination. Appendix~\ref{app:mixture-proof}
gives the formal argument and derives the full-mixture score. Implementation
details are given in Appendix~\ref{app:sampling-settings}.

\subsection{Estimating component weights}
\label{sec:weights}

For $K>1$, the only remaining quantities to estimate in
Eq.~\eqref{eq:posterior-mixture} are the observation-specific normalizers
$Z_k(x)=\E_{p(\theta\mid x)}[\phi_k(\theta)]$.
In practice, the estimator operates on the finite reverse chain of the base
sampler. Let $\widetilde Z_k(x)$ denote the corresponding finite-chain
counterpart of $Z_k(x)$. A direct estimate of $\widetilde Z_k(x)$ averages $\phi_k$ over endpoint
samples from the base chain. However, this can suffer from poor overlap when
component $k$ concentrates in a region rarely visited by those samples.
We therefore use the corresponding transported component process as an
importance-sampling proposal.

Let $\omega=(z_R,\ldots,z_0)$ denote a reverse trajectory, and let
$P(\omega)$ and $Q_k(\omega)$ be the path densities of the base and
component-$k$ chains used by the estimator. Their finite-step transition
densities are known, so the trajectory likelihood ratio $P(\omega)/Q_k(\omega)$
is tractable even when the endpoint densities are not. Under common path
support, change of measure for the finite reverse chain gives
\begin{equation}
\widetilde Z_k(x)
=
\E_{\omega\sim P}[\phi_k(z_0)]
=
\E_{\omega\sim Q_k}\!\left[
\phi_k(z_0)\frac{P(\omega)}{Q_k(\omega)}
\right].
\label{eq:path-z}
\end{equation}
Monte Carlo averaging of the right-hand side gives $\widehat Z_k$, which is
used in Eq.~\eqref{eq:posterior-mixture} to form the component weights.
Appendix~\ref{app:path} gives the trajectory likelihood ratio, overlap
diagnostics, the direct estimator, and the full procedure.
Appendix~\ref{app:error-proof} separates finite-sampler and Monte Carlo
weight-estimation errors. This is an upfront, observation-specific calculation, with proposal
trajectories generated separately for each component. For a fixed observation
and factor set, the estimated normalizers can be reused when only the
coefficients $b_k$ change.

\section{Experiments}
\label{sec:experiments}

We evaluate Spectra on six SBI benchmarks under both single-factor and
mixture prior shifts, adapting frozen diffusion-based Simformer models
\citep{gloeckler2024simformer} without retraining or additional simulator
calls. Across these experiments, compared with a strengthened version of
PriorGuide, Spectra improves posterior accuracy most clearly on nonlinear
tasks under strong shifts, remains comparable on OUP and the Gaussian Linear
controls, improves mixture-posterior agreement on all six tasks, and reduces
steady-state single-factor online sampling time by $2.4$--$3.2\times$ at the
sampler settings used for accuracy.

\subsection{Experimental setup}

We consider six benchmark tasks: Two Moons and simple likelihood complex posterior (SLCP) \citep{lueckmann2021benchmark}, 
the Ornstein--Uhlenbeck process (OUP) \citep{uhlenbeck1930brownian}, 
Bayesian causal inference (BCI) in multisensory perception \citep{kording2007causal}, 
and Gaussian Linear problems \citep{lueckmann2021benchmark} in 10 and 20 dimensions (GL-10D and GL-20D), 
included as Gaussian controls. 
Following \citet{yang2026priorguide}, single-factor target priors are isotropic Gaussians with $0.5$ (mild) or $0.2$ (strong) times 
the training-prior marginal standard deviation. Mixture shifts use two strong Gaussian components. 
Full prior construction, posterior references, and support checks are given in Appendix~\ref{app:experiment-details}.
These benchmark shifts lie in the exactly transportable component class.
Appendix~\ref{app:component-approx-results} studies anisotropic and correlated
Gaussian targets outside the exact single-factor class using positive-mixture
approximations, where Spectra remains more accurate than PriorGuide.

Baselines include PriorGuide \citep{yang2026priorguide}, 
the unadapted frozen model (Base), and sampling-importance-resampling (SIR) \citep{rubin1988sir}. 
PriorGuide receives the exact analytic prior ratio and uses the same posterior-sampling settings as Spectra, 
while SIR reweights samples from Base by the prior ratio. 
We write a sampler configuration as $(N,L)$, 
where $N$ is the number of reverse time-grid points and $L$ the number of Langevin corrector updates per interval. 
Sampling and weight-estimation details are given in Appendix~\ref{app:sampling-settings} and Appendix~\ref{app:path}.

Posterior agreement is assessed using two complementary metrics. The classifier two-sample test \citep[C2ST;][]{lopezpaz2017c2st} 
assesses joint agreement, including parameter dependencies, with an ideal value of $0.5$ corresponding to chance-level classification. 
Mean marginal total variation (MMTV) measures average discrepancy across one-dimensional marginals, with an ideal value of zero.

\begin{table}[!t]
\centering
\caption{\textbf{Posterior accuracy under single-factor and mixture prior shifts.}
Mean (standard deviation) across ten observations or 30 prior--observation
pairs. Ideal values are $0.5$ for C2ST and $0$ for MMTV. Bold marks the mean
closest to the ideal value within each task and setting.}
\label{tab:k1-results}
\label{tab:k2-results}
\begingroup
\fontsize{7}{8.2}\selectfont
\setlength{\tabcolsep}{2pt}
\renewcommand{\arraystretch}{1.0}
\begin{tabular*}{\linewidth}{@{\extracolsep{\fill}}llrrrrrr@{}}
\toprule
 & & \multicolumn{2}{c}{Single-factor, mild} & \multicolumn{2}{c}{Single-factor, strong} & \multicolumn{2}{c}{Mixture} \\
\cmidrule(lr){3-4}\cmidrule(lr){5-6}\cmidrule(l){7-8}
Task & Method & C2ST & MMTV & C2ST & MMTV & C2ST & MMTV \\
\midrule
\multirow{4}{*}{Two Moons} & Base & \benchval{0.568}{0.036} & \benchval{0.219}{0.087} & \benchval{0.691}{0.095} & \benchval{0.426}{0.192} & \benchval{0.736}{0.074} & \benchval{0.510}{0.147} \\
 & SIR & \benchval{0.525}{0.005} & \benchval{\textbf{0.075}}{0.011} & \benchval{0.544}{0.018} & \benchval{\textbf{0.070}}{0.012} & \benchval{0.543}{0.018} & \benchval{\textbf{0.062}}{0.009} \\
 & PriorGuide & \benchval{0.528}{0.012} & \benchval{0.123}{0.046} & \benchval{0.556}{0.034} & \benchval{0.165}{0.086} & \benchval{0.561}{0.036} & \benchval{0.176}{0.086} \\
 & Spectra & \benchval{\textbf{0.521}}{0.005} & \benchval{0.093}{0.016} & \benchval{\textbf{0.532}}{0.015} & \benchval{0.102}{0.030} & \benchval{\textbf{0.521}}{0.009} & \benchval{0.073}{0.014} \\
\midrule
\multirow{4}{*}{SLCP} & Base & \benchval{0.765}{0.057} & \benchval{0.255}{0.041} & \benchval{0.917}{0.019} & \benchval{0.466}{0.072} & \benchval{0.946}{0.029} & \benchval{0.498}{0.075} \\
 & SIR & \benchval{0.640}{0.086} & \benchval{\textbf{0.123}}{0.054} & \benchval{0.713}{0.067} & \benchval{\textbf{0.101}}{0.043} & \benchval{0.796}{0.086} & \benchval{0.205}{0.269} \\
 & PriorGuide & \benchval{0.632}{0.108} & \benchval{0.128}{0.061} & \benchval{0.637}{0.061} & \benchval{0.122}{0.034} & \benchval{0.618}{0.054} & \benchval{0.150}{0.057} \\
 & Spectra & \benchval{\textbf{0.631}}{0.108} & \benchval{0.128}{0.060} & \benchval{\textbf{0.608}}{0.073} & \benchval{0.107}{0.044} & \benchval{\textbf{0.570}}{0.056} & \benchval{\textbf{0.081}}{0.031} \\
\midrule
\multirow{4}{*}{OUP} & Base & \benchval{0.587}{0.064} & \benchval{0.186}{0.075} & \benchval{0.742}{0.088} & \benchval{0.397}{0.079} & \benchval{0.703}{0.063} & \benchval{0.322}{0.085} \\
 & SIR & \benchval{0.524}{0.009} & \benchval{0.054}{0.009} & \benchval{0.548}{0.050} & \benchval{\textbf{0.045}}{0.011} & \benchval{0.529}{0.011} & \benchval{\textbf{0.046}}{0.012} \\
 & PriorGuide & \benchval{\textbf{0.507}}{0.003} & \benchval{\textbf{0.053}}{0.006} & \benchval{\textbf{0.505}}{0.009} & \benchval{0.047}{0.011} & \benchval{0.509}{0.008} & \benchval{0.049}{0.017} \\
 & Spectra & \benchval{0.508}{0.002} & \benchval{\textbf{0.053}}{0.006} & \benchval{\textbf{0.505}}{0.009} & \benchval{\textbf{0.045}}{0.010} & \benchval{\textbf{0.507}}{0.006} & \benchval{\textbf{0.046}}{0.012} \\
\midrule
\multirow{4}{*}{GL-10D} & Base & \benchval{0.898}{0.024} & \benchval{0.308}{0.028} & \benchval{0.997}{0.002} & \benchval{0.664}{0.038} & \benchval{0.996}{0.003} & \benchval{0.636}{0.044} \\
 & SIR & \benchval{0.760}{0.056} & \benchval{0.225}{0.260} & \benchval{0.998}{0.002} & \benchval{0.965}{0.064} & \benchval{0.999}{0.001} & \benchval{0.965}{0.103}\rlap{\textsuperscript{$\dagger$}} \\
 & PriorGuide & \benchval{0.541}{0.020} & \benchval{\textbf{0.044}}{0.007} & \benchval{0.526}{0.019} & \benchval{\textbf{0.038}}{0.006} & \benchval{0.557}{0.029} & \benchval{0.080}{0.028} \\
 & Spectra & \benchval{\textbf{0.540}}{0.019} & \benchval{\textbf{0.044}}{0.007} & \benchval{\textbf{0.524}}{0.017} & \benchval{\textbf{0.038}}{0.005} & \benchval{\textbf{0.543}}{0.030} & \benchval{\textbf{0.052}}{0.018} \\
\midrule
\multirow{4}{*}{GL-20D} & Base & \benchval{0.947}{0.016} & \benchval{0.300}{0.017} & \benchval{1.000}{0.000} & \benchval{0.645}{0.017} & \benchval{0.999}{0.001} & \benchval{0.640}{0.030} \\
 & SIR & \benchval{0.927}{0.029} & \benchval{0.888}{0.061} & \benchval{1.000}{0.000} & \benchval{0.852}{0.139}\rlap{\textsuperscript{$\dagger$}} & \benchval{1.000}{0.000} & \benchval{0.900}{0.125}\rlap{\textsuperscript{$\dagger$}} \\
 & PriorGuide & \benchval{\textbf{0.555}}{0.022} & \benchval{\textbf{0.042}}{0.003} & \benchval{\textbf{0.529}}{0.021} & \benchval{0.035}{0.004} & \benchval{0.529}{0.011} & \benchval{0.051}{0.021} \\
 & Spectra & \benchval{\textbf{0.555}}{0.023} & \benchval{\textbf{0.042}}{0.003} & \benchval{\textbf{0.529}}{0.020} & \benchval{\textbf{0.034}}{0.004} & \benchval{\textbf{0.521}}{0.012} & \benchval{\textbf{0.036}}{0.016} \\
\midrule
\multirow{4}{*}{BCI} & Base & \benchval{0.758}{0.030} & \benchval{0.254}{0.049} & \benchval{0.948}{0.016} & \benchval{0.497}{0.045} & \benchval{0.946}{0.015} & \benchval{0.485}{0.039} \\
 & SIR & \benchval{0.591}{0.026} & \benchval{0.086}{0.031} & \benchval{0.799}{0.066} & \benchval{0.224}{0.312} & \benchval{0.789}{0.061} & \benchval{0.183}{0.206} \\
 & PriorGuide & \benchval{0.569}{0.028} & \benchval{0.087}{0.028} & \benchval{0.575}{0.013} & \benchval{0.081}{0.011} & \benchval{0.548}{0.019} & \benchval{0.072}{0.020} \\
 & Spectra & \benchval{\textbf{0.567}}{0.028} & \benchval{\textbf{0.085}}{0.028} & \benchval{\textbf{0.551}}{0.015} & \benchval{\textbf{0.066}}{0.010} & \benchval{\textbf{0.544}}{0.017} & \benchval{\textbf{0.061}}{0.011} \\
\bottomrule
\end{tabular*}

\par\vspace{4pt}
\raggedright\fontsize{7.5}{8.8}\selectfont
PriorGuide receives the exact prior ratio. Sampling configurations are $(N,L)=(25,8)$ for PriorGuide and Spectra,
except single-factor BCI, where both use $(250,0)$ following
\citet{yang2026priorguide}; Base and SIR use $(100,0)$. At the matched
$(100,0)$ configuration, Spectra attains a lower MMTV than SIR on Two Moons
and SLCP (Appendices~\ref{app:k1-controlled} and~\ref{app:k2-systematic}).
\par
\textsuperscript{$\dagger$}\ SIR MMTV is undefined for some Gaussian Linear
runs. Valid counts are reported in Appendices~\ref{app:k1-full}
and~\ref{app:k2-full}.
\par
\endgroup
\end{table}

\subsection{Single-factor prior adaptation}
\label{sec:k1-exp}

Under strong prior shifts, Spectra shows better agreement with the reference
posterior than PriorGuide on Two Moons, SLCP, and BCI, while the two methods
perform similarly on OUP and the Gaussian Linear controls
(Table~\ref{tab:k1-results}).

\looseness=-1
To separate shift strength from changes in prior center and observation, we
additionally vary the target-prior width while holding the prior center,
observations, backbone, and sampler fixed. The gap between PriorGuide and
Spectra grows systematically on Two Moons as the target prior narrows, but
remains essentially flat on OUP
(Figure~\ref{fig:severity-sweep}; Appendix~\ref{app:severity-sweep}).
The effect of shift severity therefore depends on posterior geometry 
rather than shift strength alone.

Figure~\ref{fig:posterior-geometry-practical} visualizes representative
strong-shift posteriors. Spectra follows the target geometry closely across
the displayed regimes, whereas PriorGuide exhibits off-target mass on the
nonlinear examples. SIR illustrates a complementary trade-off: reweighting
can be effective when the Base posterior already has substantial overlap with
the target, but becomes inefficient as that overlap deteriorates. Additional
controlled comparisons and diagnostics are reported in
Appendix~\ref{app:k1-full}.

\begin{figure}[!ht]
\centering
\includegraphics[width=\linewidth]{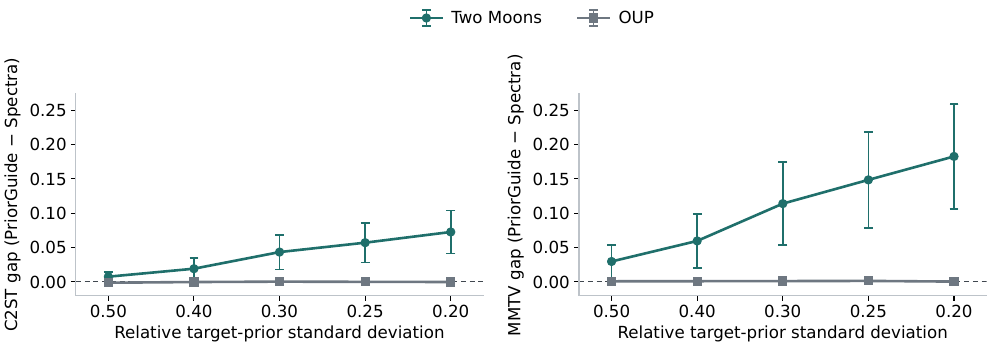}
\caption{\textbf{Accuracy gap as the target prior narrows.}
Mean paired PriorGuide--Spectra difference with 95\% paired $t$ confidence
intervals across ten observations. All experimental factors except
target-prior width are held fixed. Positive gaps indicate better performance
by Spectra.}
\label{fig:severity-sweep}
\end{figure}

\begin{figure}[!ht]
\centering
\includegraphics[width=\linewidth]{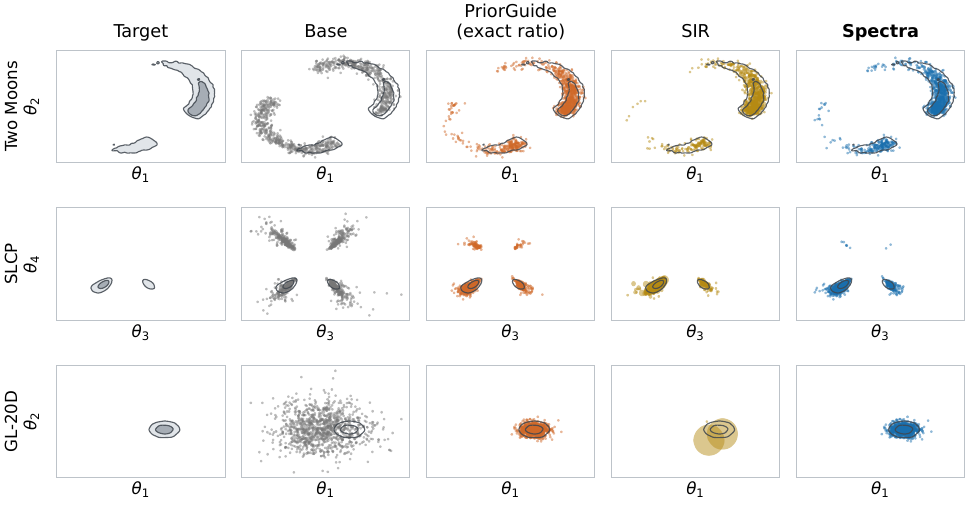}
\caption{\textbf{Posterior geometry under strong single-factor prior shifts.}
Reference contours enclose 50\% and 90\% posterior mass. Markers show 1,000
samples from each method for one fixed observation, backbone, and sampling
seed per task. Larger SIR markers indicate repeated resampling of the same
Base sample. SLCP and GL-20D show two-dimensional projections. 
Table~\ref{tab:k1-results} reports full-dimensional accuracy.}
\label{fig:posterior-geometry-practical}
\end{figure}

\subsection{Mixture prior shifts}
\label{sec:k2-exp}

Across 180 prior--observation pairs on six tasks, Spectra with path-space
weight estimation shows better agreement with the reference posterior than
PriorGuide on every task, with the clearest gains on Two Moons and SLCP
(Table~\ref{tab:k2-results}). For 162 of the 180 pairs, one component accounts
for more than 95\% of the posterior mass after observing $x$. In the remaining 18 pairs, both components retain at least 5\% posterior
mass, and the aggregate advantage over PriorGuide persists (Appendix~\ref{app:mixture-allocation}).

We also evaluate mixtures with more than two components on Two Moons
(Appendix~\ref{app:k-composition}). Posterior accuracy remains stable up to
$K=16$. At $K=16$, five components retain at least 5\% posterior mass for each
of the three observations, so the posterior remains genuinely multi-component.
For these larger mixtures, path-space weight estimates remain close to the
corresponding reference weights. Full mixture results and weight-estimation
diagnostics are reported in Appendix~\ref{app:k2-full}.

\begin{figure}[!ht]
\centering
\includegraphics[width=0.9\linewidth]{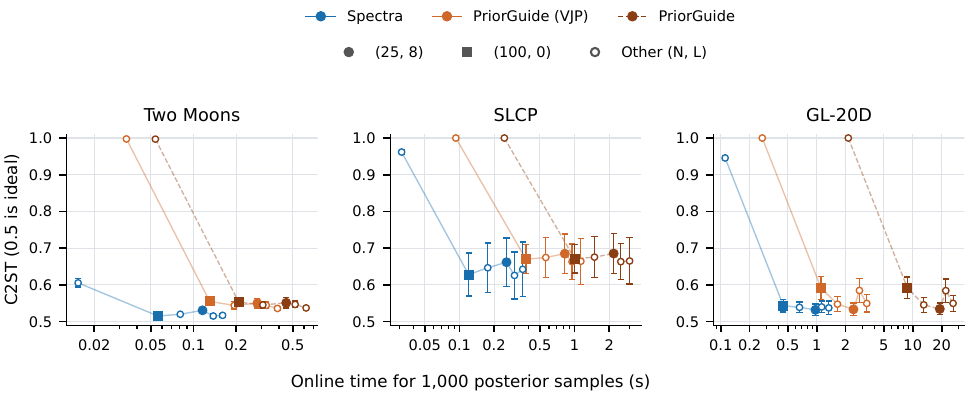}
\caption{\textbf{Single-factor accuracy versus online sampling time under
strong prior shifts.} PriorGuide (VJP) uses a vector--Jacobian product for
the same guidance update and preserves PriorGuide's accuracy
(Appendix~\ref{app:compute-details}). Timing covers the reverse sampling
stage for 1,000 posterior samples after compilation. One-time setup costs are
excluded.}
\label{fig:accuracy-compute}
\end{figure}

\subsection{Accuracy and computational cost}
\label{sec:compute-exp}

Spectra requires one transformed base-score evaluation per 
reverse step, whereas PriorGuide additionally differentiates through 
the frozen score model to compute its guidance. 
For timing, we implement PriorGuide's guidance with 
a vector--Jacobian product (VJP) instead of the full Jacobian 
used by the benchmarked implementation. 
Even with this faster implementation, Spectra reduces steady-state 
online sampling time by $2.4$--$3.2\times$ across five timing tasks
at the sampler settings of Table~\ref{tab:k1-results}.
Across the sampler budgets in Figure~\ref{fig:accuracy-compute}, increasing PriorGuide's sampling
effort does not close the accuracy--cost gap.
Detailed timing results and
implementation comparisons are reported in Appendix~\ref{app:compute-details}.

For mixture priors, fixing one component per trajectory keeps the subsequent
sampling cost independent of the number of components. On Two Moons, the
measured Spectra sampling time is unchanged from $K=1$ to $K=64$. The
$K$-dependent computation is instead concentrated in observation-specific
component-weight estimation, whose path-space cost grows linearly with $K$.
Weight estimation can dominate the initial cost for a new observation, while
additional samples for the same observation remain inexpensive. Detailed
timing and scaling results are reported in
Appendices~\ref{app:mixture-cost} and~\ref{app:k-scaling}.

\section{Discussion}
\label{sec:discussion}

Spectra allows amortized diffusion-based SBI models to be reused 
when prior assumptions change, without repeating expensive simulation or
training. It exploits structure in the known prior change and propagates 
its effect through the frozen score model, so a single pretrained model can support 
revised priors and prior-sensitivity analyses at test time. 
Composition extends the same principle to richer prior changes while 
keeping the per-step cost at one frozen-score evaluation.

\paragraph{Limitations.}
Spectra derives its exactness and low online sampling cost from structure in the prior
change rather than offering unrestricted prior flexibility. The present theory
assumes isotropic Gaussian corruption, and adaptation is limited to the support
of the training prior. Broader prior changes, such as anisotropic or correlated Gaussians, require
structured representations like the positive mixtures of Appendix~\ref{app:component-approx-results}. Exactness concerns the transported score rather than the
complete numerical sampler, which still inherits error from the learned score,
terminal initialization, and finite integration. For multi-component
representations, greater flexibility also introduces observation-specific
weight estimation and an additional setup cost.

\paragraph{Conclusion.}
\looseness=-1
Spectra shows that changing prior assumptions need not require 
changing the pretrained inference model itself. 
When the prior update has suitable structure, its effect can be 
propagated analytically through a frozen diffusion-SBI model, 
with composition extending the same idea to richer prior changes. 
More broadly, this suggests that structured test-time transformations can extend the useful 
life of expensive amortized inference models as the scientific assumptions 
they encode are revised.

\section*{Reproducibility statement}

Derivations, proofs, and error analyses are in 
Appendices~\ref{app:notation}--\ref{app:component-approx}; 
experimental protocols, reference validation, 
and full results are in Appendices~\ref{app:experiment-details}--\ref{app:component-approx-results}. 
\ifpubliccopy
The code at \url{\codeurl} includes the scripts, fixed random seeds, model and task configurations,
and experiment settings needed to rerun the reported experiments.
\else
The accompanying anonymized code package includes the scripts, fixed random seeds, model and task configurations,
and experiment settings needed to reproduce the reported tables and figures.
\fi Unless explicitly stated otherwise, 
headline comparisons use the exact analytic prior ratio when available and the stated matched posterior-sampling protocols.

\section*{AI use disclosure}

Generative AI tools were used as interactive assistants under author direction for research brainstorming and methodological feedback; 
drafting and checking parts of mathematical derivations and proofs; supporting implementation, debugging, 
and experimental analysis; and manuscript drafting, editing, and LaTeX preparation. 
The scientific direction, methodological and experimental decisions, validation of mathematical and empirical results, 
and final presentation were determined by the authors. AI-assisted outputs were treated as provisional and reviewed by the authors, 
who take responsibility for all claims, results, and conclusions in this work.

\ifpubliccopy
\section*{Acknowledgments}

Computations were performed on the HPC system Raven at the Max Planck Computing and Data Facility (MPCDF).
\fi

\bibliography{references}
\bibliographystyle{iclr2027_conference}

\appendix
\clearpage
\renewcommand{\FloatBarrier}{}
\renewcommand{\topfraction}{0.95}
\renewcommand{\bottomfraction}{0.7}
\renewcommand{\textfraction}{0.07}
\renewcommand{\floatpagefraction}{0.8}
\setcounter{topnumber}{4}
\setcounter{bottomnumber}{3}
\setcounter{totalnumber}{6}

\begin{center}
{\LARGE\scshape Appendix}
\end{center}
\medskip

The appendix is organized as follows.
\begin{itemize}
\item Appendix~\ref{sec:related-work} provides extended related work.
\item Appendices~\ref{app:notation}--\ref{app:mixture-proof} collect notation, derivations, and proofs.
\item Appendices~\ref{app:error-proof}--\ref{app:path} cover error bounds and normalizer estimation, and Appendix~\ref{app:component-approx} approximates correlated Gaussian priors by positive factors.
\item Appendices~\ref{app:experiment-details}--\ref{app:component-approx-results} show experimental protocols and full results.
\end{itemize}

\section{Extended related work}
\label{sec:related-work}

\paragraph{Prior adaptation in amortized inference.}
PriorGuide is the closest method to our setting. It adapts a frozen
diffusion-SBI model from a clean prior ratio using a Gaussian approximation
of the reverse conditional \citep{yang2026priorguide}, whereas Spectra
exploits prior-ratio structure to obtain the adapted score directly from the
base score. A contemporaneous approach instead learns the noisy ratio
correction with a target-prior-specific auxiliary network, avoiding new
simulator calls but requiring additional training \citep{zang2026dre}.
Other methods obtain prior flexibility by exposing the inference model to a
family of priors or model alternatives during training
\citep{chang2025ace,whittle2025distribution,elsemuller2023sensitivity}.
Automatic posterior transformation addresses proposal changes during
sequential simulation and retraining \citep{greenberg2019apt}, while
likelihood-based SBI permits a new prior to be imposed after training at the
cost of per-observation posterior sampling \citep{papamakarios2019snl}.
Related work also studies robustness of amortized inference through
self-consistency objectives on unlabeled observations
\citep{mishra2026selfconsistency}. Our setting instead starts from a
posterior model trained under a single prior and keeps it frozen during
adaptation.

\paragraph{Conditioning and composition in diffusion models.}
Several diffusion methods construct approximate conditional scores for
inverse problems, including pseudoinverse guidance, diffusion posterior
sampling, and Tweedie moment-projected methods
\citep{song2023pigdm,chung2023dps,boys2024tmpd}. Free Hunch estimates local
denoiser covariance using information from training data and the sampling
trajectory \citep{rissanen2025freehunch}. Exact Gaussian-conditioning
identities can also lead to transformed model queries: GLASS combines
Gaussian observations through a transformed input and noise level
\citep{holderrieth2026glass}, while EPS derives an exact posterior-score
identity for linear Gaussian measurements \citep{mammadov2026eps}. For a
Gaussian prior-ratio factor, Spectra's transformed query is an identity of
this kind. Spectra applies it to the prior ratio of a frozen amortized
posterior model, extends it to linear tilts, factors that broaden the prior
and positive mixtures of such factors, and shows that, among affine query maps
with a scalar noise variance, isotropic exponential--quadratic factors are the
only ones that admit exact transport.

Diffusion scores can be composed across observations or factors, although
naive score addition may require corrected sampling
\citep{geffner2023compositional,linhart2026tall,du2023compositional}.
Particle-based methods provide another route to conditional diffusion
sampling \citep{wu2023tds,skreta2025fkc,cardoso2024mcgdiff}. In our mixture
construction, the adapted posterior is instead represented as a mixture
induced by the prior ratio, with component weights estimated once and one
component kept fixed along each reverse trajectory.

\section{Notation and assumptions}
\label{app:notation}

\begin{table}[ht]
\centering
\small
\caption{Principal notation. Symbols specific to error analysis and sampling protocols are defined where they are used.}
\begin{tabular}{ll}
\toprule
Symbol & Meaning \\
\midrule
$\pi_{\rm tr},\pi_{\rm new},r$ & training prior, target prior, and their ratio \\
$p,q$ & base and target posteriors \\
$q^\phi$ & posterior induced by a single prior-ratio factor \\
$t$, $\tau=\sigma(t)^2$ & diffusion time and corresponding noise variance \\
$p_\tau,s_p,\widehat s_p$ & Gaussian-smoothed base posterior, its exact score, and frozen score estimate \\
$\phi_k,b_k,q_k$ & prior-ratio factor, its coefficient, and posterior component \\
$Z_k,\widetilde Z_k$ & normalizing constants under the exact posterior and finite base sampler \\
$\alpha_k,\widehat\alpha_k$ & exact and estimated component weights \\
$D_\tau,m_\tau,\rho_\tau$ & transport denominator, transformed state, and transformed noise variance \\
\bottomrule
\end{tabular}
\end{table}

\paragraph{Density notation.}
In the main text, $p$ and $q$ denote exact Bayesian posteriors. In
Appendix~\ref{app:error-proof}, exact target and component posterior laws
carry a star when they are compared with implemented laws. 
Although $p$ is a Bayesian posterior in our application, the algebraic transport results only require it to be a base density satisfying the conditions below.

\paragraph{Support.}
We assume $\pi_{\rm new}\ll\pi_{\rm tr}$ and define $r=d\pi_{\rm new}/d\pi_{\rm tr}$. 
For priors with densities, this reduces to $r(\theta)=\pi_{\rm new}(\theta)/\pi_{\rm tr}(\theta)$ 
wherever $\pi_{\rm tr}(\theta)>0$. 
If the target posterior instead assigns mass outside $S=\supp\pi_{\rm tr}$, 
any output law satisfying $\widehat q(S)=1$ incurs $\TV(q,\widehat q)\ge q(S^c)$, 
by evaluating total variation on $S^c$.

\paragraph{Regularity.}
The base $p$ is a probability density on $\R^d$. 
The Gaussian convolutions used below are positive and differentiable, 
differentiation under the integral is valid, and each required normalizing constant is finite and nonzero. 
Kernel equalities hold almost everywhere with respect to Lebesgue measure.

\paragraph{Exactness.}
Exactness refers to the score identity obtained from the exact base score. 
Exact posterior sampling additionally requires the correct target terminal distribution 
and exact reverse dynamics reaching the clean endpoint. The implementation uses a learned score, 
a finite solver, and an approximate terminal distribution.

\section{Derivation of the noisy prior-reweighting identity}
\label{app:noisy-reweight-derivation}

Fix an observation $x$ and diffusion time $t$. Under the training prior, the noisy posterior is
\begin{equation}
p_t(z\mid x)=\int p_{t|0}(z\mid\theta)p(\theta\mid x)\,d\theta.
\label{eq:app-noisy-base}
\end{equation}
For a new prior $\pi_{\rm new}$, let
\begin{equation*}
r(\theta)=\frac{\pi_{\rm new}(\theta)}{\pi_{\rm tr}(\theta)},
\qquad
q(\theta\mid x)=\frac{p(\theta\mid x)r(\theta)}{Z_r(x)},
\end{equation*}
where
\begin{equation}
Z_r(x)=\int p(\theta\mid x)r(\theta)\,d\theta.
\label{eq:app-clean-normalizer}
\end{equation}
Applying the same forward diffusion to the new posterior gives
\begin{align}
q_t(z\mid x)
&=\int p_{t|0}(z\mid\theta)q(\theta\mid x)\,d\theta\nonumber\\
&=\frac{1}{Z_r(x)}\int p_{t|0}(z\mid\theta)p(\theta\mid x)r(\theta)\,d\theta.
\label{eq:app-noisy-target}
\end{align}
The reverse conditional under the base posterior follows from Bayes' rule,
\begin{equation}
p(\theta\mid z,x)=\frac{p_{t|0}(z\mid\theta)p(\theta\mid x)}{p_t(z\mid x)}.
\label{eq:app-reverse-conditional}
\end{equation}
Equivalently,
\begin{equation}
p_{t|0}(z\mid\theta)p(\theta\mid x)=p_t(z\mid x)p(\theta\mid z,x).
\label{eq:app-joint-factorization}
\end{equation}
Substituting Eq.~\eqref{eq:app-joint-factorization} into Eq.~\eqref{eq:app-noisy-target} yields
\begin{align}
q_t(z\mid x)
&=\frac{p_t(z\mid x)}{Z_r(x)}\int p(\theta\mid z,x)r(\theta)\,d\theta\nonumber\\
&=\frac{p_t(z\mid x)}{Z_r(x)}\E_{p(\theta\mid z,x)}[r(\theta)].
\label{eq:app-noisy-expectation}
\end{align}
Defining $H_t(z,x)=\E_{p(\theta\mid z,x)}[r(\theta)]$ therefore gives
\begin{equation}
q_t(z\mid x)=\frac{p_t(z\mid x)H_t(z,x)}{Z_r(x)}.
\label{eq:app-noisy-reweight}
\end{equation}

Since $Z_r(x)$ does not depend on $z$,
\begin{align}
s_q(z,t,x)
&=\nabla_z\log q_t(z\mid x)\nonumber\\
&=\nabla_z\log p_t(z\mid x)+\nabla_z\log H_t(z,x)\nonumber\\
&=s_p(z,t,x)+\nabla_z\log H_t(z,x),
\label{eq:app-score-decomposition}
\end{align}
which gives Eq.~\eqref{eq:prior-score-correction}.

\section{Exact single-factor transport}
\label{app:component-proof}

\subsection{Proof of Proposition~\ref{prop:transport}}

Fix an observation $x$ and a noise level $\tau>0$ with
$D_\tau=1+\kappa\tau>0$. Let $\phi$ have the form in
Eq.~\eqref{eq:transport-factor}, and define
\[
Z_\phi(x)
=
\int p(\theta\mid x)\phi(\theta)\,d\theta,
\]
assuming $0<Z_\phi(x)<\infty$. A constant scale in $r\propto\phi$
cancels in posterior normalization, so the Gaussian-smoothed target
posterior is
\begin{equation}
q_\tau^\phi(z\mid x)
=
\frac{1}{Z_\phi(x)}
\int p(\theta\mid x)\phi(\theta)
\mathcal N(z;\theta,\tau I)\,d\theta.
\label{eq:component-smoothing-app}
\end{equation}
We first rewrite the factor-kernel product inside the integral:
\begin{align}
\phi(\theta)\mathcal N(z;\theta,\tau I)
&=
(2\pi\tau)^{-d/2}
\exp\left(
c+a^\top\theta
-\frac{\kappa}{2}\lVert\theta\rVert^2
-\frac{\lVert z-\theta\rVert^2}{2\tau}
\right)
\nonumber\\
&=
(2\pi\tau)^{-d/2}
\exp\left(
c-\frac{\lVert z\rVert^2}{2\tau}
-\frac12\left(\tau^{-1}+\kappa\right)\lVert\theta\rVert^2
+\left(z/\tau+a\right)^\top\theta
\right).
\label{eq:factor-kernel-expanded-app}
\end{align}
To write the terms depending on $\theta$ as a Gaussian, set
\begin{equation}
\rho_\tau^{-1}
=
\tau^{-1}+\kappa,
\qquad
\rho_\tau^{-1}m_\tau(z)
=
z/\tau+a.
\end{equation}
Since $D_\tau=1+\kappa\tau$, this gives
\begin{equation}
\rho_\tau
=
\frac{\tau}{D_\tau},
\qquad
m_\tau(z)
=
\frac{z+\tau a}{D_\tau}.
\label{eq:transport-map-app}
\end{equation}
Completing the square in $\theta$ gives
\begin{align}
\phi(\theta)\mathcal N(z;\theta,\tau I)
&=
(2\pi\tau)^{-d/2}
\exp\left(
-\frac{\lVert\theta-m_\tau(z)\rVert^2}{2\rho_\tau}
+c
+\frac{\lVert z+\tau a\rVert^2}{2\tau D_\tau}
-\frac{\lVert z\rVert^2}{2\tau}
\right).
\label{eq:square-compensation-app}
\end{align}
Because
\[
(2\pi\tau)^{-d/2}
=
D_\tau^{-d/2}(2\pi\rho_\tau)^{-d/2},
\]
we obtain
\begin{align}
\phi(\theta)\mathcal N(z;\theta,\tau I)
&=
C_\tau(z)
\mathcal N(m_\tau(z);\theta,\rho_\tau I),
\label{eq:kernel-factor-app}\\
\log C_\tau(z)
&=
c-\frac d2\log D_\tau
+\frac{\lVert z+\tau a\rVert^2}{2\tau D_\tau}
-\frac{\lVert z\rVert^2}{2\tau}.
\label{eq:c-factor-app}
\end{align}
The Gaussian kernel is written with argument $m_\tau(z)$ and mean
$\theta$ to match the base smoothing. Since $C_\tau(z)$ is independent
of $\theta$, substituting Eq.~\eqref{eq:kernel-factor-app} into
Eq.~\eqref{eq:component-smoothing-app} gives
\begin{align}
q_\tau^\phi(z\mid x)
&=
\frac{C_\tau(z)}{Z_\phi(x)}
\int p(\theta\mid x)
\mathcal N(m_\tau(z);\theta,\rho_\tau I)\,d\theta
\nonumber\\
&=
\frac{C_\tau(z)}{Z_\phi(x)}
p_{\rho_\tau}(m_\tau(z)\mid x).
\label{eq:normalized-component-app}
\end{align}
At fixed $\tau$, $\rho_\tau$ is independent of $z$, while
\[
\nabla_z m_\tau(z)
=
\frac{1}{D_\tau}I.
\]
Moreover,
\begin{align}
\nabla_z\log C_\tau(z)
&=
\frac{z+\tau a}{\tau D_\tau}
-\frac{z}{\tau}
=
\frac{a-\kappa z}{D_\tau},
\label{eq:c-gradient-app}\\
\nabla_z\log p_{\rho_\tau}(m_\tau(z)\mid x)
&=
\frac{1}{D_\tau}
s_p(m_\tau(z),\rho_\tau,x).
\label{eq:base-gradient-app}
\end{align}
Taking the log gradient of
Eq.~\eqref{eq:normalized-component-app} and using
$\nabla_z\log Z_\phi(x)=0$ therefore yields
\begin{equation}
s_{q^\phi}(z,\tau,x)
=
\frac{a-\kappa z}{D_\tau}
+
\frac{1}{D_\tau}
s_p(m_\tau(z),\rho_\tau,x),
\end{equation}
which proves Proposition~\ref{prop:transport}. The derivation requires
no Gaussian assumption on $p$ and no derivative of its score.

\subsection{Learned-score error and admissible noise levels}

Define $e(z,\tau,x)=\widehat s_p(z,\tau,x)-s_p(z,\tau,x)$. Let
$\widehat s_{q^\phi}$ denote the transported score obtained by replacing
$s_p$ with $\widehat s_p$ in Proposition~\ref{prop:transport}.
Subtracting the exact transported score cancels the shared closed-form
term, giving
\begin{equation}
\widehat s_{q^\phi}(z,\tau,x)
-
s_{q^\phi}(z,\tau,x)
=
D_\tau^{-1}e(m_\tau(z),\rho_\tau,x).
\label{eq:score-error}
\end{equation}
For $\kappa\ge0$, the multiplier $D_\tau^{-1}$ is at most one. For
$\kappa<0$, it exceeds one on the admissible range. A multiplier below
one does not by itself imply smaller score error, because the base-score
error is evaluated at the transformed query $(m_\tau(z),\rho_\tau)$,
where its magnitude may differ.

The condition $D_\tau>0$ is required to keep the transformed variance
$\rho_\tau=\tau/D_\tau$ positive. For $\kappa\ge0$, it holds at every
positive noise level. For $\kappa<0$, every noise variance used by the
sampler must satisfy
$\tau<-1/\kappa$. Finiteness of $Z_\phi(x)$ is a separate requirement.
The transformed noise levels $\rho_\tau$ must also lie within the
noise range supported by the learned score model, since clipping them back
into that range changes the score identity.

\subsection{Special cases and interpretation}

For a linear exponential tilt,
$\phi(\theta)=\exp(c+a^\top\theta)$, we have
$\kappa=0$, $D_\tau=1$, $\rho_\tau=\tau$, and
$m_\tau(z)=z+\tau a$. Hence
\begin{equation}
s_{q^\phi}(z,\tau,x)
=
a+s_p(z+\tau a,\tau,x).
\label{eq:linear-tilt}
\end{equation}
The noise level is unchanged. The transport only shifts the
base-score query by $\tau a$ and adds the constant correction $a$.

For isotropic Gaussian localization,
$\phi(\theta)=\mathcal N(\theta;\mu_\phi,\lambda I)$ with $\lambda>0$,
the terms depending on $\theta$ give
$a=\mu_\phi/\lambda$ and $\kappa=1/\lambda$. The remaining terms are
absorbed into $c$. Substitution into Eq.~\eqref{eq:exact-score} gives
\begin{equation}
s_{q^\phi}(z,\tau,x)
=
\frac{\mu_\phi-z}{\tau+\lambda}
+
\frac{\lambda}{\tau+\lambda}
s_p\!\left(
\frac{\lambda z+\tau\mu_\phi}{\tau+\lambda},
\frac{\tau\lambda}{\tau+\lambda},
x
\right).
\label{eq:gaussian-localization}
\end{equation}
Here the transformed state lies between $z$ and the localization
center $\mu_\phi$, while the transformed noise variance
$\tau\lambda/(\tau+\lambda)$ is smaller than $\tau$.

\section{Formal characterization of exact transport}
\label{app:closure}

This section proves the characterization in
Theorem~\ref{thm:characterization}. The question is which positive
factors can be absorbed into an affine change of the Gaussian query
state and a scalar change of its variance. All statements concern this
explicit kernel identity.

\subsection{Formal statement}

Fix $\tau>0$ and let $\phi:\R^d\to(0,\infty)$ be measurable. Suppose
there exist a scalar $\rho>0$, a matrix $A\in\R^{d\times d}$, a vector
$u\in\R^d$, and a measurable function $C:\R^d\to(0,\infty)$ such that
\begin{equation}
\phi(\theta)\mathcal N(z;\theta,\tau I)
=
C(z)\mathcal N(Az+u;\theta,\rho I)
\label{eq:closure-kernel}
\end{equation}
for almost every $(z,\theta)$ with respect to Lebesgue measure. At the
fixed noise level $\tau$, the quantities $\rho$, $A$, and $u$ do not
depend on $z$ or $\theta$. Then there exist constants $c\in\R$, $a\in\R^d$, and $\kappa\in\R$
with $1+\kappa\tau>0$ such that
\begin{equation}
\log\phi(\theta)
=
c+a^\top\theta-\frac{\kappa}{2}\lVert\theta\rVert^2
\end{equation}
for almost every $\theta$. Moreover,
\begin{equation}
A
=
\frac{\rho}{\tau}I
=
\frac{1}{1+\kappa\tau}I,
\qquad
u=\rho a,
\qquad
\rho=\frac{\tau}{1+\kappa\tau}.
\end{equation}
Conversely, every such $\phi$ satisfies
Eq.~\eqref{eq:closure-kernel} with these parameters and the prefactor
in Eq.~\eqref{eq:c-factor-app}.

\subsection{Necessity}

\begin{proof}
Taking logarithms in Eq.~\eqref{eq:closure-kernel} and isolating
$\log\phi(\theta)$ gives
\begin{align}
\log\phi(\theta)
={}&
\log C(z)
+\frac d2\log\frac{\tau}{\rho}
-\frac{1}{2\rho}\lVert\theta-Az-u\rVert^2
+\frac{1}{2\tau}\lVert\theta-z\rVert^2.
\label{eq:closure-expand}
\end{align}
The term $d\log(\tau/\rho)/2$ comes from the two Gaussian
normalizers. Expanding the squared norms gives
\begin{align}
\log\phi(\theta)
={}&
-\frac12\left(\frac1\rho-\frac1\tau\right)
\lVert\theta\rVert^2
+\theta^\top
\left[
\left(\frac A\rho-\frac I\tau\right)z
+\frac u\rho
\right]
\nonumber\\
&+
\underbrace{
\log C(z)
+\frac d2\log\frac\tau\rho
-\frac{\lVert Az+u\rVert^2}{2\rho}
+\frac{\lVert z\rVert^2}{2\tau}
}_{F(z)}.
\label{eq:closure-coefficients-app}
\end{align}
The left side is independent of $z$. To make the corresponding
coefficient comparison rigorous under the almost-everywhere identity,
Fubini's theorem gives a full-measure set of $z$ values for which
Eq.~\eqref{eq:closure-coefficients-app} holds for almost every
$\theta$. For any two such values of $z$, subtracting the two
identities gives a polynomial in $\theta$ that vanishes almost
everywhere. Hence all of its coefficients are zero. The coefficients
of Eq.~\eqref{eq:closure-coefficients-app} must therefore be
independent of $z$ almost everywhere.

In particular, there is a constant vector $a\in\R^d$ such that
\begin{equation}
\left(\frac A\rho-\frac I\tau\right)z+\frac u\rho
=
a
\qquad
\text{for almost every }z.
\end{equation}
An affine function of $z$ can be constant almost everywhere only if
its linear part vanishes. Hence
\begin{equation}
\frac A\rho-\frac I\tau=0,
\qquad
A=\frac\rho\tau I,
\qquad
a=\frac u\rho.
\end{equation}
The quadratic coefficient is already independent of $z$ and has the
required isotropic form. Define
\begin{equation}
\kappa
=
\frac1\rho-\frac1\tau.
\end{equation}
Then the quadratic term is
$-\kappa\lVert\theta\rVert^2/2$. The remaining term $F(z)$ must
likewise equal a constant $c$ almost everywhere. Therefore
Eq.~\eqref{eq:closure-coefficients-app} reduces to
\begin{equation}
\log\phi(\theta)
=
c+a^\top\theta
-\frac\kappa2\lVert\theta\rVert^2
\end{equation}
almost everywhere, proving the necessary form of $\phi$. The definition of $\kappa$ also determines the noise transform:
\begin{equation}
1+\kappa\tau
=
\frac\tau\rho
>0,
\qquad
\rho
=
\frac\tau{1+\kappa\tau},
\qquad
Az+u
=
\frac{z+\tau a}{1+\kappa\tau}.
\end{equation}
Using $F(z)=c$ and solving for $C(z)$ gives
\begin{align}
\log C(z)
&=
c-\frac d2\log\frac\tau\rho
+\frac{\lVert Az+u\rVert^2}{2\rho}
-\frac{\lVert z\rVert^2}{2\tau}
\nonumber\\
&=
c-\frac d2\log(1+\kappa\tau)
+\frac{\lVert z+\tau a\rVert^2}
{2\tau(1+\kappa\tau)}
-\frac{\lVert z\rVert^2}{2\tau}.
\end{align}
This is exactly Eq.~\eqref{eq:c-factor-app}. The remaining dependence
on $z$ is independent of $\theta$ and is absorbed into the prefactor
$C(z)$.
\end{proof}

\subsection{Sufficiency and scope}

Conversely, take
$\phi(\theta)=\exp(c+a^\top\theta-\kappa\lVert\theta\rVert^2/2)$
with $D_\tau=1+\kappa\tau>0$. Set
$\rho=\tau/D_\tau$, $A=I/D_\tau$, $u=\rho a$, and $C=C_\tau$.
Then
\[
Az+u
=
\frac{z+\tau a}{D_\tau}
=
m_\tau(z),
\]
so Eq.~\eqref{eq:kernel-factor-app} is exactly
Eq.~\eqref{eq:closure-kernel}. This proves sufficiency and completes
Theorem~\ref{thm:characterization}.

The necessity result already holds at a single fixed $\tau>0$: the
kernel identity need not hold over an interval of noise levels to force
the exponential-quadratic form. Across a diffusion schedule, the factor
$\phi$ is fixed, so its coefficients $(a,\kappa,c)$ do not depend on
$\tau$. Only the induced kernel transform changes with the noise level:
\[
A_\tau=\frac{1}{1+\kappa\tau}I,
\qquad
u_\tau=\frac{\tau a}{1+\kappa\tau},
\qquad
\rho_\tau=\frac{\tau}{1+\kappa\tau}.
\]
The identity therefore applies along a schedule provided
$D_\tau=1+\kappa\tau>0$ at every queried noise level. When $\kappa=0$,
these reduce to $\rho_\tau=\tau$, $A_\tau=I$, and $u_\tau=\tau a$.

Because the identity holds at the kernel level, integrating it against any
base posterior that satisfies the regularity conditions of
Appendix~\ref{app:notation} gives the same score transport. The characterization
concerns affine state maps with scalar isotropic variance as specified
in Eq.~\eqref{eq:closure-kernel}. It makes no statement about algorithms
that do not admit this kernel equality.

\section{Static mixtures and dynamic responsibilities}
\label{app:mixture-proof}

Fix $x$ and a finite positive representation
$r=\sum_{k=1}^K b_k\phi_k$ with $b_k>0$ and
$0<Z_k(x)<\infty$. The following calculations distinguish the
observation-specific component weights $\alpha_k(x)$ from the
noisy-state-dependent component responsibilities
$\gamma_{k,\tau}(z,x)$.

\subsection{Normalizing the positive sum}

Recall that
$q_k(\theta\mid x)=p(\theta\mid x)\phi_k(\theta)/Z_k(x)$.
Each component is normalized because
\begin{equation}
\int q_k(\theta\mid x)\,d\theta
=
\frac{1}{Z_k(x)}
\int p(\theta\mid x)\phi_k(\theta)\,d\theta
=
1.
\end{equation}
Substitution into the unnormalized target gives
\begin{align}
p(\theta\mid x)r(\theta)
&=
p(\theta\mid x)\sum_k b_k\phi_k(\theta)
\nonumber\\
&=
\sum_k b_kZ_k(x)q_k(\theta\mid x).
\end{align}
Integrating both sides with respect to $\theta$ determines the total
normalizer:
\begin{equation}
Z_r(x)
=
\int p(\theta\mid x)r(\theta)\,d\theta
=
\sum_k b_kZ_k(x)
\int q_k(\theta\mid x)\,d\theta
=
\sum_k b_kZ_k(x).
\end{equation}
Dividing by $Z_r(x)$ gives
\begin{equation}
q(\theta\mid x)
=
\sum_k\alpha_k(x)q_k(\theta\mid x),
\qquad
\alpha_k(x)
=
\frac{b_kZ_k(x)}{Z_r(x)}.
\end{equation}
The weights are positive and sum to one. Their dependence on $x$
comes from the expectations $Z_k(x)$ under the base posterior. This
proves Eq.~\eqref{eq:posterior-mixture}.

\subsection{Why the same weights persist under Gaussian smoothing}

The noisy component density is
$q_{k,\tau}(z\mid x)
=
\int\mathcal N(z;\theta,\tau I)q_k(\theta\mid x)\,d\theta$.
Since the mixture is finite, we can interchange its sum and the
smoothing integral:
\begin{align}
q_\tau(z\mid x)
&=
\int
\mathcal N(z;\theta,\tau I)
\sum_k\alpha_k(x)q_k(\theta\mid x)\,d\theta
\nonumber\\
&=
\sum_k\alpha_k(x)
\int
\mathcal N(z;\theta,\tau I)
q_k(\theta\mid x)\,d\theta
\nonumber\\
&=
\sum_k\alpha_k(x)q_{k,\tau}(z\mid x).
\end{align}
Each $q_{k,\tau}$ is normalized, so this is a mixture with the same
weights, not an unnormalized sum requiring new weights at each noise
level.

Introduce a latent label $J$ with
$\Pr(J=k\mid x)=\alpha_k(x)$ and draw
$\theta\mid J=k,x$ from $q_k$. After applying Gaussian corruption to
$\theta$, the joint density is
\begin{equation}
\Pr(J=k,z_\tau\in dz\mid x)
=
\alpha_k(x)q_{k,\tau}(z\mid x)\,dz.
\label{eq:label-noisy-joint-app}
\end{equation}
Marginalizing $z$ leaves
$\Pr(J=k\mid x)=\alpha_k(x)$ at every noise level. Conditioning on a
particular $z$, however, gives
\begin{equation}
\Pr(J=k\mid z_\tau=z,x)
=
\frac{\alpha_k(x)q_{k,\tau}(z\mid x)}
{q_\tau(z\mid x)}
=
\gamma_{k,\tau}(z,x).
\label{eq:label-responsibility-app}
\end{equation}
Here $\alpha_k(x)$ are the static component weights, whereas
$\gamma_{k,\tau}(z,x)$ are the noisy-state-dependent component
responsibilities.

\subsection{Correctness of sampling the component once}

For any measurable set $B\subseteq\R^d$, the two-stage construction in
Eq.~\eqref{eq:static-component-sampling} satisfies
\begin{align}
\Pr(\theta\in B\mid x)
&=
\sum_k
\Pr(J=k\mid x)
\Pr(\theta\in B\mid J=k,x)
\nonumber\\
&=
\sum_k
\alpha_k(x)
\int_B q_k(\theta\mid x)\,d\theta
=
\int_B q(\theta\mid x)\,d\theta.
\end{align}
Hence sampling one label followed by one draw from its component
produces the target mixture.

Reverse diffusion supplies the conditional component draw. At a
terminal noise level $\tau_T=\sigma(t_{\max})^2$ (Appendix~\ref{app:terminal-init}), the ideal construction draws $J$ with
probabilities $\alpha(x)$ and then initializes $z_T$ from
$q_{J,\tau_T}(\cdot\mid x)$. Conditional on $J=k$, exact reverse
dynamics driven by $s_{q_k}$ recover $q_{k,\tau}$ at each noise level
and $q_k$ at the clean endpoint. Since the component label is drawn
with probability $\alpha_k(x)$ and then kept fixed, marginalizing over
$J$ recovers $q_\tau=\sum_k\alpha_k(x)q_{k,\tau}$ throughout the
reverse process, and hence $q$ at the endpoint. Proposition~\ref{prop:transport} supplies $s_{q_k}$ from one
transformed base-score query.

The exact mixture algebra also identifies what the implementation
produces. Let $\widehat q_k$ be the endpoint law of the finite component
sampler, with fresh sampling randomness independent of the weight
estimates. If labels are drawn with estimated weights
$\widehat\alpha$, then, conditional on those estimates, the output law
is exactly
\begin{equation}
\widehat q
=
\sum_k\widehat\alpha_k\widehat q_k.
\label{eq:implemented-mixture-app}
\end{equation}
The approximation to the Bayesian target enters through the weights and
component endpoint laws, not through the act of drawing a static label.
Appendix~\ref{app:error-proof} quantifies their contributions
separately.

\subsection{Deriving the marginal mixture score}

By the definition of the score,
$\nabla_z q_{k,\tau}=q_{k,\tau}s_{q_k}$. Differentiating the noisy
density mixture therefore gives
\begin{align}
s_q(z,\tau,x)
&=
\frac{\nabla_z q_\tau(z\mid x)}
{q_\tau(z\mid x)}
\nonumber\\
&=
\frac{
\sum_k\alpha_k(x)\nabla_z q_{k,\tau}(z\mid x)
}{
q_\tau(z\mid x)
}
\nonumber\\
&=
\sum_k
\frac{
\alpha_k(x)q_{k,\tau}(z\mid x)
}{
q_\tau(z\mid x)
}
s_{q_k}(z,\tau,x)
\nonumber\\
&=
\sum_k
\gamma_{k,\tau}(z,x)s_{q_k}(z,\tau,x).
\label{eq:dynamic-resp}
\end{align}
Together with Eq.~\eqref{eq:label-responsibility-app}, this gives the
marginal mixture score. It also shows why replacing
$\gamma_{k,\tau}(z,x)$ by the static weights $\alpha_k(x)$ would
generally give a different score.

To expose the density information required by $\gamma$, apply
Eq.~\eqref{eq:normalized-component-app} to component $k$:
\begin{equation}
q_{k,\tau}(z\mid x)
=
\frac{C_{k,\tau}(z)}{Z_k(x)}
p_{\rho_{k,\tau}}
\bigl(m_{k,\tau}(z)\mid x\bigr).
\end{equation}
Multiplying by
$\alpha_k(x)=b_kZ_k(x)/Z_r(x)$ cancels $Z_k(x)$:
\begin{equation}
\alpha_k(x)q_{k,\tau}(z\mid x)
=
\frac{1}{Z_r(x)}
\underbrace{
b_kC_{k,\tau}(z)
p_{\rho_{k,\tau}}
\bigl(m_{k,\tau}(z)\mid x\bigr)
}_{U_{k,\tau}(z,x)}.
\end{equation}
The common $Z_r(x)$ also cancels when normalizing over components, so
\begin{equation}
\gamma_{k,\tau}(z,x)
=
\frac{
U_{k,\tau}(z,x)
}{
\sum_j U_{j,\tau}(z,x)
}.
\end{equation}
Although the normalizing constants $Z_k(x)$ cancel, evaluating
$\gamma_{k,\tau}$ still requires normalized base-density values at
different transformed states and, in general, different noise levels.
For two components,
\begin{align}
\log\frac{U_{k,\tau}(z,x)}{U_{j,\tau}(z,x)}
={}&
\log\frac{
b_kC_{k,\tau}(z)
}{
b_jC_{j,\tau}(z)
}
\nonumber\\
&+
\log p_{\rho_{k,\tau}}
\bigl(m_{k,\tau}(z)\mid x\bigr)
-
\log p_{\rho_{j,\tau}}
\bigl(m_{j,\tau}(z)\mid x\bigr).
\end{align}

The first term is available analytically from the factor parameters.
The remaining terms require log-density values at the transformed
queries. A score evaluation provides only the local gradient
$\nabla\log p_\rho$, not the corresponding log-density value.
Evaluating the dynamic responsibilities would therefore require
additional machinery to recover or estimate the relevant relative
densities. Static component sampling avoids this step: it requires
only the observation-level weights $\alpha_k(x)$ and keeps the selected
component fixed throughout the reverse trajectory.

\subsection{Factor scaling and reuse across prior coefficients}

A positive rescaling $\phi_k'=\zeta_k\phi_k$, accompanied by
$b_k'=b_k/\zeta_k$, leaves the represented ratio unchanged. It gives
$Z_k'=\zeta_kZ_k$ and
\begin{equation}
q_k'
=
\frac{p\,\zeta_k\phi_k}{\zeta_kZ_k}
=
q_k,
\qquad
b_k'Z_k'
=
b_kZ_k.
\end{equation}
Neither the component distribution nor its component weight
depends on the factor-scaling convention. Any fixed factor scale
may be absorbed into $b_k$.

For a fixed observation and unchanged factors,
$Z_k(x)=\E_{p(\theta\mid x)}[\phi_k(\theta)]$ does not depend on
$b_k$. If only the coefficient vector changes, the new component weights
are
\begin{equation}
\alpha_k'(x)
=
\frac{b_k'Z_k(x)}
{\sum_j b_j'Z_j(x)},
\end{equation}
while the component score transforms remain unchanged. The same reuse
holds for estimated normalizers when the observation, component set,
and weight-estimation protocol are held fixed. A new observation
generally requires new normalizers. If the supplied representation
approximates $r$ by $\widetilde r$, all mixture identities in this
section apply to the posterior induced by $\widetilde r$. Its
discrepancy from the intended target is handled in
Appendix~\ref{app:error-proof}.

\subsection{Constructing factors and coefficients from a known prior mixture}
\label{app:factor-coefficients}

The coefficients $b_k$ belong to the supplied prior-ratio
representation. They are distinct from the observation-dependent
component weights $\alpha_k(x)$. Suppose the target prior has the known
mixture representation
\begin{equation}
\pi_{\rm new}(\theta)
=
\sum_k \omega_k \nu_k(\theta),
\qquad
\omega_k>0,
\qquad
\sum_k \omega_k=1.
\end{equation}
On the support of the training prior, division by $\pi_{\rm tr}$ gives
\begin{equation}
r(\theta)
=
\sum_k
\omega_k
\frac{\nu_k(\theta)}{\pi_{\rm tr}(\theta)}.
\end{equation}
If each ratio $\nu_k/\pi_{\rm tr}$ has the form in
Eq.~\eqref{eq:transport-factor}, we may take
\begin{equation}
\phi_k(\theta)
=
\frac{\nu_k(\theta)}{\pi_{\rm tr}(\theta)},
\qquad
b_k=\omega_k.
\end{equation}
This is an analytic construction, requiring no posterior samples or
observation-specific estimation. A mixture of arbitrary prior densities
need not satisfy this factor condition.

For two Gaussian priors,
$\pi_{\rm tr}=\mathcal N(\mu_0,\Sigma_0)$ and
$\nu=\mathcal N(\mu_1,\Sigma_1)$, their log ratio has the form
\begin{align}
\log\frac{\nu(\theta)}{\pi_{\rm tr}(\theta)}
={}&
c
+
\left(
\Sigma_1^{-1}\mu_1
-
\Sigma_0^{-1}\mu_0
\right)^\top\theta
\nonumber\\
&-
\frac12
\theta^\top
\left(
\Sigma_1^{-1}
-
\Sigma_0^{-1}
\right)
\theta,
\label{eq:general-gaussian-ratio}
\end{align}
where $c$ is independent of $\theta$. Hence this ratio has the
single-factor form in Eq.~\eqref{eq:transport-factor} exactly when
\begin{equation}
\Sigma_1^{-1}-\Sigma_0^{-1}
=
\kappa I.
\end{equation}
The means may therefore change arbitrarily. The restriction is that the
change in quadratic curvature is isotropic. The isotropic Gaussian case
below is a common special case.

Let
$\pi_{\rm tr}=\mathcal N(\mu_0,v_0I)$ and
$\nu_k=\mathcal N(\mu_k,v_kI)$, with positive scalar variances.
Expanding their log ratio gives
\begin{align}
\log\frac{\nu_k(\theta)}{\pi_{\rm tr}(\theta)}
&=
\frac d2\log\frac{v_0}{v_k}
-
\frac{\lVert\theta-\mu_k\rVert^2}{2v_k}
+
\frac{\lVert\theta-\mu_0\rVert^2}{2v_0}
\nonumber\\
&=
c_k
+
a_k^\top\theta
-
\frac{\kappa_k}{2}\lVert\theta\rVert^2,
\end{align}
where
\begin{align}
\kappa_k
&=
v_k^{-1}-v_0^{-1},
\qquad
a_k
=
\mu_k/v_k-\mu_0/v_0,
\\
c_k
&=
\frac d2\log\frac{v_0}{v_k}
-
\frac{\lVert\mu_k\rVert^2}{2v_k}
+
\frac{\lVert\mu_0\rVert^2}{2v_0}.
\end{align}
If the constant $c_k$ is retained in $\phi_k$, then
$b_k=\omega_k$. The implementation instead uses a normalized Gaussian
factor when $\kappa_k>0$. Completing the square gives
\begin{equation}
\frac{\nu_k(\theta)}{\pi_{\rm tr}(\theta)}
=
\Gamma_k
\mathcal N\!\left(
\theta;
\frac{a_k}{\kappa_k},
\kappa_k^{-1}I
\right),
\end{equation}
with
\begin{equation}
\Gamma_k
=
\left(
\frac{2\pi}{\kappa_k}
\right)^{d/2}
\exp\!\left(
c_k
+
\frac{\lVert a_k\rVert^2}{2\kappa_k}
\right).
\end{equation}
Under this convention, $\phi_k$ is the normalized Gaussian and
$b_k=\omega_k\Gamma_k$. In general the coefficient differs from the prior
mixture weight $\omega_k$, but it is still determined analytically from
the supplied priors. For $\kappa_k\le0$, the exponential-quadratic form
remains the relevant representation rather than a normalized Gaussian
factor, subject to posterior normalizability and the sampling-schedule
condition.

For a uniform training prior on a box $B$ of volume $|B|$,
$\pi_{\rm tr}(\theta)=|B|^{-1}$ inside $B$. Hence
\[
r(\theta)
=
|B|\,\pi_{\rm new}(\theta),
\qquad \theta\in B.
\]
If the target prior is a mixture of isotropic Gaussians restricted to $B$ and
normalized as a whole, both $|B|$ and the truncation normalizer are
common to all components. They therefore cancel in posterior
normalization, so the Gaussian kernels can be used directly as
$\phi_k$ with their supplied mixture coefficients as $b_k$, up to a
common scale. This representation holds only on $B$ and does not
extend the training support. Appendix~\ref{app:component-approx}
gives the corresponding construction when the Gaussian mixture itself
is obtained by approximation.

\section{End-to-end error decomposition}
\label{app:error-proof}

\subsection{Mixture error bound}

Fix an observation and let $r$ be the intended prior ratio and
$\widetilde r$ its positive-factor approximation. The exact target is
$q^{r,\star}$. The positive-factor approximation induces
\[
q^{\widetilde r,\star}
=
\sum_k \alpha_k^\star q_k^\star,
\qquad
\alpha_k^\star(x)
=
\frac{b_kZ_k(x)}{\sum_j b_jZ_j(x)},
\]
where $q_k^\star$ denotes the exact posterior component induced by
$\phi_k$. For a fixed weight-estimation protocol, let $\Pp_0^{(R)}$ be its base endpoint law after $R$ transitions (Appendix~\ref{app:path}). Assume its factor expectations are finite and positive, and define
\begin{equation}
\widetilde Z_k(x)=\E_{\Pp_0^{(R)}}[\phi_k(\theta)],
\qquad
\alpha_k^P(x)=\frac{b_k\widetilde Z_k(x)}{\sum_j b_j\widetilde Z_j(x)}.
\label{eq:model-normalizers}
\end{equation}
Let $\widehat\alpha$ be the estimated weights and $\widehat q_k$ the
endpoint law of the final component sampler. The final component
sampler may use a different discretization and corrector configuration
from the estimator's proposal paths.
Conditional on the weight estimates, the output is $\widehat q=\sum_k\widehat\alpha_k\widehat q_k$, and
\begin{align}
\TV(q^{r,\star},\widehat q)
\leq{}&\underbrace{\TV(q^{r,\star},q^{\widetilde r,\star})}_{\text{prior-ratio approximation}}
+\underbrace{\tfrac12\lVert\widehat\alpha-\alpha^\star\rVert_1}_{\text{component weights}}
\nonumber\\[-1mm]
&+\underbrace{\sum_k\widehat\alpha_k\TV(q_k^\star,\widehat q_k)}_{\text{component sampling}},
\label{eq:e2e-bound}
\end{align}
where
\begin{equation}
\lVert\widehat\alpha-\alpha^\star\rVert_1
\leq
\underbrace{\lVert\widehat\alpha-\alpha^P\rVert_1}_{\text{weight estimator}}
+\underbrace{\lVert\alpha^P-\alpha^\star\rVert_1}_{\text{base-sampler discrepancy}}.
\label{eq:weight-split}
\end{equation}
The effective sample size (ESS) is an overlap diagnostic for the estimator, not a bound on either weight-error term. Assessing the base-sampler discrepancy requires an exact or independently validated posterior reference.

Insert the target induced by the approximated prior ratio and apply the triangle inequality:
\begin{equation}
\TV(q^{r,\star},\widehat q)
\le \TV(q^{r,\star},q^{\widetilde r,\star})
+\TV(q^{\widetilde r,\star},\widehat q).
\end{equation}
For the second term, insert the intermediate mixture
$\bar q=\sum_k\widehat\alpha_kq_k^\star$:
\begin{align}
\TV(q^{\widetilde r,\star},\widehat q)
&\le \TV\!\left(\sum_k\alpha_k^\star q_k^\star,
\sum_k\widehat\alpha_kq_k^\star\right)
+\TV\!\left(\sum_k\widehat\alpha_kq_k^\star,
\sum_k\widehat\alpha_k\widehat q_k\right)\\
&\le \frac12\lVert\widehat\alpha-\alpha^\star\rVert_1
+\sum_k\widehat\alpha_k\TV(q_k^\star,\widehat q_k).
\end{align}
Equation~\eqref{eq:weight-split} is the triangle inequality on the simplex.

\subsection{Binary component-weight sensitivity}

For $q_\alpha=\alpha q_1+(1-\alpha)q_2$,
\begin{equation}
q_{\widehat\alpha}-q_\alpha=(\widehat\alpha-\alpha)(q_1-q_2).
\end{equation}
Taking half the $L^1$ norm gives
\begin{equation}
\TV(q_{\widehat\alpha},q_\alpha)
=|\widehat\alpha-\alpha|\,\TV(q_1,q_2).
\label{eq:binary-tv}
\end{equation}
A given absolute weight error therefore matters most when the component
distributions are well separated and has little effect when they strongly
overlap.
If
\[
\delta
=
\log\frac{\widehat\alpha}{1-\widehat\alpha}
-
\log\frac{\alpha}{1-\alpha}
\]
is the log-odds error, then the logistic map has derivative at most
$1/4$, so
\begin{equation}
|\widehat\alpha-\alpha|
\le
|\delta|/4.
\end{equation}
For small $\delta$, the first-order response is
$|\widehat\alpha-\alpha|\approx\alpha(1-\alpha)|\delta|$, so for a small
log-odds error the weight itself is most sensitive near $\alpha=1/2$.

\subsection{Approximate-ratio posterior TV}

Let $q^{r,\star}=pr/Z_r$ and $q^{\widetilde r,\star}=p\widetilde r/Z_{\widetilde r}$ for a fixed observation, with $0<Z_r,Z_{\widetilde r}<\infty$, and define $\varepsilon=\E_p|r-\widetilde r|<\infty$. Since $|Z_r-Z_{\widetilde r}|\le\varepsilon$,
\begin{align}
\lVert q^{r,\star}-q^{\widetilde r,\star}\rVert_1
&\le \frac{\E_p|r-\widetilde r|}{Z_r}
+Z_{\widetilde r}\left|\frac1{Z_r}-\frac1{Z_{\widetilde r}}\right|\\
&\le \frac{2\varepsilon}{Z_r}.
\end{align}
Therefore
$\TV(q^{r,\star},q^{\widetilde r,\star})\le\varepsilon/Z_r$. The expectation defining
$\varepsilon$ must be under the observation-specific base posterior.
An approximation error measured only under the training prior does not
by itself control every observation-specific target posterior.
\section{Normalizer-estimation details}
\label{app:path}

The path-space estimator estimates the finite-chain normalizers
$\widetilde Z_k(x)$ by a change of measure on reverse trajectories.
We first derive the path-space identity and its overlap diagnostic,
then describe the alternative direct estimator, and finally summarize
the complete inference procedure. Sampling budgets and aggregation are
specified in Appendix~\ref{app:sampling-settings}.

\subsection{Finite-chain path-space identity}

Let $R$ denote the number of transitions in the normalizer estimator,
independently of the output sampler's $(N,L)$ configuration. Write the
estimator time grid as
$t_R>\cdots>t_0$, with $\tau_n=\sigma(t_n)^2$, and a reverse trajectory
as
\[
\omega=(z_R,z_{R-1},\ldots,z_0).
\]
Suppressing the fixed observation, the estimator's base and component
proposal path laws are
\begin{align}
\Pp^{(R)}(d\omega)
&=
p_{\rm term}(z_R)
\prod_{n=0}^{R-1}
P_n(z_n\mid z_{n+1})\,d\omega,
\\
\Qp_k^{(R)}(d\omega)
&=
q_{k,\rm term}(z_R)
\prod_{n=0}^{R-1}
Q_{k,n}(z_n\mid z_{n+1})\,d\omega.
\end{align}
Here $p_{\rm term}$ and $q_{k,\rm term}$ denote the terminal
distributions used by the two chains. When the terminal distributions
and Markov transitions have densities with common support, the
trajectory likelihood ratio is
\begin{equation}
\log\frac{d\Pp^{(R)}}{d\Qp_k^{(R)}}
=
\log\frac{p_{\rm term}(z_R)}{q_{k,\rm term}(z_R)}
+
\sum_{n=0}^{R-1}
\log
\frac{P_n(z_n\mid z_{n+1})}
{Q_{k,n}(z_n\mid z_{n+1})}.
\label{eq:path-rn}
\end{equation}
If the two chains use the same terminal distribution, the first term
is identically zero. Equation~\eqref{eq:path-rn} is the formal version
of the trajectory likelihood ratio $P/Q_k$ used in
Section~\ref{sec:weights}.

Change of measure then gives
\begin{equation}
\widetilde Z_k(x)
=
\int
\phi_k(z_0)\Pp^{(R)}(d\omega)
=
\E_{\Qp_k^{(R)}}\!\left[
\phi_k(z_0)
\frac{d\Pp^{(R)}}{d\Qp_k^{(R)}}
\right].
\label{eq:path-z-implemented}
\end{equation}
Trajectories can be drawn from the component proposal
$\Qp_k^{(R)}$, and the likelihood ratio corrects the expectation back
to the base path law $\Pp^{(R)}$. This is Eq.~\eqref{eq:path-z} with
the trajectory density ratio written in measure notation.

\paragraph{Implemented transitions.}
Both path laws use Gaussian Euler--Maruyama transitions with the same
covariance and differ only in their score-driven means. For the transition
from $t_{n+1}$ to $t_n$, define
\begin{equation}
\Delta_n
=
(t_{n+1}-t_n)\,g(t_{n+1})^2,
\qquad
g(t)^2
=
2\sigma(t)\dot\sigma(t).
\end{equation}
An Euler--Maruyama step of
Eq.~\eqref{eq:background-reverse-sde} for the base chain is
\begin{equation}
z_n
=
z_{n+1}
+
\Delta_n\,
\widehat s_p(z_{n+1},\tau_{n+1},x)
+
\sqrt{\Delta_n}\,\epsilon_n,
\qquad
\epsilon_n\sim\mathcal N(0,I).
\label{eq:path-transition}
\end{equation}
Hence $P_n$ is Gaussian with mean
\[
z_{n+1}
+
\Delta_n\widehat s_p(z_{n+1},\tau_{n+1},x)
\]
and covariance $\Delta_nI$.

Let $\widehat s_{q_k}$ denote the transported component-score estimate
obtained from Eq.~\eqref{eq:exact-score} by replacing $s_p$ with
$\widehat s_p$. The component proposal $Q_{k,n}$ uses the same time
grid and covariance, with mean
\[
z_{n+1}
+
\Delta_n\widehat s_{q_k}(z_{n+1},\tau_{n+1},x).
\]
For a transition generated under $Q_{k,n}$,
\begin{equation}
z_n
=
z_{n+1}
+
\Delta_n\widehat s_{q_k}(z_{n+1},\tau_{n+1},x)
+
\sqrt{\Delta_n}\,\epsilon_n,
\end{equation}
the corresponding log-density ratio is
\begin{equation}
\log
\frac{P_n(z_n\mid z_{n+1})}
{Q_{k,n}(z_n\mid z_{n+1})}
=
\sqrt{\Delta_n}\,
\epsilon_n^\top
\left(
\widehat s_p-\widehat s_{q_k}
\right)
-
\frac{\Delta_n}{2}
\left\Vert
\widehat s_p-\widehat s_{q_k}
\right\Vert^2,
\label{eq:path-transition-ratio}
\end{equation}
where both scores are evaluated at
$(z_{n+1},\tau_{n+1},x)$. The normalizer estimator is defined on this
corrector-free chain. Langevin corrector updates are used only by the final
output sampler.

\paragraph{Score evaluations per transition.}
Computing the proposal transition requires
$\widehat s_{q_k}(z_{n+1},\tau_{n+1},x)$, which uses one frozen-model
query at the transformed state and noise level
\[
\left(
m_{k,\tau_{n+1}}(z_{n+1}),
\rho_{k,\tau_{n+1}}
\right).
\]
Evaluating the base transition density in
Eq.~\eqref{eq:path-transition-ratio} additionally requires
$\widehat s_p(z_{n+1},\tau_{n+1},x)$. The implemented path-space
estimator therefore uses two frozen-model score evaluations per
transition. This additional evaluation applies only to weight
estimation. Final posterior sampling uses one frozen-model query per
transported-score evaluation, as reported in
Appendix~\ref{app:compute-details}.

To characterize the overlap relevant to this importance sampler,
define the endpoint-tilted path law
\begin{equation}
\Pp_k^\phi(d\omega)
=
\frac{\phi_k(z_0)}
{\widetilde Z_k(x)}
\Pp^{(R)}(d\omega).
\end{equation}
The normalized importance weight is
\[
\bar w_k
=
\frac{d\Pp_k^\phi}{d\Qp_k^{(R)}}.
\]
For probability laws $P\ll Q$, define the chi-squared divergence as
\begin{equation}
\chi^2(P\Vert Q)
=
\E_Q\!\left[
\left(
\frac{dP}{dQ}-1
\right)^2
\right].
\end{equation}
Since
$\E_{\Qp_k^{(R)}}[\bar w_k]=1$,
\begin{equation}
\E_{\Qp_k^{(R)}}[\bar w_k^2]
=
1+
\chi^2
\left(
\Pp_k^\phi
\Vert
\Qp_k^{(R)}
\right).
\end{equation}
For independent proposal trajectories $\omega_1,\ldots,\omega_{n_{\rm P}}\sim\Qp_k^{(R)}$, the
empirical effective sample size is
$\ESS_k=(\sum_i\bar w_k(\omega_i))^2/\sum_i\bar w_k(\omega_i)^2$. With a finite second
moment, $\ESS_k/n_{\rm P}$ therefore converges to
the reciprocal of this quantity
\citep{agapiou2017importance}. A larger ESS indicates that the
proposal $\Qp_k^{(R)}$ has better overlap with the endpoint-tilted path
law and that the importance weights are less concentrated. This
diagnostic concerns the finite-chain importance-sampling problem. It
does not measure the discrepancy between the base endpoint law
$\Pp_0^{(R)}$ and the exact Bayesian posterior $p(\theta\mid x)$.
The proposal path law used for normalizer estimation may also differ
from the path law used for final posterior sampling.

\subsection{Direct estimation and overlap}
\label{app:direct-estimation}

Given independent samples $\theta_n$ from the fixed base-sampling
protocol, \direct{} estimates the corresponding factor expectation by
\begin{equation}
\widehat Z_k^{\rm dir}
=
\frac{1}{n_{\rm B}}
\sum_{n=1}^{n_{\rm B}}
\phi_k(\theta_n).
\label{eq:direct}
\end{equation}
Here $n_{\rm B}$ is the bank size. Once the bank has been sampled, it
can be reused for all $K$ factors at $O(n_{\rm B}K)$ factor-evaluation
cost. The estimate can be unstable when the bank poorly covers regions
where $\phi_k$ is large.

For the base samples in Eq.~\eqref{eq:direct}, write
$\phi_{kn}=\phi_k(\theta_n)$. The usual effective-sample-size diagnostic
\citep{agapiou2017importance} is
\begin{equation}
\ESS_k^{\rm dir}
=
\frac{\left(\sum_n \phi_{kn}\right)^2}
{\sum_n \phi_{kn}^2}.
\end{equation}
It equals $n_{\rm B}$ when all samples contribute equally and
approaches one when a single sample dominates. A low ESS therefore
indicates that the estimate is dominated by only a small part of the
bank. A high ESS means that the observed contributions are spread more
evenly across the sampled points, which is favorable for stability.
However, ESS only describes the samples that are actually present in
the bank. It cannot detect an important region that the entire bank
failed to sample, and it does not measure the discrepancy between the
base-sampling endpoint distribution and the exact Bayesian posterior.

\subsection{Complete Spectra inference procedure}

\begin{paperalgorithm}[Spectra inference for one observation]
\label{alg:spectra}
\textbf{Input:} frozen score $\widehat s_p$, observation $x$,
supplied positive representation
$\{b_k,\phi_k\}_{k=1}^K$ of $r$ or $\widetilde r$, output count $M$,
and estimator and sampler settings.
\par\smallskip

\begin{tabular}{@{}r@{\quad}p{.93\linewidth}@{}}
1 &
Check $b_k>0$ and the factor form in
Eq.~\eqref{eq:transport-factor}. At every noise level used by the output
sampler, and by the normalizer estimator when $K>1$, check
$D_{k,\tau}>0$ and that $\rho_{k,\tau}$ lies in the noise range supported
by the learned score model.
\\

2 &
\textbf{if} $K=1$: set $\widehat\alpha_1=1$.
\\

3 &
\textbf{else}: estimate $\widehat Z_k$ using the chosen normalizer
estimator and set
$\widehat\alpha_k=
\dfrac{b_k\widehat Z_k}
{\sum_j b_j\widehat Z_j}$.
\\[1mm]

4 &
\textbf{for} $m=1,\ldots,M$:
\\

5 &
\quad Draw $J$ once with
$\Pr(J=k)=\widehat\alpha_k$.
\\

6 &
\quad Draw $z_T$ from the shared base terminal Gaussian approximation
in Eq.~\eqref{eq:terminal-init}.
\\

7 &
\quad \textbf{for each} reverse step:
\\

8 &
\qquad Advance the configured reverse sampler using component $J$'s
transported score estimate, obtained from
Eq.~\eqref{eq:exact-score} with $s_p$ replaced by $\widehat s_p$
whenever a score evaluation is required.
\\

9 &
\quad Store the endpoint as $\theta^{(m)}$.
\\

10 &
\textbf{return} $\{\theta^{(m)}\}_{m=1}^{M}$.
\\
\end{tabular}
\end{paperalgorithm}
\section{Approximating correlated Gaussians with exponential--quadratic factors}
\label{app:component-approx}

A correlated Gaussian prior change generally does not satisfy the
single-factor condition in Appendix~\ref{app:factor-coefficients}.
This section gives one concrete way to obtain a positive-factor
approximation in that setting. We first write the correlated Gaussian
exactly as a continuous mixture of isotropic Gaussian kernels, then
replace the continuous mixture by a finite positive quadrature. Under
the training priors considered below, the resulting kernels induce
transportable exponential--quadratic factors.
Appendix~\ref{app:component-approx-results} uses this representation.

\subsection{Continuous mixture and positive quadrature}

Let $0<\lambda<\lambda_{\min}(\Sigma)$ and
$U\sim\mathcal N(\mu,\Sigma-\lambda I)$. Adding independent
$\mathcal N(0,\lambda I)$ noise to $U$ gives
$\mathcal N(\mu,\Sigma)$, hence
\begin{equation}
\mathcal N(\theta;\mu,\Sigma)
=
\E_U\!\left[
\mathcal N(\theta;U,\lambda I)
\right].
\label{eq:continuous-prior-mixture}
\end{equation}
Eq.~\eqref{eq:continuous-prior-mixture} writes the correlated Gaussian exactly as a
continuous mixture of isotropic Gaussian kernels with common covariance
$\lambda I$.

The remaining task is to replace the continuous distribution over
centers $U$ by finitely many centers with positive weights. Write
\[
\Sigma-\lambda I
=
V\operatorname{diag}(s_1^2,\ldots,s_d^2)V^\top.
\]
Then
\[
U
=
\mu
+
V\operatorname{diag}(s_i)\xi,
\qquad
\xi\sim\mathcal N(0,I).
\]
For each coordinate $i$, choose a positive Gauss--Hermite rule with
nodes $\xi_{i,j}$ and weights $\eta_{i,j}$ for the standard-normal
expectation, normalized so that $\sum_j\eta_{i,j}=1$. The tensor node
indexed by $k=(j_1,\ldots,j_d)$ has
\begin{equation}
u_k
=
\mu
+
V\operatorname{diag}(s_i)
(\xi_{1,j_1},\ldots,\xi_{d,j_d})^\top,
\qquad
\omega_k
=
\prod_{i=1}^d\eta_{i,j_i}.
\label{eq:quadrature-nodes-weights}
\end{equation}
Every $\omega_k$ is positive and $\sum_k\omega_k=1$. Replacing the
continuous expectation by this quadrature gives
\begin{equation}
\widetilde\pi_{\rm new}(\theta)
=
\sum_k
\omega_k
\mathcal N(\theta;u_k,\lambda I).
\label{eq:quadrature-prior}
\end{equation}
Equation~\eqref{eq:continuous-prior-mixture} is exact. The finite
quadrature in Eq.~\eqref{eq:quadrature-prior} is the approximation.

The implementation uses the probabilists' Gauss--Hermite rule, whose
raw weights integrate against $e^{-\xi^2/2}$. It normalizes the weights
along each coordinate before forming the tensor products in
Eq.~\eqref{eq:quadrature-nodes-weights}. The quadrature order along
axis $i$ is
\begin{equation}
n_i
=
\min\!\left(
n_{\max},
\left\lfloor
h\frac{s_i}{\sqrt\lambda}
\right\rfloor
+2
\right),
\qquad
K
=
\prod_i n_i,
\end{equation}
where $h>0$ controls the node density and $n_{\max}$ caps the
per-axis order. Wider principal axes receive more nodes. For
$n_{\max}\ge2$ every axis receives at least two nodes, so $K\ge2^d$ grows
exponentially with dimension, and path-space weight estimation needs separate
proposal trajectories for each component. We use this construction only in
low-dimensional demonstrations.

The bandwidth, node orders, centers, and weights depend only on the
specified prior and construction settings, not on the observation or
posterior samples.

\subsection{From prior quadrature to prior-ratio coefficients}

The quadrature above approximates the target prior, but Spectra
requires a representation of the prior ratio. Dividing by the training
prior gives
\begin{equation}
\widetilde r(\theta)
=
\sum_k
\omega_k
\frac{
\mathcal N(\theta;u_k,\lambda I)
}{
\pi_{\rm tr}(\theta)
}.
\end{equation}

For the uniform training prior used in the Two Moons example,
$\pi_{\rm tr}$ is constant inside its support $B$. Up to the common
scale discussed in Appendix~\ref{app:factor-coefficients}, the
implementation therefore uses
\begin{equation}
b_k
=
\omega_k,
\qquad
\phi_k(\theta)
=
\mathcal N(\theta;u_k,\lambda I),
\end{equation}
with exponential--quadratic parameters
\begin{equation}
\kappa_k
=
\lambda^{-1},
\qquad
a_k
=
u_k/\lambda,
\qquad
c_k
=
-\frac{\lVert u_k\rVert^2}{2\lambda}
-\frac d2\log(2\pi\lambda).
\end{equation}
If the Gaussian mixture is restricted to $B$ and normalized as a
whole, the additional normalization constant is common to all factors
and therefore cancels from the component weights. No
observation-specific fitting of $b_k$ is performed.

For an isotropic Gaussian training prior, division by
$\pi_{\rm tr}$ instead follows the Gaussian-ratio construction in
Appendix~\ref{app:factor-coefficients}, with
$\mu_k=u_k$ and $v_k=\lambda$. An arbitrary training prior need not
preserve the isotropic factor family under division. Appendix~\ref{app:error-proof} 
separates the resulting prior-ratio approximation error from 
weight-estimation and component-sampling errors.

\section{Experimental protocols and reference validation}
\label{app:experiment-details}

This appendix specifies the benchmark tasks, prior shifts, evaluation and
sampling protocols, posterior references, and PriorGuide baselines used in the
experiments. Results are reported in
Appendices~\ref{app:k1-full}--\ref{app:component-approx-results}.

\subsection{Tasks and prior shifts}

For all six tasks in Section~\ref{sec:experiments}, the target posterior is
either available analytically or approximated by a reference procedure
validated in Appendix~\ref{app:references}. We follow the prior-shift
construction of PriorGuide \citep{yang2026priorguide}. For Gaussian training
priors, the marginal standard deviation of the target prior is $0.5$ times
that of the training prior for the mild shift and $0.2$ times that of the
training prior for the strong shift. For uniform training boxes, we apply the
same multipliers to the standard deviation of each uniform marginal.

Two Moons and SLCP use uniform training priors on $[-1,1]^d$, with $d=2$ and
$d=5$, respectively. For SLCP, this corresponds to rescaling the standard box
$[-3,3]^5$ by $1/3$. BCI uses a five-dimensional standard Gaussian, while the
Gaussian Linear tasks use $\mathcal N(0,0.1I)$ in 10 or 20 dimensions. OUP
uses a uniform prior on $[-1,1]^2$ after transforming its physical parameter
box $[0,2]\times[-2,2]$.

For each task, we train three Simformer backbones with training seeds 0, 1,
and 2 using the default architecture and training setup from the released
PriorGuide code. Each backbone is a six-layer transformer with four attention
heads and uses the variance-exploding schedule of
Appendix~\ref{app:sampling-settings}. Training uses 10,000 simulations from
the training prior, with 5\% held out for validation, a batch size of 1,000,
and 30,000 Adam steps. SLCP, which the released PriorGuide code does not
include, uses the same setup. No run triggered early stopping, so we use the final checkpoint in
every case. Across the six tasks, this gives 18 backbones, and every
experiment uses backbones from this set.

For each task and shift level, the single-factor benchmark uses one target
prior shared across ten test observations. For a training box $[l,h]$, a target
with standard deviation $s$ has its center drawn uniformly from
$[l+3s,h-3s]$ coordinatewise. For a Gaussian training prior with mean $\mu$
and marginal scale $s_{\rm tr}$, the center is drawn coordinatewise from
$[\mu-2s_{\rm tr}+3s,\mu+2s_{\rm tr}-3s]$. The mild and strong settings use
different center draws and separately simulated observations, so they differ in
more than prior width. Appendix~\ref{app:severity-sweep} isolates the effect
of width in a separate fixed-center sweep. Target priors are generated with
the released PriorGuide prior generator using a fixed seed. SLCP follows the
same uniform-prior recipe. Within each setting, all methods use the same prior,
observation, and frozen backbone.

For each task, the two-component benchmark uses ten mixture priors from the
same generator. Each prior contains two Gaussian components with the
strong-shift width, and their centers follow the single-factor rule above.
The first mixture coefficient is drawn uniformly from $[0.2,0.8]$, with the
second set to its complement. For each mixture prior, we draw three parameter
values from that prior and simulate one observation from each, yielding 30
prior--observation pairs per task.

\subsection{Metrics and statistical aggregation}

For each single-factor task and shift, we evaluate ten simulated observations
with three independently trained backbones. Base and the guided samplers use
three sampling seeds for each observation--backbone pair. We first average over
sampling seeds, then over backbones within each observation, and finally
summarize across the ten observations. Mixture results use the same hierarchy with 30
prior--observation pairs per task. Because three observations are generated
from each mixture prior, these 30 pairs are not 30 independent prior draws.
SIR uses one base-sample bank and one resampling draw for each
observation--backbone or pair--backbone combination.

C2ST \citep{lopezpaz2017c2st} assesses joint agreement between method and
reference samples, with ideal value $0.5$. MMTV averages total variation
distances across one-dimensional marginals, with ideal value zero. Both use
the metric implementations from the released PriorGuide code. For C2ST, a
random-forest classifier with five-fold cross-validation distinguishes 1,000
method samples from 1,000 samples drawn without replacement from the
10,000-sample reference bank. We repeat this over five reference subsamples
and average the classification accuracy. MMTV compares Gaussian kernel
density estimates of each marginal of the method samples and the full
reference bank. MMTV is undefined when a sample set has at most two distinct
points. This occurs only for SIR on the Gaussian Linear tasks. We also report
the root-mean-square error (RMSE) to the generating parameter as a
supplementary diagnostic. Because a multimodal posterior need not concentrate
around the generating parameter, RMSE is not used as a distributional
accuracy metric.

\subsection{Sampling and weight-estimation settings}
\label{app:sampling-settings}

\paragraph{Final posterior sampling.}
The main benchmark uses $(N,L)=(25,8)$ for PriorGuide and Spectra, except for
single-factor BCI at $(250,0)$. Base and SIR use $(100,0)$. Each method returns
1,000 posterior samples. The controlled comparisons in
Appendices~\ref{app:k1-full} and~\ref{app:k2-full} use $(100,0)$, while the
cost sweep in Appendix~\ref{app:compute-details} evaluates six sampler
configurations. The $(100,0)$ and $(25,8)$ settings differ in both the time
grid and the number of corrector updates.

\paragraph{Weight-estimation budgets.}
For the systematic mixture benchmarks, weight estimates are fixed across the
two output-sampling configurations. Path-space estimation uses 4,096
trajectories per component on a 400-point power grid and three estimator
seeds. The path-space estimator and the final posterior samplers use the
range-preserving grid
\begin{equation}
t_i
=
t_{\min}
+
(t_{\max}-t_{\min})u_i^2,
\qquad
u_i=\frac{i}{R-1},
\qquad
i=0,\ldots,R-1,
\label{eq:power-grid}
\end{equation}
which preserves both endpoints while placing more points at low noise.
All backbones use the same variance-exploding schedule,
\[
\sigma(t)
=
\sigma_{\min}
\left(
\frac{\sigma_{\max}}{\sigma_{\min}}
\right)^t,
\]
with $\sigma_{\min}=10^{-4}$, $\sigma_{\max}=15$,
$t_{\min}=10^{-5}$, and $t_{\max}=1$.

The direct estimator uses three independent base banks of 4,000 samples, each generated
with a 500-point uniform reverse grid. Neither weight estimator uses Langevin
corrector updates. For path-space estimation, the proposal transitions enter
the likelihood ratio in Eq.~\eqref{eq:path-rn}. Corrector updates are used
only by the final output sampler when enabled. Each estimator seed $s$ produces one log-normalizer estimate
$\log\widehat Z_{k,s}$ per component. We average these values across the three
estimator seeds and compute
\begin{equation}
\overline{\log\widehat Z_k}
=
\frac{1}{S}
\sum_{s=1}^{S}
\log\widehat Z_{k,s},
\qquad
\widehat\alpha
=
\operatorname{softmax}
\left(
\log b+\overline{\log\widehat Z}
\right),
\qquad
S=3.
\end{equation}
We use this aggregation for all component counts.

\paragraph{Transformed noise levels.}
Because $\sigma$ is log-linear in $t$, the transformed noise level
$\rho_{k,\tau}$ in Eq.~\eqref{eq:transform-pieces} corresponds to
\begin{equation}
t_{\rho}
=
t
-
\frac{\log D_{k,\tau}}
{2\log(\sigma_{\max}/\sigma_{\min})}.
\label{eq:time-shift}
\end{equation}
We query the frozen model directly at $t_{\rho}$ without clipping. Clipping
would change the queried noise level and invalidate the identity in
Eq.~\eqref{eq:exact-score}. In all tasks, shifts, and sampler configurations
reported here, $t_{\rho}$ remains within $[t_{\min},t_{\max}]$.

\paragraph{Terminal initialization.}
\label{app:terminal-init}

All Spectra components and matched guided baselines use the frozen base
model's terminal Gaussian. Let $\mu_{\rm train}$ and $v_{\rm train}$ denote
the empirical mean and coordinatewise variances of the training parameters
in model coordinates. At the largest noise level
$\tau_T=\sigma(t_{\max})^2$, we initialize
\begin{equation}
z_T
\sim
\mathcal N
\left(
\mu_{\rm train},
\operatorname{diag}(v_{\rm train})+\tau_T I
\right).
\label{eq:terminal-init}
\end{equation}
For a fixed backbone, this initialization is shared across observations and
component labels. The selected component changes only the score used during
reverse sampling.

The ideal component sampler in Appendix~\ref{app:mixture-proof} would instead
initialize from $q_{k,\tau_T}(\cdot\mid x)$. The shared Gaussian in
Eq.~\eqref{eq:terminal-init} is a high-noise approximation to this
component-specific terminal distribution and is generally not identical to it
at finite $\tau_T$. This approximation is separate from the exact score
identity and contributes to the component-sampling error in
Appendix~\ref{app:error-proof}, together with learned-score and numerical
integration errors.

For the path-space estimator, the base and component chains use the same
initialization, so the terminal-density term in Eq.~\eqref{eq:path-rn} is
identically zero in our experiments. Appendix~\ref{app:path} retains this
term for the general case of different terminal distributions.

\subsection{Posterior references and support checks}
\label{app:references}

Posterior references are analytic for the Gaussian Linear tasks and constructed numerically for the remaining tasks. 
Table~\ref{tab:references} summarizes each construction and its validation. 
Every reference bank contains 10,000 samples. 
SLCP and BCI use tempered sequential Monte Carlo (SMC), following \citet{delmoral2006smc}. 
For BCI, lower Gauss--Hermite orders measurably change the posterior, whereas orders 301 and 601 agree within the tested target region.

\begin{table}[ht]
\centering
\small
\caption{\textbf{Posterior references and validation.}
Reported validation statistics are worst-case values over the evaluated
single-factor observations; BCI also includes the mixture pairs.
Cross-run C2ST compares independent reference runs of the same posterior
(ideal value $0.5$).}
\label{tab:references}
\begin{tabular}{@{}>{\raggedright\arraybackslash}p{.15\linewidth}>{\raggedright\arraybackslash}p{.31\linewidth}>{\raggedright\arraybackslash}p{.48\linewidth}@{}}
\toprule
Task & Reference & Validation \\
\midrule
GL-10D/20D & closed-form posterior & analytic \\
Two Moons & exact-likelihood quadrature on a $4096\times4096$ grid & grid refinement changes the log normalizer by at most $2\times10^{-4}$ nats \\
OUP & exact-likelihood quadrature on a $4096\times4096$ grid & grid refinement changes the log normalizer by less than $10^{-12}$ nats; target mass on the grid boundary at most $4\times10^{-10}$ (both shifts) \\
SLCP & tempered SMC, 40,000 particles, four independent runs & cross-run C2ST at most $0.509$ (strong) and $0.521$ (mild) \\
BCI & tempered SMC, 8,000 particles, four independent runs; Gauss--Hermite likelihood of order 301 & cross-run C2ST at most $0.509$ (both shifts and mixtures); quadrature-order check \\
\bottomrule
\end{tabular}
\end{table}
\paragraph{Reference component weights.}
For the two-component benchmark, reference weights are obtained by averaging
the responsibilities
$b_k\phi_k(\theta)/\sum_j b_j\phi_j(\theta)$
under the target-posterior reference. For Two Moons and OUP, we evaluate this
expectation on the quadrature grid restricted to the training box. Because the
base posterior is supported there, this gives the weights in
Eq.~\eqref{eq:posterior-mixture}. For SLCP and BCI, we average over their SMC
reference samples. The Gaussian Linear weights are available in closed form
from the conjugate posterior. Appendices~\ref{app:k-composition}
and~\ref{app:component-approx-results} describe the reference weights used in
the remaining mixture studies.

\paragraph{Spectra variants.}
The mixture results in Appendices~\ref{app:k2-full}
and~\ref{app:component-approx-results} report Spectra with three sources of
component weights. Spectra-PS, the method in the main text, uses the path-space
estimator of Appendix~\ref{app:path}, and Spectra-Direct uses the direct
estimator of Appendix~\ref{app:direct-estimation}. Spectra-Ref uses the
reference weights above. It needs the reference posterior and is therefore not
a practical method, but it separates weight-estimation error from the rest of
the sampler.

\paragraph{Reference support.}
For tasks with uniform training support, observation parameters are drawn from
the target prior restricted to the training box. Two Moons and OUP truncate
each Gaussian component separately while retaining the mixture coefficients,
whereas SLCP truncates the mixture as a whole. The two constructions coincide
for a single component, and all component centers lie at least three component
standard deviations inside the box. Spectra does not extend the support of the
training prior.

The benchmark metrics nevertheless use the original untruncated Gaussian
reference. To check the effect of this mismatch, we reevaluate the
single-factor Two Moons, SLCP, and OUP experiments against references
restricted to the training support. Across Base, SIR, PriorGuide, and Spectra,
all 12 C2ST/MMTV method orderings are unchanged, and the largest change in an
aggregate metric is $0.0013$. At least $98.69\%$ of every reference bank lies
inside the training support. This support mismatch therefore does not affect the reported method comparisons in these settings.

\subsection{PriorGuide protocols}
\label{app:priorguide-variants}

\paragraph{Exact-ratio comparison.}
The main comparison supplies PriorGuide with the same exact analytic
prior ratio as Spectra, bypassing Gaussian mixture model fitting wherever it
would otherwise be required. The Gaussian reverse-conditional approximation
remains part of PriorGuide's guidance. Both methods use the sampling settings
in Appendix~\ref{app:sampling-settings}, following
\citet[Appendix C.3.2]{yang2026priorguide}: $(25,8)$ for the uniform-prior and
Gaussian Linear tasks and $(250,0)$ for BCI. We use the BCI setting for
single-factor shifts only, and BCI mixtures use $(25,8)$. This removes prior-ratio fitting
error from the comparison while retaining the difference between the two
score updates.

\paragraph{Langevin update convention.}
All matched guided methods use the corrector implemented in the released
PriorGuide code. For a positive step magnitude $\Delta t$,
\[
\delta=\frac{\eta g(t)^2\Delta t}{2},
\qquad
z\leftarrow z+\delta\widehat s_q(z,t,x)
+\sqrt{2\delta+10^{-8}}\,\epsilon,
\qquad
\epsilon\sim\mathcal N(0,I),
\quad \eta=0.5,
\]
where $g(t)^2=2\sigma(t)\sigma'(t)$. PriorGuide's Eq.~(A1) gives a step half
this size under the same convention. We follow the executable update for all
methods. For Spectra, $g(t)$ is evaluated at the current sampler time $t$,
while the score itself is queried at the transformed noise level.

\paragraph{Vector--Jacobian-product implementation.}
PriorGuide's guidance uses the denoiser Jacobian only through a product with a
vector, so the same update can be evaluated with a reverse-mode
vector--Jacobian product (VJP) instead of explicitly forming the full
parameter--parameter Jacobian. We use this implementation, denoted PriorGuide (VJP), 
as the strengthened timing baseline in Appendix~\ref{app:compute-details}. 
It changes only the implementation of the guidance update.

At $(N,L)=(25,8)$, a paired validation on Two Moons, SLCP, and GL-20D gives
absolute mean C2ST differences of at most $5\times10^{-4}$ and MMTV differences
below $10^{-4}$ between PriorGuide (VJP) and the benchmarked implementation,
with all 95\% paired intervals containing zero. Accuracy tables therefore use
the benchmarked implementation, while timing comparisons include the stronger
VJP implementation. \pgfullcov{} does not admit the same reduction because
its Tweedie covariance requires the full Jacobian.

\subsection{PriorGuide-FullCov}
\label{app:pg-fullcov}

PriorGuide-FullCov (\pgfullcov{}) is an additional moment-matched baseline constructed in
this work, rather than a variant proposed by \citet{yang2026priorguide}. It
tests whether PriorGuide's scalar covariance approximation explains the
difference from exact transport. We retain a Gaussian approximation to the
reverse conditional but replace its scalar covariance with a
moment-consistent full covariance, following the Tweedie moment-projection
approach of \citet{boys2024tmpd}.

\paragraph{Moment identities.}
For fixed $x$, suppress the observation argument and write
$\mu_{p,\tau}(z)=\mathbb E_p[\theta\mid z,x]$. The Gaussian-channel moment
identities underlying Tweedie's formula \citep[Section~2]{efron2011tweedie}
give
\begin{equation}
\mu_{p,\tau}(z)
=
z+\tau s_p(z,\tau),
\qquad
\Sigma_{p,\tau}(z)
=
\operatorname{Cov}_p(\theta\mid z,x)
=
\tau\nabla_z\mu_{p,\tau}(z)
=
\tau I+\tau^2\nabla_z s_p(z,\tau).
\label{eq:fullcov-moments}
\end{equation}
These are exact moment identities for the exact base score. They do not imply
that the reverse conditional itself is Gaussian.

In the implementation, we substitute the learned score and repair the
resulting covariance before using it in the Gaussian surrogate. Let
$J=\nabla_z\mu_{p,\tau}(z)$ denote the learned denoiser Jacobian on the
parameter block, so that $\Sigma_{p,\tau}=\tau J$. We symmetrize $J$ and floor
its eigenvalues,
\begin{equation}
J_{\rm sym}
=
\tfrac12(J+J^\top)
=
U\operatorname{diag}(\lambda)U^\top,
\qquad
J_{\rm psd}
=
U\operatorname{diag}\!\left(\max(\lambda,\varepsilon)\right)U^\top,
\qquad
\varepsilon=10^{-6},
\label{eq:psd-repair}
\end{equation}
and use $C=\tau J_{\rm psd}$ as the local covariance.

\paragraph{Gaussian twist.}
Let $m$ and $C$ denote the resulting local mean and covariance. For a supplied
ratio
\[
r(\theta)
\propto
\sum_k w_k\mathcal N(\theta;\mu_k,\Sigma_k),
\qquad
w_k>0,
\]
multiplying the surrogate $\mathcal N(\theta;m,C)$ by component $k$ gives
\begin{equation}
\widetilde\mu_k
=
m+C(C+\Sigma_k)^{-1}(\mu_k-m),
\qquad
\ell_k
=
\mathcal N(\mu_k;m,C+\Sigma_k).
\end{equation}
The corresponding twisted mean and guidance are
\begin{equation}
\beta_k
=
\frac{w_k\ell_k}{\sum_j w_j\ell_j},
\qquad
\widetilde m_q
=
\sum_k\beta_k\widetilde\mu_k,
\qquad
g_{\mathrm{FC}}
=
\frac{\widetilde m_q-m}{\tau}.
\end{equation}
Here $\beta_k$ are noisy-state-dependent weights within the Gaussian
surrogate, not Spectra's observation-level component weights. The guidance
substitutes this Gaussian-product mean into the identity
$s_q-s_p=(m_q-m_p)/\tau$.

For a single isotropic Gaussian ratio, this construction reduces, before
covariance repair, to the TMPD update of
\citet[Eq.~(11)]{boys2024tmpd} with an identity observation operator under VE
corruption. Computing $C$ requires a score Jacobian at every guidance
evaluation, including corrector updates when enabled. Both PriorGuide and
\pgfullcov{} receive the exact analytic prior ratio in these controlled
comparisons.

\section{Single-factor results}
\label{app:k1-full}

This appendix provides supporting single-factor results, including a
controlled $(N,L)=(100,0)$ comparison with \pgfullcov{}, paired and overlap
diagnostics, the practical sampling configuration, and the fixed-center
severity sweep behind Figure~\ref{fig:severity-sweep}.

\begin{table}[!htbp]
\centering
\caption{
\textbf{Single-factor comparison at $(N,L)=(100,0)$.}
Mean (\textcolor[gray]{0.48}{standard deviation}) across ten observations,
after averaging sampling seeds within each backbone and then
averaging the three backbones within each observation.
C2ST is ideal at $0.5$ and MMTV at zero.
Bold indicates the displayed mean closest to the ideal value within each
task and shift, including rounded ties. RMSE to the generating parameter is a
supplementary diagnostic and is not bolded.
}
\label{tab:k1-controlled}
\begingroup
\fontsize{8}{9.4}\selectfont
\setlength{\tabcolsep}{2pt}
\renewcommand{\arraystretch}{1.06}
\begin{tabular*}{\linewidth}{@{\extracolsep{\fill}}llrrrrrr@{}}
\toprule
& & \multicolumn{3}{c}{Mild} & \multicolumn{3}{c}{Strong} \\
\cmidrule(lr){3-5}\cmidrule(l){6-8}
Task & Method & C2ST & MMTV & RMSE & C2ST & MMTV & RMSE \\
\midrule
\multirow{5}{*}{Two Moons} & Base & \benchval{0.568}{0.036} & \benchval{0.219}{0.087} & \benchval{0.397}{0.197} & \benchval{0.691}{0.095} & \benchval{0.426}{0.192} & \benchval{0.174}{0.106} \\
 & SIR & \benchval{0.525}{0.005} & \benchval{0.075}{0.011} & \benchval{0.340}{0.147} & \benchval{0.544}{0.018} & \benchval{0.070}{0.012} & \benchval{0.100}{0.061} \\
 & PriorGuide (exact ratio) & \benchval{0.522}{0.010} & \benchval{0.105}{0.040} & \benchval{0.371}{0.173} & \benchval{0.554}{0.032} & \benchval{0.167}{0.085} & \benchval{0.136}{0.063} \\
 & PG-FullCov & \benchval{0.528}{0.015} & \benchval{0.123}{0.057} & \benchval{0.379}{0.180} & \benchval{0.552}{0.038} & \benchval{0.159}{0.093} & \benchval{0.142}{0.070} \\
 & Spectra & \benchval{\textbf{0.512}}{0.006} & \benchval{\textbf{0.072}}{0.007} & \benchval{0.346}{0.151} & \benchval{\textbf{0.519}}{0.009} & \benchval{\textbf{0.068}}{0.012} & \benchval{0.102}{0.059} \\
\midrule
\multirow{5}{*}{SLCP} & Base & \benchval{0.765}{0.057} & \benchval{0.255}{0.041} & \benchval{0.324}{0.089} & \benchval{0.917}{0.019} & \benchval{0.466}{0.072} & \benchval{0.312}{0.060} \\
 & SIR & \benchval{0.640}{0.086} & \benchval{0.123}{0.054} & \benchval{0.256}{0.088} & \benchval{0.713}{0.067} & \benchval{0.101}{0.043} & \benchval{0.097}{0.023} \\
 & PriorGuide (exact ratio) & \benchval{0.617}{0.091} & \benchval{0.121}{0.055} & \benchval{0.257}{0.085} & \benchval{0.635}{0.046} & \benchval{0.127}{0.027} & \benchval{0.127}{0.027} \\
 & PG-FullCov & \benchval{0.618}{0.091} & \benchval{0.121}{0.057} & \benchval{0.254}{0.084} & \benchval{0.618}{0.054} & \benchval{0.112}{0.033} & \benchval{0.123}{0.027} \\
 & Spectra & \benchval{\textbf{0.615}}{0.092} & \benchval{\textbf{0.116}}{0.057} & \benchval{0.251}{0.089} & \benchval{\textbf{0.587}}{0.067} & \benchval{\textbf{0.097}}{0.044} & \benchval{0.096}{0.022} \\
\midrule
\multirow{5}{*}{OUP} & Base & \benchval{0.587}{0.064} & \benchval{0.186}{0.075} & \benchval{0.196}{0.066} & \benchval{0.742}{0.088} & \benchval{0.397}{0.079} & \benchval{0.221}{0.079} \\
 & SIR & \benchval{0.524}{0.009} & \benchval{0.054}{0.009} & \benchval{0.184}{0.075} & \benchval{0.548}{0.050} & \benchval{\textbf{0.045}}{0.011} & \benchval{0.127}{0.036} \\
 & PriorGuide (exact ratio) & \benchval{0.509}{0.005} & \benchval{0.058}{0.011} & \benchval{0.187}{0.074} & \benchval{0.521}{0.023} & \benchval{0.076}{0.033} & \benchval{0.131}{0.037} \\
 & PG-FullCov & \benchval{0.509}{0.004} & \benchval{0.057}{0.009} & \benchval{0.185}{0.074} & \benchval{0.507}{0.005} & \benchval{0.048}{0.012} & \benchval{0.129}{0.037} \\
 & Spectra & \benchval{\textbf{0.508}}{0.003} & \benchval{\textbf{0.052}}{0.008} & \benchval{0.184}{0.075} & \benchval{\textbf{0.506}}{0.006} & \benchval{0.047}{0.011} & \benchval{0.129}{0.037} \\
\midrule
\multirow{5}{*}{BCI} & Base & \benchval{0.758}{0.030} & \benchval{0.254}{0.049} & \benchval{0.684}{0.058} & \benchval{0.948}{0.016} & \benchval{0.497}{0.045} & \benchval{0.714}{0.026} \\
 & SIR & \benchval{0.591}{0.026} & \benchval{0.086}{0.031} & \benchval{0.450}{0.061} & \benchval{0.799}{0.066} & \benchval{0.224}{0.312} & \benchval{0.222}{0.027} \\
 & PriorGuide (exact ratio) & \benchval{0.568}{0.024} & \benchval{0.085}{0.028} & \benchval{0.448}{0.060} & \benchval{0.572}{0.015} & \benchval{0.079}{0.010} & \benchval{0.224}{0.027} \\
 & PG-FullCov & \benchval{\textbf{0.564}}{0.022} & \benchval{\textbf{0.083}}{0.027} & \benchval{0.451}{0.061} & \benchval{0.552}{0.015} & \benchval{\textbf{0.065}}{0.006} & \benchval{0.222}{0.026} \\
 & Spectra & \benchval{0.567}{0.024} & \benchval{0.085}{0.028} & \benchval{0.449}{0.059} & \benchval{\textbf{0.551}}{0.015} & \benchval{0.066}{0.007} & \benchval{0.222}{0.026} \\
\midrule
\multirow{5}{*}{GL-10D} & Base & \benchval{0.898}{0.024} & \benchval{0.308}{0.028} & \benchval{0.298}{0.024} & \benchval{0.997}{0.002} & \benchval{0.664}{0.038} & \benchval{0.328}{0.033} \\
 & SIR & \benchval{0.760}{0.056} & \benchval{0.225}{0.260} & \benchval{0.212}{0.024} & \benchval{0.998}{0.002} & \benchval{0.965}{0.064} & \benchval{0.123}{0.014} \\
 & PriorGuide (exact ratio) & \benchval{0.548}{0.015} & \benchval{0.055}{0.004} & \benchval{0.216}{0.023} & \benchval{0.558}{0.029} & \benchval{0.058}{0.010} & \benchval{0.095}{0.011} \\
 & PG-FullCov & \benchval{\textbf{0.531}}{0.015} & \benchval{\textbf{0.043}}{0.005} & \benchval{0.212}{0.024} & \benchval{\textbf{0.522}}{0.015} & \benchval{\textbf{0.040}}{0.005} & \benchval{0.093}{0.011} \\
 & Spectra & \benchval{0.536}{0.017} & \benchval{0.045}{0.005} & \benchval{0.212}{0.024} & \benchval{0.525}{0.015} & \benchval{\textbf{0.040}}{0.004} & \benchval{0.093}{0.011} \\
\midrule
\multirow{5}{*}{GL-20D} & Base & \benchval{0.947}{0.016} & \benchval{0.300}{0.017} & \benchval{0.289}{0.019} & \benchval{1.000}{0.000} & \benchval{0.645}{0.017} & \benchval{0.303}{0.018} \\
 & SIR & \benchval{0.927}{0.029} & \benchval{0.888}{0.061} & \benchval{0.208}{0.017} & \benchval{1.000}{0.000} & \benchval{0.852}{0.139}\rlap{\textsuperscript{$\dagger$}} & \benchval{0.149}{0.013} \\
 & PriorGuide (exact ratio) & \benchval{0.584}{0.016} & \benchval{0.063}{0.009} & \benchval{0.213}{0.016} & \benchval{0.580}{0.019} & \benchval{0.057}{0.005} & \benchval{0.093}{0.007} \\
 & PG-FullCov & \benchval{\textbf{0.546}}{0.017} & \benchval{\textbf{0.042}}{0.003} & \benchval{0.206}{0.016} & \benchval{\textbf{0.528}}{0.017} & \benchval{\textbf{0.037}}{0.003} & \benchval{0.091}{0.007} \\
 & Spectra & \benchval{0.554}{0.019} & \benchval{0.044}{0.003} & \benchval{0.206}{0.016} & \benchval{0.531}{0.018} & \benchval{0.038}{0.003} & \benchval{0.090}{0.007} \\
\bottomrule
\end{tabular*}

\par\vspace{3pt}
\raggedright
\fontsize{7.5}{8.8}\selectfont
\textdagger{} Strong-shift SIR MMTV is defined for six of the ten GL-20D
observations. Degenerate resampling prevents evaluation on the remaining four,
so the reported mean and standard deviation use the six valid observations.
\par
\endgroup
\end{table}

\subsection{Controlled comparison without corrector updates}
\label{app:k1-controlled}

Table~\ref{tab:k1-controlled} compares all six tasks at
$(N,L)=(100,0)$, without Langevin corrector updates. Under strong shifts,
Spectra improves on \pgfullcov{} on Two Moons and SLCP. The two methods are
similar on OUP and BCI, while \pgfullcov{} has a small C2ST advantage on the
Gaussian Linear controls. Under mild shifts, Spectra is again better on Two
Moons, whereas \pgfullcov{} attains slightly lower C2ST on BCI and both
Gaussian Linear tasks, by at most $0.008$ (Table~\ref{tab:k1-paired}).
Appendix~\ref{app:compute-details} reports the computational cost of this
additional baseline.

\FloatBarrier
\subsection{Paired comparisons and overlap diagnostics}

Table~\ref{tab:k1-paired} reports observation-level paired C2ST differences for all six tasks under the controlled configuration of Table~\ref{tab:k1-controlled}.

\paragraph{Clean-space overlap.}
For SIR, we measure overlap using the effective sample size 
$(\sum_i w_i)^2/\sum_i w_i^2$ of the prior-ratio weights
$w_i=r(\theta_i)$ on the 20,000-sample Base bank, normalized by the bank
size. Under strong shifts, the median ESS fractions across the 30
observation--backbone pairs are approximately $45\%$, $6.0\%$, and $1.8\%$ on Two
Moons, SLCP, and BCI, but only $8\times10^{-5}$ and $5\times10^{-5}$ on
GL-10D and GL-20D. The latter correspond to roughly two effective samples or
fewer from the full bank. Under mild shifts, the corresponding fractions are
$79\%$, $42\%$, and $34\%$ on the first three tasks, compared with $1.9\%$
and $0.14\%$ on the Gaussian Linear tasks. The collapse in overlap on the
Gaussian Linear tasks is consistent with the severe SIR degeneration in
Table~\ref{tab:k1-controlled}.

\begin{table}[!htbp]
\centering
\scriptsize
\caption{
\textbf{Paired C2ST differences at $(N,L)=(100,0)$.}
Entries are baseline minus Spectra, with 95\% paired Student-$t$ confidence
intervals across ten observations. Positive values correspond to lower C2ST for
Spectra.
}
\label{tab:k1-paired}
\begin{tabular}{llcc}
\toprule
Shift & Task & PriorGuide $-$ Spectra & \pgfullcov{} $-$ Spectra \\
\midrule
\multirow{6}{*}{Mild}
& Two Moons & $+0.0097\ [+0.0018,+0.0177]$ & $+0.0151\ [+0.0039,+0.0264]$ \\
& SLCP      & $+0.0019\ [-0.0070,+0.0108]$ & $+0.0031\ [-0.0026,+0.0087]$ \\
& OUP       & $+0.0014\ [-0.0025,+0.0052]$ & $+0.0011\ [-0.0011,+0.0034]$ \\
& BCI       & $+0.0009\ [-0.0022,+0.0040]$ & $-0.0026\ [-0.0047,-0.0004]$ \\
& GL-10D    & $+0.0113\ [+0.0044,+0.0182]$ & $-0.0049\ [-0.0074,-0.0024]$ \\
& GL-20D    & $+0.0309\ [+0.0130,+0.0488]$ & $-0.0073\ [-0.0109,-0.0038]$ \\
\midrule
\multirow{6}{*}{Strong}
& Two Moons & $+0.0346\ [+0.0155,+0.0537]$ & $+0.0330\ [+0.0097,+0.0562]$ \\
& SLCP      & $+0.0476\ [+0.0260,+0.0692]$ & $+0.0310\ [+0.0128,+0.0491]$ \\
& OUP       & $+0.0156\ [+0.0014,+0.0297]$ & $+0.0007\ [-0.0008,+0.0021]$ \\
& BCI       & $+0.0214\ [+0.0128,+0.0300]$ & $+0.0012\ [-0.0023,+0.0047]$ \\
& GL-10D    & $+0.0339\ [+0.0216,+0.0461]$ & $-0.0021\ [-0.0036,-0.0007]$ \\
& GL-20D    & $+0.0492\ [+0.0382,+0.0602]$ & $-0.0030\ [-0.0051,-0.0009]$ \\
\bottomrule
\end{tabular}

\end{table}

\FloatBarrier
\subsection{Practical sampling configurations}
\label{app:k1-practical}

\paragraph{OUP under the practical configuration.}
PriorGuide and Spectra have similar OUP accuracy at $(N,L)=(25,8)$. Under
the strong shift, moving from $(100,0)$ to $(25,8)$ improves PriorGuide's
C2ST from $0.521$ to $0.505$, while Spectra changes from $0.506$ to $0.505$.
The practical configuration therefore removes the small C2ST gap seen in the
controlled comparison. Under the mild shift, the remaining differences are
also small, with C2ST differing by $0.0013$ and MMTV by $0.0007$ in opposite
directions.

\paragraph{Held-out evaluation with corrector updates.}
Following the prespecified selection rule, we froze $(25,8)$ and the
$(100,0)$ control after development on three observations with one
backbone. Table~\ref{tab:k1-heldout} evaluates the remaining seven observations
at $(25,8)$ using three independently trained backbones and three sampling seeds.
On Two Moons and SLCP, Spectra has lower C2ST and MMTV than both PriorGuide
and \pgfullcov{}, with the corresponding 95\% paired Student-$t$ intervals
excluding zero. On GL-20D, the C2ST intervals against both baselines include
zero. These seven observations are the held-out subset of the ten-observation
practical benchmark, so this is a configuration-selection check rather than
an independent replication.

\begin{table}[!htbp]
\centering
\caption{
\textbf{Held-out single-factor comparison with Langevin corrections.}
Strong shifts at $(N,L)=(25,8)$. Mean
(\textcolor[gray]{0.48}{standard deviation}) across seven held-out
observations after averaging three sampling seeds within each of three
backbones. C2ST is ideal at $0.5$ and MMTV at zero. Bold indicates the
displayed mean closest to the ideal value within each task, including rounded
ties.
}
\label{tab:k1-heldout}
\begingroup
\fontsize{8}{9.4}\selectfont
\setlength{\tabcolsep}{3pt}
\begin{tabular}{lrrrrrr}
\toprule
& \multicolumn{2}{c}{Two Moons} & \multicolumn{2}{c}{SLCP} & \multicolumn{2}{c}{GL-20D} \\
\cmidrule(lr){2-3}\cmidrule(lr){4-5}\cmidrule(lr){6-7}
Method & C2ST & MMTV & C2ST & MMTV & C2ST & MMTV \\
\midrule
PriorGuide (exact ratio) & \benchval{0.560}{0.039} & \benchval{0.174}{0.099} & \benchval{0.617}{0.029} & \benchval{0.115}{0.030} & \benchval{0.531}{0.025} & \benchval{\textbf{0.035}}{0.004} \\
PG-FullCov & \benchval{0.567}{0.047} & \benchval{0.192}{0.118} & \benchval{0.620}{0.030} & \benchval{0.117}{0.032} & \benchval{\textbf{0.530}}{0.023} & \benchval{\textbf{0.035}}{0.004} \\
Spectra & \benchval{\textbf{0.534}}{0.017} & \benchval{\textbf{0.103}}{0.033} & \benchval{\textbf{0.586}}{0.038} & \benchval{\textbf{0.100}}{0.041} & \benchval{0.531}{0.024} & \benchval{\textbf{0.035}}{0.004} \\
\bottomrule
\end{tabular}

\endgroup
\end{table}
\FloatBarrier

\subsection{Fixed-center shift-severity sweep}
\label{app:severity-sweep}

Because the mild and strong benchmarks differ in prior center and observations
as well as width, they do not isolate the effect of shift severity. The sweep
in Figure~\ref{fig:severity-sweep} therefore varies prior width while holding
the other experimental factors fixed. For Two Moons and OUP, we fix the prior
center, the ten observations from the mild single-factor setting, the three
backbones, and the sampler, and set the target-prior marginal standard
deviation to $0.50$, $0.40$, $0.30$, $0.25$, and $0.20$ times that of the
training prior. PriorGuide receives the exact analytic ratio, and both guided
methods use $(N,L)=(25,8)$ with three sampling seeds. All runs are evaluated
against references restricted to the training support
(Appendix~\ref{app:references}), whereas Table~\ref{tab:k1-results} uses the
full benchmark reference. The two sets of results are therefore not compared
numerically.

On Two Moons, the gap grows systematically as the target prior narrows in
both metrics, and the paired difference is positive for at least nine of ten
observations at every width. On OUP, all differences remain below $0.002$ in
magnitude and show no systematic trend.
\begin{table}[!htbp]
\centering
\small
\caption{
\textbf{Fixed-center shift-severity sweep.}
Entries are mean paired differences (PriorGuide minus Spectra) across ten
observations, with 95\% paired Student-$t$ confidence intervals after
averaging over three sampling seeds and three backbones within each
observation. Width is the target-prior marginal standard deviation relative
to the training prior. Positive differences favor Spectra. Favor gives the
number of observations with a positive paired difference.
}
\label{tab:severity-sweep}
\begin{tabular*}{\linewidth}{@{\extracolsep{\fill}}llrrrr@{}}
\toprule
 & & \multicolumn{2}{c}{C2ST} & \multicolumn{2}{c}{MMTV} \\
\cmidrule(lr){3-4}\cmidrule(l){5-6}
Task & Width & Difference [95\% CI] & Favor & Difference [95\% CI] & Favor \\
\midrule
\multirow{5}{*}{Two Moons} & 0.50 & $+$0.0075 [$+$0.0006, $+$0.0144] & 10/10 & $+$0.0297 [$+$0.0062, $+$0.0532] & 10/10 \\
 & 0.40 & $+$0.0191 [$+$0.0032, $+$0.0350] & 9/10 & $+$0.0596 [$+$0.0203, $+$0.0990] & 10/10 \\
 & 0.30 & $+$0.0434 [$+$0.0179, $+$0.0688] & 10/10 & $+$0.1140 [$+$0.0532, $+$0.1748] & 10/10 \\
 & 0.25 & $+$0.0571 [$+$0.0280, $+$0.0861] & 9/10 & $+$0.1486 [$+$0.0788, $+$0.2185] & 10/10 \\
 & 0.20 & $+$0.0727 [$+$0.0414, $+$0.1039] & 10/10 & $+$0.1828 [$+$0.1065, $+$0.2590] & 10/10 \\
\midrule
\multirow{5}{*}{OUP} & 0.50 & $-$0.0013 [$-$0.0027, $+$0.0001] & 3/10 & $+$0.0008 [$+$0.0001, $+$0.0015] & 8/10 \\
 & 0.40 & $-$0.0003 [$-$0.0015, $+$0.0009] & 4/10 & $+$0.0009 [$-$0.0003, $+$0.0021] & 6/10 \\
 & 0.30 & $+$0.0003 [$-$0.0007, $+$0.0012] & 5/10 & $+$0.0011 [$-$0.0003, $+$0.0025] & 7/10 \\
 & 0.25 & 0.0000 [$-$0.0025, $+$0.0025] & 5/10 & $+$0.0013 [$-$0.0001, $+$0.0027] & 8/10 \\
 & 0.20 & $-$0.0002 [$-$0.0024, $+$0.0019] & 4/10 & $+$0.0004 [$-$0.0022, $+$0.0029] & 7/10 \\
\bottomrule
\end{tabular*}

\end{table}

\FloatBarrier
\section{Mixture results}
\label{app:k2-full}

This appendix reports the systematic two-component benchmark, examines
posterior component allocation and weight estimation, and evaluates
composition beyond two components.

\subsection{Systematic mixture benchmark}
\label{app:k2-systematic}

The systematic benchmark evaluates ten mixture priors with three observations
per prior on each of six tasks (Appendix~\ref{app:experiment-details}).
Table~\ref{tab:k2-controlled} reports the controlled comparison at
$(N,L)=(100,0)$. At the practical sampling configurations of
Table~\ref{tab:k2-results}, \direct{} and \pathspace{} use the same fixed weight estimates 
for both output configurations and share the same final component sampler. 
The two give nearly identical results on Two Moons, SLCP, OUP, and BCI.
On the Gaussian Linear tasks, \pathspace{}
gives lower mean C2ST and MMTV, with the larger difference on
GL-10D.

\begin{table}[htbp]
\centering
\caption{
\textbf{Mixture posterior inference across 180 prior--observation pairs.}
Mean (\textcolor[gray]{0.48}{standard deviation}) across 30 pairs per task,
after averaging sampling seeds within each backbone and then the three
backbones within each pair. All output samplers use $(N,L)=(100,0)$.
PriorGuide receives the exact prior ratio. Direct, PS, and Ref are the Spectra variants of Appendix~\ref{app:references}.
Ref uses reference component weights and is included only as a diagnostic.
C2ST is ideal at $0.5$ and MMTV at zero. Bold indicates the displayed mean closest 
to the ideal value among methods that do not use reference weights, 
including rounded ties.
}
\label{tab:k2-controlled}
\begingroup
\fontsize{8}{9.4}\selectfont
\setlength{\tabcolsep}{2pt}
\renewcommand{\arraystretch}{1.06}
\begin{tabular*}{\linewidth}{@{\extracolsep{\fill}}lrrrrrrr@{}}
\toprule
 & & & & & \multicolumn{3}{c}{Spectra} \\
\cmidrule(l){6-8}
Task & Base & SIR & PriorGuide & PG-FullCov & Direct & PS & Ref \\
\midrule
\multicolumn{8}{c}{\textbf{C2ST}} \\
\midrule
Two Moons & \benchval{0.736}{0.074} & \benchval{0.543}{0.018} & \benchval{0.552}{0.035} & \benchval{0.538}{0.021} & \benchval{\textbf{0.514}}{0.007} & \benchval{0.515}{0.007} & \benchval{0.515}{0.008} \\
SLCP & \benchval{0.946}{0.029} & \benchval{0.796}{0.086} & \benchval{0.611}{0.043} & \benchval{0.587}{0.042} & \benchval{\textbf{0.554}}{0.040} & \benchval{0.555}{0.040} & \benchval{0.554}{0.040} \\
OUP & \benchval{0.703}{0.063} & \benchval{0.529}{0.011} & \benchval{0.517}{0.009} & \benchval{0.507}{0.008} & \benchval{\textbf{0.506}}{0.006} & \benchval{0.507}{0.006} & \benchval{0.506}{0.006} \\
GL-10D & \benchval{0.996}{0.003} & \benchval{0.999}{0.001} & \benchval{0.598}{0.035} & \benchval{0.539}{0.025} & \benchval{0.570}{0.094} & \benchval{\textbf{0.536}}{0.023} & \benchval{0.531}{0.025} \\
GL-20D & \benchval{0.999}{0.001} & \benchval{1.000}{0.000} & \benchval{0.614}{0.031} & \benchval{\textbf{0.526}}{0.009} & \benchval{0.532}{0.039} & \benchval{\textbf{0.526}}{0.013} & \benchval{0.525}{0.012} \\
BCI & \benchval{0.946}{0.015} & \benchval{0.789}{0.061} & \benchval{0.553}{0.018} & \benchval{\textbf{0.540}}{0.017} & \benchval{0.542}{0.018} & \benchval{0.542}{0.018} & \benchval{0.542}{0.018} \\
\midrule
\multicolumn{8}{c}{\textbf{MMTV}} \\
\midrule
Two Moons & \benchval{0.510}{0.147} & \benchval{0.062}{0.009} & \benchval{0.159}{0.081} & \benchval{0.121}{0.052} & \benchval{\textbf{0.060}}{0.008} & \benchval{\textbf{0.060}}{0.008} & \benchval{0.060}{0.008} \\
SLCP & \benchval{0.498}{0.075} & \benchval{0.205}{0.269} & \benchval{0.139}{0.059} & \benchval{0.107}{0.042} & \benchval{\textbf{0.072}}{0.025} & \benchval{0.073}{0.026} & \benchval{0.071}{0.025} \\
OUP & \benchval{0.322}{0.085} & \benchval{0.046}{0.012} & \benchval{0.073}{0.022} & \benchval{0.048}{0.023} & \benchval{\textbf{0.045}}{0.013} & \benchval{\textbf{0.045}}{0.013} & \benchval{0.043}{0.010} \\
GL-10D & \benchval{0.636}{0.044} & \benchval{0.965}{0.103}\rlap{\textsuperscript{$\dagger$}} & \benchval{0.110}{0.033} & \benchval{0.062}{0.030} & \benchval{0.100}{0.136} & \benchval{\textbf{0.053}}{0.017} & \benchval{0.044}{0.008} \\
GL-20D & \benchval{0.640}{0.030} & \benchval{0.900}{0.125}\rlap{\textsuperscript{$\dagger$}} & \benchval{0.127}{0.064} & \benchval{0.043}{0.013} & \benchval{0.049}{0.061} & \benchval{\textbf{0.041}}{0.016} & \benchval{0.038}{0.002} \\
BCI & \benchval{0.485}{0.039} & \benchval{0.183}{0.206} & \benchval{0.074}{0.015} & \benchval{\textbf{0.061}}{0.012} & \benchval{0.062}{0.012} & \benchval{0.062}{0.012} & \benchval{0.062}{0.012} \\
\bottomrule
\end{tabular*}

\par\vspace{3pt}
\raggedright\fontsize{7}{8.5}\selectfont
\textdagger{} SIR MMTV excludes degenerate runs. It is defined for 80/90
model-level runs on GL-10D and 48/90 on GL-20D, yielding aggregate values for
all 30 GL-10D pairs and 25 of the 30 GL-20D pairs.
\par
\endgroup
\end{table}

\FloatBarrier
\subsection{Component allocation and weight estimation}
\label{app:mixture-allocation}

To isolate cases in which both posterior components remain relevant after
observing $x$, we stratify the 150 non-BCI pairs by their reference component
weights. Both components carry at least 5\% posterior mass in 18 pairs:
three Two Moons, four OUP, two SLCP, eight GL-10D, and one GL-20D pair.
In the remaining 132 pairs, one component carries more than 95\% of the
posterior mass. Only four pairs have at least 20\% mass in both components.
These subsets are descriptive diagnostics within the existing benchmark, not separate evaluation sets.

On this 18-pair subset, \pathspace{} has lower mean C2ST and MMTV
than \direct{} under both output configurations
(Table~\ref{tab:mixture-subset}). \direct{} has higher mean C2ST than PriorGuide
under both output configurations ($0.591$ and $0.595$ versus $0.568$ and
$0.549$). At $(100,0)$, this reversal is driven by four Gaussian Linear pairs
with large errors, and \direct{} is worse in only 4 of the 18 pairs.
\pathspace{} is more stable but still falls short of Spectra-Ref:
its mean C2ST is $0.529$ versus $0.519$, and its MMTV is higher on all 18
pairs, averaging $0.072$ versus $0.049$. This remaining gap shows that practical mixture accuracy is 
still sensitive to component-weight estimation.

\paragraph{BCI component allocation.}
The remaining 30 BCI pairs are treated separately because their reference
weights are finite-sample estimates from the SMC reference
(Appendix~\ref{app:references}). None has at least 5\% posterior mass in both
components, and the largest minority weight is $7\times10^{-4}$. \direct{} and
\pathspace{} select the reference-dominant component in all 90
pair--backbone combinations, and their practical C2ST and MMTV agree to three
decimal places. This extreme-allocation regime therefore says 
little about how accurately the very small minority weights are estimated.

\begin{table}[!htbp]
\centering
\small
\caption{
\textbf{Mixtures with substantial mass in both posterior components.}
Mean (\textcolor[gray]{0.48}{standard deviation}) across the 18 pairs for
which each component carries at least 5\% reference posterior mass.
Weight estimates are fixed across output configurations. Spectra-Ref uses reference
component weights and is shown only as a diagnostic at $(N,L)=(100,0)$.
}
\label{tab:mixture-subset}
\begin{tabular*}{\linewidth}{@{\extracolsep{\fill}}lrrrr@{}}
\toprule
 & \multicolumn{2}{c}{$(100,0)$} & \multicolumn{2}{c}{$(25,8)$} \\
\cmidrule(lr){2-3}\cmidrule(l){4-5}
Method & C2ST & MMTV & C2ST & MMTV \\
\midrule
PriorGuide & \benchval{0.568}{0.043} & \benchval{0.112}{0.038} & \benchval{0.549}{0.038} & \benchval{0.100}{0.044} \\
Spectra-Direct & \benchval{0.591}{0.123} & \benchval{0.158}{0.171} & \benchval{0.595}{0.122} & \benchval{0.161}{0.170} \\
Spectra-PS & \benchval{0.529}{0.022} & \benchval{0.072}{0.023} & \benchval{0.534}{0.026} & \benchval{0.074}{0.024} \\
Spectra-Ref & \benchval{0.519}{0.019} & \benchval{0.049}{0.012} & -- & -- \\
\bottomrule
\end{tabular*}

\end{table}

\begin{table}[!htbp]
\centering
\small
\caption{
\textbf{Accuracy as the number of prior components increases on Two Moons.}
Mean (\textcolor[gray]{0.48}{standard deviation}) across three observations
at $(N,L)=(25,8)$ after averaging three sampling seeds within each
backbone. \pathspace{} uses one backbone, while Spectra-Ref and PriorGuide use three backbones. All methods are evaluated against the support-matched
reference.
}
\label{tab:k-composition-accuracy}
\begin{tabular*}{\linewidth}{@{\extracolsep{\fill}}lrrrrrr@{}}
\toprule
 & \multicolumn{3}{c}{C2ST} & \multicolumn{3}{c}{MMTV} \\
\cmidrule(lr){2-4}\cmidrule(l){5-7}
$K$ & \pathspace{} & Spectra-Ref & PriorGuide & \pathspace{} & Spectra-Ref & PriorGuide \\
\midrule
2 & \benchval{0.517}{0.008} & \benchval{0.518}{0.009} & \benchval{0.573}{0.062} & \benchval{0.064}{0.016} & \benchval{0.070}{0.011} & \benchval{0.211}{0.152} \\
4 & \benchval{0.521}{0.013} & \benchval{0.517}{0.008} & \benchval{0.586}{0.049} & \benchval{0.074}{0.024} & \benchval{0.075}{0.011} & \benchval{0.260}{0.134} \\
8 & \benchval{0.520}{0.010} & \benchval{0.519}{0.009} & \benchval{0.555}{0.049} & \benchval{0.062}{0.004} & \benchval{0.069}{0.004} & \benchval{0.178}{0.141} \\
16 & \benchval{0.507}{0.001} & \benchval{0.509}{0.003} & \benchval{0.545}{0.038} & \benchval{0.063}{0.011} & \benchval{0.070}{0.002} & \benchval{0.170}{0.115} \\
\bottomrule
\end{tabular*}

\end{table}

\FloatBarrier
\subsection{Composition beyond two components}
\label{app:k-composition}

We next evaluate mixtures with more than two components. On Two Moons,
the target priors are equal-weight mixtures of
$K\in\{2,4,8,16\}$ Gaussian components with the strong-shift width. Their
centers are evenly spaced on a circle around the mild-prior center, with
radius one quarter of the training-box side, and each smaller component set
is contained in the larger ones. We evaluate accuracy up to $K=16$, beyond
which adjacent components are separated by less than one component standard
deviation and overlap heavily.

Sampling uses $(N,L)=(25,8)$ with three sampling seeds on three observations.
The support-matched quadrature reference also provides the reference
component weights. \pathspace{} uses the weight-estimation budget of the
two-component benchmark (Appendix~\ref{app:sampling-settings}) and one
backbone. Spectra-Ref and PriorGuide average over three backbones.

Both Spectra variants remain accurate as $K$ increases. 
Conditioning reduces the effective number of components relative to the 
nominal \(K\), but multiple components retain substantial 
posterior mass for the larger mixtures (Table~\ref{tab:k-composition-weights}). At $K=16$, the effective number of
components ranges from $2.5$ to $4.5$, and five components carry more than
5\% posterior mass for every observation. Path-space estimates closely track
the reference weights, with a maximum absolute error of
$1.6\times10^{-2}$ across all $K$, components, and observations. 
With only three observations, we treat this as a composition study 
rather than a method comparison.

\begin{table}[!htbp]
\centering
\small
\caption{
\textbf{Component allocation and path-space weight accuracy.}
For each of the three observations, the table reports the effective number
of components $1/\sum_k\alpha_k^2$ and the number carrying at least 5\% of
the posterior mass. The final column gives the largest absolute difference
between path-space and reference component weights across all components and
observations.
}
\label{tab:k-composition-weights}
\begin{tabular*}{\linewidth}{@{\extracolsep{\fill}}lcccc@{}}
\toprule
 & \multicolumn{2}{c}{Reference weights} & \multicolumn{2}{c}{Path-space estimate} \\
\cmidrule(lr){2-3}\cmidrule(l){4-5}
$K$ & Effective components & Components with at least 5\% & Effective components & Largest weight error \\
\midrule
2 & 1.00 / 1.02 / 1.00 & 1 / 1 / 1 & 1.00 / 1.02 / 1.00 & $3.4\times10^{-4}$ \\
4 & 1.19 / 1.36 / 1.12 & 2 / 2 / 2 & 1.19 / 1.35 / 1.12 & $2.2\times10^{-3}$ \\
8 & 1.21 / 3.09 / 2.18 & 2 / 3 / 2 & 1.21 / 3.07 / 2.18 & $1.6\times10^{-2}$ \\
16 & 2.50 / 4.54 / 4.10 & 5 / 5 / 5 & 2.50 / 4.52 / 4.15 & $8.4\times10^{-3}$ \\
\bottomrule
\end{tabular*}

\end{table}
\FloatBarrier

\section{Computational cost}
\label{app:compute-details}

This appendix reports single-factor online sampling time, the first-use cost
of mixture adaptation, and scaling with the number of prior components.

The timing experiments use the same Spectra implementation as the accuracy
experiments. For PriorGuide, we report both the benchmarked implementation
and the vector--Jacobian-product (VJP) implementation described in
Appendix~\ref{app:priorguide-variants}. The latter evaluates the same guidance
update without explicitly forming the full denoiser Jacobian and serves as
the strengthened timing baseline. Across the 22 timed configurations,
PriorGuide (VJP) is $2.2$--$3.2\times$ slower than Spectra, compared with
$3.5$--$19.7\times$ for the benchmarked implementation. These timings are
specific to the implementations and hardware used here.

\paragraph{Timing protocol.}
Unless stated otherwise, online sampling times are medians of five
synchronized repeats for 1,000 posterior samples on a single A100.
Compilation and input preparation are excluded from all timed repeats.
Method-specific setup costs are reported separately when first-use cost is
considered in Appendix~\ref{app:mixture-cost}. Each timing uses one
observation and one backbone per task.

\subsection{Single-factor accuracy and online cost}
\label{app:k1-cost}

Figure~\ref{fig:timing-five-tasks} reports online sampling time on Two Moons,
SLCP, BCI, GL-10D, and GL-20D at the sampler configurations used in the main benchmark. Both
the benchmarked and VJP implementations of PriorGuide are shown.

\begin{figure}[!htb]
\centering
\includegraphics[width=\linewidth]{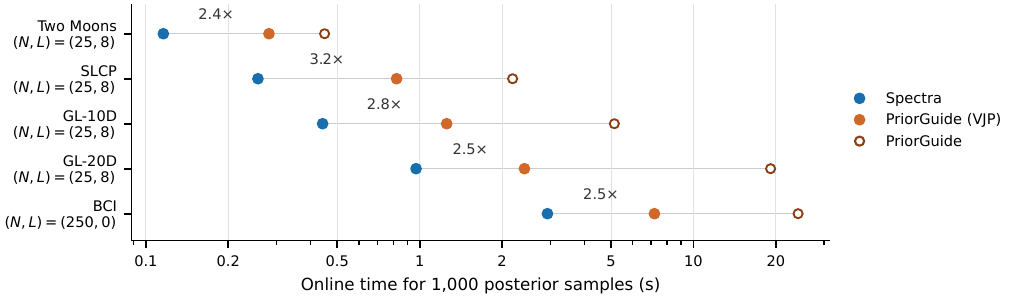}
\caption{
\textbf{Online sampling time across the five timed tasks.}
Each task uses the sampler configuration of the main benchmark and the timing
protocol above. Labels show the runtime ratio of PriorGuide (VJP) to Spectra.
}
\label{fig:timing-five-tasks}
\end{figure}

Figure~\ref{fig:accuracy-compute-fullcov} shows the MMTV--time trade-off and
adds \pgfullcov{} to the methods in Figure~\ref{fig:accuracy-compute}. The strong-shift development study uses
three observations, one backbone, and three sampling seeds, and evaluates
$(N,L)\in\{(25,0),(25,8),(50,2),(100,0),(100,2),(250,0)\}$.
Accuracy is averaged over seeds and then over the three observations, with
observation-level standard errors. These are the development observations
used for configuration selection. Table~\ref{tab:k1-heldout} reports the
held-out evaluation. Online time is measured on one of the three observations
using the same backbone.

\begin{figure}[!htb]
\centering
\includegraphics[width=\linewidth]{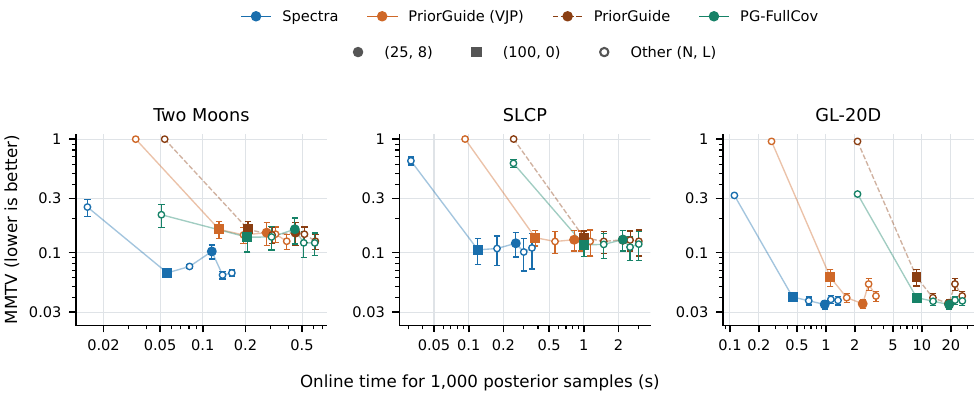}
\caption{
\textbf{Single-factor MMTV versus online sampling time.}
The figure adds \pgfullcov{} to the methods shown in
Figure~\ref{fig:accuracy-compute}. Filled circles mark $(25,8)$, squares mark
$(100,0)$, and hollow circles mark the remaining sampler configurations.
Error bars are observation-level standard errors. Both axes are logarithmic.
}
\label{fig:accuracy-compute-fullcov}
\end{figure}

\FloatBarrier
\subsection{Multi-component first-use cost}
\label{app:mixture-cost}

Figure~\ref{fig:mixture-cost-fullcov} compares the cost of the first 1,000
mixture posterior samples for PriorGuide, \pgfullcov{}, \direct{}, and \pathspace{}.
Each task uses two prespecified prior--observation pairs, one
backbone, and $(N,L)=(100,0)$. Stage costs are summarized by the median
across the two pairs. Each stage uses five timed repeats, except path-space
weight estimation, which uses three. Compilation and input preparation are
excluded.

\begin{figure}[htbp]
\centering
\includegraphics[width=\linewidth]{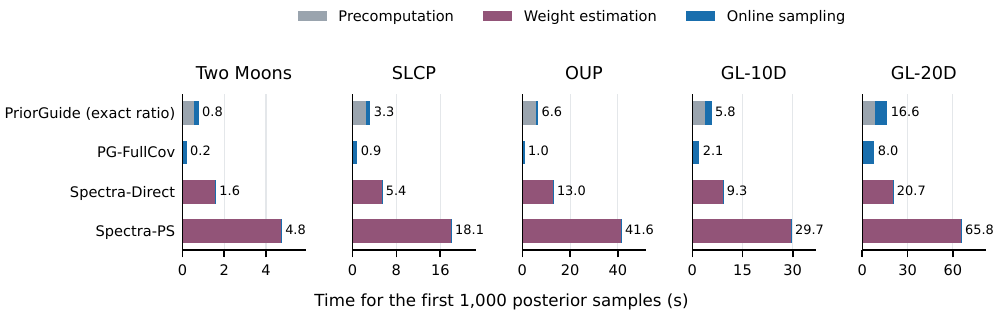}
\caption{
\textbf{Cost of the first 1,000 mixture posterior samples.}
Stacks separate precomputation, component-weight estimation, and online
sampling for PriorGuide, \pgfullcov{}, \direct{}, and \pathspace{}. Labels give
the total displayed cost. Panels use different linear time scales.
}
\label{fig:mixture-cost-fullcov}
\end{figure}

\direct{} and \pathspace{} share the same inexpensive component sampler,
but weight estimation can dominate the first adaptation. On SLCP, for
example, \pathspace{} takes $0.11$ seconds to draw 1,000 samples once
the weights are available, but $18.09$ seconds including weight estimation,
compared with $3.28$ seconds for PriorGuide including precomputation. Replacing PriorGuide's online stage with the VJP implementation lowers its
first-use cost further, but \pathspace{} remains the most expensive
first-use method on all five tasks.

Once estimated, the normalizers can be reused for the same observation and
component set when only the prior mixture coefficients change. These
measurements cover a single adaptation and therefore do not quantify the
amortized cost over a sequence of prior changes.

\FloatBarrier
\subsection{Scaling with the number of components}
\label{app:k-scaling}

Table~\ref{tab:k-scaling} measures online sampling time on Two Moons as the
number of prior components increases from $K=1$ to $K=64$. The target priors
are equal-weight Gaussian mixtures with the strong-shift component width.
Their centers lie on a common circle, and each smaller component set is
contained in the larger ones. For $K=1$, the single component is placed at
the center of the circle. Timing uses $(N,L)=(25,8)$ and the protocol above.
Component selection and PriorGuide precomputation are outside the timed
region.

\begin{table}[!htbp]
\centering
\small
\caption{
\textbf{Online sampling time as the number of prior components increases on
Two Moons.}
Median seconds for 1,000 posterior samples at $(N,L)=(25,8)$ under the timing
protocol of Appendix~\ref{app:compute-details}.
}
\label{tab:k-scaling}
\begin{tabular*}{\linewidth}{@{\extracolsep{\fill}}lrrrrrrr@{}}
\toprule
$K$ & 1 & 2 & 4 & 8 & 16 & 32 & 64 \\
\midrule
Spectra & 0.1166 & 0.1165 & 0.1165 & 0.1165 & 0.1165 & 0.1165 & 0.1164 \\
PriorGuide & 0.4464 & 0.4440 & 0.4442 & 0.4446 & 0.4456 & 0.4466 & 0.4482 \\
\bottomrule
\end{tabular*}

\end{table}

Spectra's online sampling time varies by less than $0.2\%$ from $K=1$ to
$K=64$, because each trajectory follows only the transported score of its
selected component. PriorGuide combines all $K$ components at every guidance
step, but its time changes by less than $1\%$ on this two-dimensional task,
where network evaluation and the score Jacobian dominate the per-step cost.
The PriorGuide-to-Spectra time ratio therefore remains between $3.81$ and
$3.85$.

Table~\ref{tab:k-scaling} uses the benchmarked PriorGuide implementation,
rather than the VJP implementation used for the single-factor speed
comparisons in Section~\ref{sec:compute-exp}.
The separate path-space weight-estimation stage runs proposal trajectories
for every component, so its cost grows linearly with $K$.

\section{Adaptation beyond the exact single-factor class}
\label{app:component-approx-results}

The main benchmarks use prior changes whose factors are exactly transportable.
This appendix applies the positive-mixture approximation of
Appendix~\ref{app:component-approx} to two Gaussian targets outside that class,
an anisotropy sweep and a rotated correlated prior, both on Two Moons.

\subsection{Controlled anisotropy boundary}
\label{app:anisotropy-boundary}

We start from the narrowest isotropic target in
Figure~\ref{fig:severity-sweep}, whose marginal standard deviation
$s=0.2/\sqrt3$ is one fifth of the training-prior value, and make it
anisotropic while keeping its center and determinant fixed,
\[
\Sigma_{\gamma,+}=\operatorname{diag}\!\left(\frac{s^2}{\gamma},\,s^2\gamma\right),
\qquad
\Sigma_{\gamma,-}=\operatorname{diag}\!\left(s^2\gamma,\,\frac{s^2}{\gamma}\right),
\qquad
\gamma\in\{1,2,4\}.
\]
The condition number is $\gamma^2$. At $\gamma=1$ the target is a single exact
factor. For $\gamma>1$ we use the construction of
Appendix~\ref{app:component-approx} with $\lambda=0.9\,\lambda_{\min}(\Sigma)$
and $h=6$, which gives $K=52$ factors at $\gamma=2$ and $K=104$ at $\gamma=4$.
The ten observations, three backbones, and samplers are those of
Figure~\ref{fig:severity-sweep}, with $(N,L)=(25,8)$ for PriorGuide and Spectra.

The approximation error grows with $\gamma$. On the training support, the prior
TV distance to the exact target is $0.00149$ at $\gamma=2$ and $0.03312$ at
$\gamma=4$. The mean posterior TV distance is $0.00306$ and $0.00356$ for the
two $\gamma=2$ orientations and $0.02360$ and $0.02889$ for $\gamma=4$, and the
reference weights spread over $3.46$ to $5.49$ effective components. This error
is small relative to the differences between methods. Scoring the Spectra-Ref samples against
the approximating target instead of the exact one changes mean C2ST by at most
$0.0015$ and mean MMTV by at most $0.0070$.

Component-weight estimation is the less reliable step. On nine of the ten observations,
Spectra-PS and Spectra-Ref differ in C2ST by at most $0.003$ in every setting.
On the remaining observation, path-space estimation places nearly all posterior
mass on components with negligible reference weight in the $+$ orientation, and
C2ST worsens by $0.417$ at $\gamma=2$ and $0.405$ at $\gamma=4$. In the $-$
orientation, the same observation degrades less, by $0.059$ and $0.018$. The
error in
the estimated normalizers may come from the learned finite base chain, from
Monte Carlo error in the estimator, or from both, and our diagnostics do not
separate them.

PriorGuide, although given the exact anisotropic ratio, is worse than
Spectra-Ref in both metrics on every observation in all four anisotropic
settings, and worse than Spectra-PS on at least nine of the ten. SIR is worse
than Spectra-Ref in C2ST on every observation but has the lower MMTV on five to
eight of the ten, as on Two Moons in Table~\ref{tab:k1-results}. Because the
target posterior changes with $\gamma$, values should not be compared across
rows.

\begin{table}[!htb]
\centering
\small
\caption{\textbf{Anisotropy sweep on Two Moons.}
Mean (standard deviation) over ten observations, scored against support-matched
references for the exact target. Each observation's value averages three
backbones, and for PriorGuide and Spectra three sampling seeds per backbone.
Sampling uses $(N,L)=(25,8)$, except $(100,0)$ for SIR. At $\gamma=1$, $K=1$ and
Spectra-Ref and Spectra-PS coincide.}
\label{tab:anisotropy-boundary}
\begin{tabular*}{\linewidth}{@{\extracolsep{\fill}}ccrrrrr@{}}
\toprule
$\gamma$ & Orient. & $K$ & SIR & PriorGuide & Spectra-Ref & Spectra-PS \\
\midrule
\multicolumn{7}{@{}l}{\textit{C2ST}} \\
1 & iso & 1 & \benchval{0.613}{0.069} & \benchval{0.621}{0.055} & \benchval{0.548}{0.016} & \benchval{0.548}{0.016} \\
2 & $+$ & 52 & \benchval{0.589}{0.090} & \benchval{0.621}{0.092} & \benchval{0.528}{0.013} & \benchval{0.570}{0.143}\rlap{\textsuperscript{$\dagger$}} \\
2 & $-$ & 52 & \benchval{0.578}{0.044} & \benchval{0.604}{0.063} & \benchval{0.528}{0.012} & \benchval{0.535}{0.027} \\
4 & $+$ & 104 & \benchval{0.599}{0.070} & \benchval{0.609}{0.084} & \benchval{0.531}{0.023} & \benchval{0.571}{0.149}\rlap{\textsuperscript{$\dagger$}} \\
4 & $-$ & 104 & \benchval{0.590}{0.073} & \benchval{0.600}{0.084} & \benchval{0.527}{0.018} & \benchval{0.530}{0.021} \\
\midrule
\multicolumn{7}{@{}l}{\textit{MMTV}} \\
1 & iso & 1 & \benchval{0.121}{0.121} & \benchval{0.310}{0.118} & \benchval{0.127}{0.021} & \benchval{0.127}{0.021} \\
2 & $+$ & 52 & \benchval{0.123}{0.171} & \benchval{0.311}{0.191} & \benchval{0.083}{0.016} & \benchval{0.166}{0.272}\rlap{\textsuperscript{$\dagger$}} \\
2 & $-$ & 52 & \benchval{0.116}{0.095} & \benchval{0.273}{0.140} & \benchval{0.085}{0.014} & \benchval{0.099}{0.050} \\
4 & $+$ & 104 & \benchval{0.088}{0.076} & \benchval{0.272}{0.180} & \benchval{0.072}{0.019} & \benchval{0.159}{0.287}\rlap{\textsuperscript{$\dagger$}} \\
4 & $-$ & 104 & \benchval{0.096}{0.065} & \benchval{0.251}{0.171} & \benchval{0.073}{0.011} & \benchval{0.079}{0.017} \\
\bottomrule
\end{tabular*}

\par\vspace{4pt}
\raggedright\footnotesize
\textsuperscript{$\dagger$}\ Driven by one observation, on which Spectra-PS is
worse than Spectra-Ref in C2ST by $0.417$ ($\gamma=2$) and $0.405$ ($\gamma=4$).
On the other nine, the two differ by at most $0.003$.
\end{table}

\subsection{Rotated correlated Gaussian example}
\label{app:rotated-correlated-example}

We apply the quadrature construction of
Appendix~\ref{app:component-approx} to a correlated Two Moons target prior
$\mathcal N(\mu,\Sigma)$ with $\mu=(0.3,-0.3)$ and $\Sigma$ obtained by
rotating $\operatorname{diag}(0.04,\,0.04/6)$ by $45^\circ$.
We use $\lambda=0.9\,\lambda_{\min}(\Sigma)=0.006$ and $h=6$, which give
Gauss--Hermite orders 16 and 4 and hence $K=64$ factors.
Posterior sampling uses $(N,L)=(100,0)$ with three seeds of 1,000 samples.

\begin{table}[!htb]
\centering
\small
\caption{
\textbf{Correlated-prior approximation on Two Moons.}
Approx.\ TV is the posterior total variation distance induced by the
$K=64$ finite quadrature, evaluated on a $4096\times4096$ grid.
C2ST is mean $\pm$ standard deviation over three sampling seeds. Approx. TV quantifies posterior error from the finite prior approximation, 
while C2ST evaluates the sampled posterior against the exact target.
PriorGuide and \pgfullcov{} use the exact ratio; Spectra-Ref and Spectra-PS
use the quadrature approximation with reference and path-space component
weights, respectively.
}
\label{tab:correlated-prior}
\begin{tabular}{lccccc}
\toprule
Observation & Approx.\ TV & PriorGuide & \pgfullcov{} & Spectra-Ref & Spectra-PS \\
\midrule
1 & 0.00306 & $0.535\pm0.003$ & $0.527\pm0.007$ & $0.507\pm0.006$ & $0.513\pm0.006$ \\
2 & 0.00456 & $0.519\pm0.011$ & $0.519\pm0.013$ & $0.518\pm0.008$ & $0.518\pm0.006$ \\
3 & 0.00285 & $0.549\pm0.008$ & $0.526\pm0.009$ & $0.530\pm0.009$ & $0.529\pm0.010$ \\
\bottomrule
\end{tabular}

\end{table}

The same quadrature is used for all three observations.
Spectra-Ref computes component weights from the quadrature posterior on the
grid, while Spectra-PS estimates all 64 weights using the path-space estimator
and the budget of Appendix~\ref{app:sampling-settings}.
Across three observations and three backbones, the estimated weights are
within $0.026$--$0.092$ in $L_1$ distance of the reference weights.
Replacing reference weights by path-space estimates changes C2ST by
$+0.0013\pm0.0020$ and MMTV by $+0.0035\pm0.0018$ after averaging over
backbones. This is a low-dimensional demonstration rather than a
scalable construction for correlated priors. Accurate path-space weights here
do not establish reliable weight estimation at large $K$, and in
Appendix~\ref{app:anisotropy-boundary} one observation fails at $K=52$ and
$K=104$.

\end{document}